\theoremstyle{plain}
\newtheorem{theorem}{Theorem}[section]
\newtheorem{lemma}[theorem]{Lemma}
\newtheorem{corollary}[theorem]{Corollary}
\theoremstyle{definition}
\newtheorem{definition}[theorem]{Definition}
\newtheorem{assumption}[theorem]{Assumption}
\newtheorem{property}[theorem]{Property}
\theoremstyle{remark}
\icmltitlerunning{Enhancing Parallelism in Decentralized SCO}
\newcommand{\real}{\mathbb{R}^d}
\newcommand{\E}{\mathbb{E}}
\newcommand{\reals}{\mathbb{R}}
\DeclareMathOperator*{\argmin}{arg\,min}
\DeclarePairedDelimiter\abs{\lvert}{\rvert}
\newcommand{\norm}[1]{\left\Vert#1\right\Vert}
\newcommand{\brac}[1]{\left(#1\right)}
\newcommand{\sbrac}[1]{\left[#1\right]}
\newcommand{\cbrac}[1]{\left\{#1\right\}}
\newcommand{\calig}[1]{\mathcal{#1}}
\renewcommand{\O}{\calig{O}}
\newcommand{\algname}{DAT-SGD}
\begin{document}

\twocolumn[
\icmltitle{Enhancing Parallelism in Decentralized Stochastic Convex Optimization}



\icmlsetsymbol{equal}{*}

\begin{icmlauthorlist}
\icmlauthor{Ofri Eisen}{equal,tech}
\icmlauthor{Ron Dorfman}{equal,tech}
\icmlauthor{Kfir Y. Levy}{tech}
\end{icmlauthorlist}

\icmlaffiliation{tech}{Department of Electrical and Computer Engineering, Technion, Haifa, Israel}

\icmlcorrespondingauthor{Ofri Eisen}{ofri.eisen@campus.technion.ac.il}

\icmlkeywords{Machine Learning, ICML}

\vskip 0.3in
]



\printAffiliationsAndNotice{\icmlEqualContribution} 

\begin{abstract}
Decentralized learning has emerged as a powerful approach for handling large datasets across multiple machines in a communication-efficient manner. However, such methods often face scalability limitations, as increasing the number of machines beyond a certain point negatively impacts convergence rates. In this work, we propose \emph{Decentralized Anytime SGD}, a novel decentralized learning algorithm that significantly extends the critical parallelism threshold, enabling the effective use of more machines without compromising performance. Within the stochastic convex optimization (SCO) framework, we establish a theoretical upper bound on parallelism that surpasses the current state-of-the-art, allowing larger networks to achieve favorable statistical guarantees and closing the gap with centralized learning in highly connected topologies. 
\end{abstract}

\section{Introduction}\label{sec:intro}


Distributed learning has become a key paradigm for large-scale machine learning (ML), where multiple machines train an ML model by utilizing their collective data and computational resources~\cite{verbraeken2020survey}. This approach enables efficient ML scaling by accelerating training through parallel computation. Distributed ML systems are especially useful when data is inherently distributed across multiple users, as in federated learning~\cite{mcmahan2017communication,kairouz2021advances}, where preserving local data privacy is a priority.

Distributed learning systems are broadly categorized into two primary architectures: \emph{centralized} and \emph{decentralized}. Centralized systems rely on a central parameter server (PS) to coordinate training by aggregating local computations and distributing global model updates~\cite{li2014communication}. While this design simplifies training management, it faces scalability challenges due to communication bottlenecks and is inherently vulnerable to a single point of failure since the server is essential to system operation.

In contrast, decentralized systems rely on direct interactions between neighboring machines, eliminating the need for a central PS~\cite{kempe2003gossip,boyd2006randomized,nedic2009distributed,lian2017can}. While this architecture introduces challenges in coordination and maintaining global model consistency, it offers significant advantages. By reducing the risk of a single point of failure and enhancing communication efficiency, decentralized systems provide greater flexibility and resilience~\cite{assran2019stochastic,kong2021consensus,yuan2021decentlam,li2022learning}. 

A central trade-off in distributed learning, both centralized and decentralized, lies in balancing parallelism with statistical efficiency. While increasing parallelism---by incorporating more machines---can accelerate training, it may degrade learning efficiency beyond a certain point. This issue is particularly acute in decentralized systems, where sparse communication topologies amplify performance degradation, limiting the effective utilization of additional machines.

Existing decentralized methods reveal a persistent gap between centralized and decentralized learning in their ability to scale with the number of machines~\cite{koloskova2020unified,koloskova2021improved,he2022byzantine}. Surprisingly, this gap is observed even in highly connected network topologies, where information between nodes propagates faster, effectively mimicking the behavior of centralized systems. This raises the question of whether decentralized methods are fundamentally constrained in fully exploiting increased parallelism, even under ideal communication conditions. 

In this work, we propose \emph{Decentralized Anytime SGD (\algname{})}, a novel and simple algorithm designed to enhance parallelism in decentralized systems. It relies on the Anytime SGD framework~\cite{cutkosky2019anytime}, where stochastic gradient descent (SGD) updates are performed using gradients evaluated at \emph{averaged} iterates. Each machine performs these updates locally and exchanges information with its neighbors. By relying on averaged query points that evolve more slowly than the iterates themselves, our approach effectively mitigates the bias caused by local model inconsistencies, also known as consensus distance.

We perform our analysis within the stochastic convex optimization (SCO) framework---a powerful framework which captures several classical problems like SVMs and linear regression, as well as serving as a testbed towards designing and analyzing ML algorithms.

For convex and smooth functions, our method achieves an error of $\epsilon$ over a decentralized network of $M$ machines in $T=\O(1/M\epsilon^2 + 1/\rho\epsilon)$ iterations, 
where $\rho\in(0, 1]$ is the \emph{spectral gap} of the network, a parameter that captures the connectivity of the network topology (see \cref{def:conn_factor} for a formal definition). This result implies that \textbf{(1)} for complete (or near-complete) topologies, where $\rho=\Omega(1)$, our algorithm recovers the convergence rate of centralized methods~\cite{dekel2012optimal}, thus filling the gap implied by prior methods; and \textbf{(2)} for general topologies, it improves upon the previously best-known rate of $\O(1/M\epsilon^2 + 1/\rho^{1/2}\epsilon^{3/2})$~\cite{koloskova2020unified}, thereby enabling larger networks while maintaining linear speed-up; see \Cref{tab:parallelism_bounds} for a comparison of parallelism bounds for common network topologies. In \Cref{sec:parallelism_bounds}, we elaborate on the computation of these parallelism bounds.

\begin{table}
\caption{
\textbf{Parallelism Bounds.} Comparison of parallelism bounds for decentralized SCO across different network topologies for Decentralized SGD  (D-SGD,~\citealp{koloskova2020unified}) and our method, expressed in terms of the total number of samples $N=M\cdot T$. For reference, the centralized case achieves a bound of $\O(\sqrt{N})$.
NC stands for near-complete graph. Higher parallelism bounds are better, allowing us to accelerate the learning process without degrading performance.
}
\label{tab:parallelism_bounds}
\vskip 0.15in
\begin{center}
\begin{sc}
\begin{tabular}{lllll}
\toprule
Topology & $1/\rho$  & D-SGD  &  \algname{}    \\
\midrule
Ring  & $\O(M^2)$& $\mathcal{O}(N^{1/8})$& $\mathcal{O}(N^{1/6})$ \\
Torus & $\O(M)$& $\mathcal{O}(N^{1/6})$& $\mathcal{O}(N^{1/4})$\\
NC & $\approx 1$ & $\mathcal{O}(\rho^{1/2} N^{1/4})$& $\mathcal{O}(\rho\sqrt{N})$ \\
\bottomrule
\end{tabular}
\end{sc}
\end{center}
\vskip -0.1in
\end{table}

\subsection{Related Work}
Centralized learning systems are widely adopted in both industry and academia, thanks to their simplicity and the effectiveness of Minibatch SGD~\cite{dekel2012optimal}, which is the most widespread approach to training in such systems. While alternative centralized methods like local-SGD have been explored in recent years \cite{mcmahan2017communication, kairouz2021advances}, Minibatch SGD still serves as a cornerstone in large-scale training due to its ease of implementation and strong empirical performance.

Despite the enduring popularity of centralized training, decentralized systems present an appealing alternative by eliminating the need for a central parameter server and enabling direct communication among neighboring machines \cite{tsitsiklis1984problems,kempe2003gossip,boyd2006randomized,nedic2009distributed, lian2017can}. This design not only mitigates the risk of a single point of failure but also offers greater scalability and flexibility \cite{assran2019stochastic, kong2021consensus, yuan2021decentlam}. A common mechanism underpinning these benefits is the \emph{gossip averaging} protocol \cite{XIAO2004,boyd2006randomized}, a low-overhead communication method that efficiently disseminates model updates across the network.

In the context of stochastic optimization, decentralized training approaches have garnered considerable interest in recent years. The stochastic convex case was explored by \citet{lian2017can}, who analyzed the Decentralized-SGD (D-SGD) method. Later, \citet{koloskova2020unified} extended this approach within a unified framework that accounts for changing topologies and randomized gossip communication. While these aforementioned methods degrade in the face of data heterogeneity, \citet{koloskova2021improved} completely eliminated this issue by employing an approach called gradient-tracking~\cite{di2016next,nedic2017achieving,pu2021distributed}. See \Cref{tab:convergence_rates} for a comparison of convergence rates.
  
Curiously, all existing methods present convergence bounds that imply a clear gap between the centralized and decentralized cases, even for highly connected topologies. Concretely, in centralized systems one may use up to $\O(\sqrt{N})$ machines to accelerate the learning process without degrading generalization \cite{dekel2012optimal}, where $N$ is the total number of samples used in the process. Conversely, in decentralized systems the best known parallelization limit is $M\leq \O((\rho \sqrt{N})^{1/2})$, which is substantially smaller. See \Cref{tab:parallelism_bounds} for bounds regarding different topologies.

Decentralized training has also been extensively studied in the context of stochastic non-convex optimization.
This was done in \cite{lian2017can,koloskova2020unified,koloskova2021improved}, which have derived analogous bounds to the ones they achieved in the convex setting.
Additionally, \citet{kong2021consensus} empirically studied the interplay between the local model parameters and gossip communication. More recently, \citet{he2022byzantine} improved convergence rates for non-convex problems by replacing local SGD updates with local momentum updates, establishing a parallelization limit of $M\leq\O((\rho \sqrt{N})^{2/3})$. However, this remains below the centralized bound of $M\leq \O(\sqrt{N})$, which applies in the non-convex setting.
 
To further improve the parallelism bound, we build on a recent technique that involves gradually shifting query points in SCO~\cite{cutkosky2019anytime}. This approach has been leveraged in recent works to improve asynchronous~\cite{aviv2021asynchronous} and local~\cite{dahan2024slowcalsgd} training methods, as well as to design universal accelerated algorithms~\cite{kavis2019unixgrad,ene2021adaptive,antonakopoulos2022undergrad}.

\begin{table}[t]
\caption{
\textbf{Convergence to $\epsilon$-accuracy.} Comparison of convergence rates with prior work in convex decentralized learning. Here, $\sigma$ denotes noise variance, $\zeta$ represents data heterogeneity, and $\rho$ is the spectral gap. We note that the leading term $\sigma^2/M\epsilon^2$ is statistically optimal~\cite{nemirovski1983problem}.
}
\label{tab:convergence_rates}
\vskip 0.15in
\begin{center}
\begin{sc}
\begin{tabular}{lc}
\toprule
Reference & Convergence Rate  \\
\midrule
{\small \citet{koloskova2020unified}}  & $\mathcal{O}\left(\frac{\sigma^2}{M\epsilon^2} + \textcolor{purple}{\frac{\sigma\sqrt{\rho} + \zeta}{\rho\epsilon^{3/2}}} + \frac{1}{\rho\epsilon}\right)$ \\
{\small \citet{koloskova2021improved}} & $\mathcal{O}\left(\frac{\sigma^2}{M\epsilon^2} + \textcolor{purple}{\frac{\sigma}{\sqrt{\rho}\epsilon^{3/2}}} + \frac{1}{\rho\epsilon}\right)$ \\
This Work & $\mathcal{O}\brac{\frac{\sigma^2}{M\epsilon^2} + \textcolor{purple}{\frac{\sqrt{\sigma} + \sqrt{\zeta}}{\rho\epsilon}} + \frac{1}{\epsilon}}$ \\
\bottomrule
\end{tabular}
\end{sc}
\end{center}
\vskip -0.1in
\end{table}

\section{Problem Setup and Background}
In this section, we formally define our problem setup and provide an overview of relevant background. 

We study decentralized SCO problems, where the objective is to minimize a convex loss function $f:\reals^d\to\reals$ defined as follows:
\begin{equation}\label{eq:sco_obj}
    f(x)\coloneqq \frac{1}{M}\sum_{i=1}^{M}{\cbrac{f_i(x)\coloneqq \E_{z\sim\calig{D}_i}[f_i(x,z)]}}\; ,
\end{equation}
where $f_i:\reals^{d}\to\reals$ represents the loss function on machine $i\in\cbrac{1,\ldots,M}$, $\mathcal{D}_{i}$ is the local data distribution, and $f_{i}(\cdot, z)$ is the instance-dependent loss for a given sample $z$ on machine $i$. 

We focus on first-order optimization methods, where in round $t$, each machine $i$ samples $z_{t}^i\sim\mathcal{D}_i$, computes a stochastic gradient estimate $\nabla f_i(x_{t}^{i},z_{t}^{i})$, and uses it to update its local iterate $x_t^i$. The method ultimately produces an output $x_{\text{output}}$, and its performance is measured by the expected \emph{excess loss}, defined as:
\begin{equation*}
    \E[\text{Excess-Loss}] \coloneqq \E[f(x_{\text{output}})] - f(x^{*})\; ,
\end{equation*}
where $x^*\in\argmin_{x\in\reals^{d}}{f(x)}$ is a minimizer of $f$, and the expectation is taken over the randomness of the samples.

Throughout, we make the following standard assumptions.

\begin{assumption}[Smoothness]\label{assump:smooth}
    Each function $f_i$ is $L$-smooth, i.e., for any $x,y\in\real$ we have:
    \[
        f_{i}(y)\leq f_{i}(x) + \nabla f_{i}(x)^T(y-x) + \frac{L}{2}\norm{y-x}^2\; .
    \]
\end{assumption}

\begin{assumption}[Bounded noise variance]\label{assump:noise}
    There exists a constant $\sigma^2$ such that for all $x\in\reals^{d}$ and $i\in\sbrac{M}$:  
    \[
        \mathbb{E}_{z\sim \mathcal{D}_{i}}{\norm{\nabla f_{i}(x,z) -\nabla f_{i}(x)}^2}\leq \sigma^2\; .
    \]
\end{assumption}

\begin{assumption}[Bounded heterogeneity]\label{assump:heterogeneity}
    There exists a constant $\zeta^2$ such that for all $x\in\reals^{d}$:  
    \[
        \frac{1}{M}\sum_{i=1}^{M}{\lVert\nabla f_i(x) - \nabla f(x)\rVert^2}\leq \zeta^2\; .
    \]
\end{assumption}

We note that our bounded heterogeneity assumption has been widely adopted in prior works~\cite{lian2017can,tang2018communication,li2019communication,he2022byzantine}. Notably, some studies have explored milder assumptions, requiring bounded heterogeneity only at the optimum~\cite{koloskova2020unified} or at the origin~\cite{lu2021optimal}, or even removing this assumption entirely by leveraging techniques such as gradient tracking~\cite{nedic2017achieving,koloskova2021improved}. In this work, we adopt the standard bounded heterogeneity assumption for simplicity, leaving potential relaxations to future work.




\paragraph{Decentralized Training.}
We consider systems (networks) where nodes are connected through a communication graph, and each node can efficiently communicate with its neighbors.
In such decentralized systems, nodes represent machines, and edges correspond to communication links.

\textit{Gossip averaging} is a robust communication protocol that efficiently and reliably propagates information across the decentralized network without relying on a central coordinator~\cite{XIAO2004,boyd2006randomized,nedic2009distributed,shi2014linear,koloskova2020unified}. 

Concretely, suppose each machine $i$ in the network holds a vector $x_0^i\in\reals^d$, and our goal is to efficiently and robustly communicate the \emph{consensus} $\bar{x}\coloneqq \frac{1}{M}\sum_{i}x_0^i$ to all of the machines in the network. Gossip averaging is a sequential process towards doing so, where each machine computes a sequence $\{x_t^i\}_{t}$ such that $x_t^i$ eventually converges to $\bar{x}$ for all $i$. At every gossip step, each node $i$ updates its vector as a weighted average of its neighbors:
\[
    x_{t+1}^{i} = \sum_{j=1}^{M}{P_{ij}x_t^{j}}\; ,
\]
where $P_{ij}>0$ if nodes $i$ and $j$ are connected, and $P\in[0,1]^{M\times M}$  is a given gossip matrix which satisfies the following properties.
\begin{definition}[Gossip matrix]\label{def:gossip_mat}
A \emph{gossip matrix} $P\in[0,1]^{M\times M}$ is a symmetric, doubly stochastic matrix. That is, $P$ satisfies $P=P^T$ as well as $P\boldsymbol{1}=\boldsymbol{1}$ and $\boldsymbol{1}^T P=\boldsymbol{1}^T$.
\end{definition}
These properties imply that \textbf{(i)} for any node $i$ then $\{P_{ij}\}_{j\in[M]}$ is a weight vector (i.e.,~has positive entries which sum to $1$), and that \textbf{(ii)} the consensus is preserved i.e. that $\forall t$, $\bar{x} = \frac{1}{M}\sum_{i=1}^M x_0^i =  \frac{1}{M}\sum_{i=1}^M x_t^i$.

As is standard in the literature we shall assume that the matrix $P$ is given and satisfies the above properties.

Note that gossip averaging can be written in matrix form as:
\[
    X_{t+1} = X_t P\; ,
\]
where $X_t \coloneqq \begin{pmatrix}
    x_t^1 & x_t^2 & \cdots & x_t^{M}
\end{pmatrix}\in \mathbb{R}^{d \times M}$.

A key characteristic of the gossip matrix is the \emph{spectral gap}, reflecting the connectivity degree of the network topology.
\begin{definition}[Spectral gap]\label{def:conn_factor}
For a gossip matrix $P$ with eigenvalues $1=\lambda_1 > \abs{\lambda_2} \geq \cdots \geq \abs{\lambda_M}$, the \emph{spectral gap} is defined as:
\[
    \rho\coloneqq 1 - \abs{\lambda_2}\in (0, 1].
\]
\end{definition}


An important property of gossip averaging is its contractive effect on the distance from the average of local vectors.
\begin{property}\label{prop:gossip-contraction}
    Let $X\in\mathbb{R}^{d\times M}$ and define $\bar{X}\coloneqq X\frac{1}{M}\boldsymbol{1}\boldsymbol{1}^{\top}$. Let $P\in\reals^{M\times{M}}$ be a gossip matrix with spectral gap $\rho$. Then,
    \begin{align}
        \|XP-\bar{X} \|_F^2
        &\leq (1-\rho)\| X-\bar{X}\|_F^2\; .
    \end{align}
\end{property}
The above, along with the consensus-preserving property, ensures that iterative gossip averaging converges to the consensus (i.e., the average of the local vectors) exponentially fast. Notably, in complete graph topologies where $\rho = 1$, consensus is achieved in a single gossip step, effectively mimicking centralized behavior.



\subsection{Parallelism Bounds}\label{sec:parallelism_bounds}
It has been well established in the literature that the convergence rate of SGD for convex functions is given by $\O(\sigma/\sqrt{T} + 1/T)$, where $T$ is the number of iterations~\cite{nemirovski2009robust,agarwal2012information}. Assuming each iteration processes a single sample, we have $N=T$ total samples.

In distributed systems, computation is distributed across $M$ machines, therefore accelerating the training process. With $M$ machines, we process $N$ samples in just $T=N/M$ iterations. However, as we mentioned, this parallelization introduces a trade-off, as increasing $M$ can eventually reduce efficiency, as we show next.

For centralized systems, \citet{dekel2012optimal} derived the following convergence rate for Mini-batch (Parallel) SGD:
\begin{align*}
\E[\text{Excess-Loss}] &= \O\!\brac{\frac{\sigma}{\sqrt{MT}} + \frac{1}{T}} =\O\!\brac{\frac{\sigma}{\sqrt{N}} + \frac{M}{N}}.
\end{align*}
Thus, as long as $M\leq\O(\sqrt{N})$ (ignoring $\sigma$), we can increase $M$ without degrading performance. 

For decentralized systems, \citet{koloskova2020unified} analyzed the Decentralized SGD (D-SGD) algorithm, obtaining the error rate:
\begin{align}\label{eq:decent-error-rate}
    &\O\brac{\frac{\sigma}{\sqrt{MT}} + \frac{\sigma^{2/3}\rho^{1/3} + \zeta^{2/3}}{\rho^{2/3}T^{2/3}} + \frac{1}{\rho T}} = \nonumber \\ &\quad \O\brac{\frac{\sigma}{\sqrt{N}} + \frac{\brac{\sigma^{2/3}\rho^{1/3} + \zeta^{2/3}}M^{2/3}}{\rho^{2/3}N^{2/3}} + \frac{M}{\rho N}} \; .
\end{align}
Interestingly, the parallelism bound differs based on data heterogeneity. In homogeneous setups ($\zeta=0$), we can scale up to $M\leq\O(\rho^{1/2}N^{1/4})$, whereas in heterogeneous settings ($\zeta>0$), the limit tightens to $M\leq\O(\rho N^{1/4})$, which can be significantly lower for sparse topologies. It is worth mentioning that \citet{koloskova2021improved} enhanced this bound, by incorporating a gradient tracking mechanism, eliminating the dependence on $\zeta$ altogether.

Note that even for dense graph topologies, where $\rho\approx 1$, the rate in \eqref{eq:decent-error-rate} does not match the centralized case. As we establish in \Cref{sec:dat-sgd}, our approach bridges this gap, achieving an error rate of $\O(\sigma/\sqrt{MT} + (\sqrt{\sigma}+\sqrt{\zeta})/\rho T + 1/T)$. This improvement increases the parallelism bound to $M\leq\O(\rho\sqrt{N})$. \Cref{tab:convergence_rates} provides a comparison of convergence rates with prior work.

\section{The Pitfall of Decentralized SGD}\label{sec:DSGD}


Next, we discuss the D-SGD algorithm and outline its key limitation. Given the structure of decentralized topologies, the D-SGD algorithm naturally extends SGD. As described in \cref{algo:1}, each machine alternates between updating its local model weights using the standard SGD rule and exchanging information with its neighbors via gossip averaging. However, unlike in centralized Minibatch SGD, where all machines compute gradients at the same query points, decentralized training lacks immediate synchronization. Since gossip-based communication does not instantly enforce consensus (unless $\rho=1$), local models diverge, and each machine evolves independently.

This lack of synchronization introduces a fundamental challenge: \textbf{gradient estimates at each node become biased with respect to the global consensus}, affecting convergence analysis. As shown in \cite{koloskova2019decentralized}, this bias is closely tied to the consensus distance $\Xi_{t}\coloneqq\frac{1}{M}\sum_{i=1}^{M}{\lVert w_t^i - \bar{w}_t\rVert^{2}}$, where $\bar{w}_t=\frac{1}{M}\sum_{i=1}^{M}{w_t^i}$ represents the global average of local models. The consensus distance quantifies the discrepancy between local models, and limited communication makes it challenging to minimize. This issue can be framed as a competition between the rate of consensus achievement and the rate at which local parameters evolve. If all machines synchronized instantly, D-SGD would closely resemble centralized Minibatch SGD. However, for any $\rho<1$, perfect synchronization remains unattainable, necessitating to control and bound this bias.

To gain intuition, consider the D-SGD update rule under the simplifying assumption that the gradient variance across nodes is bounded: $\Psi_{t}\coloneqq\frac{1}{M}\sum_{i=1}^{M}{\lVert g_t^i - \bar{g}_t \rVert^2}\leq G^2$ for all $t\geq 1$, where $\bar{g}_t = \frac{1}{M}\sum_{i=1}^{M}{g_t^i}$ is the global average of local stochastic gradients. 

Using standard gossip averaging analysis and the SGD update rule, we can obtain the following recursion for the consensus distance:
\begin{align}\label{DSGD consensus distance}
   \Xi_{t+1} &= \frac{1}{M}\sum_{i=1}^M  \|w^i_{t+1} - \bar{w}_{t+1}\|^2 \nonumber \\
   &\leq \frac{1-\rho}{M}\sum_{i=1}^M\|w^i_{t+\frac{1}{2}} - \bar{w}_{t+\frac{1}{2}}\|^2 \nonumber \\
   &= \frac{1-\rho}{M} \sum_{i=1}^M\| (w_{t}-\eta g^i_t)-(\bar{w}_{t}-\eta \bar{g}_{t})\|^2   \nonumber \\
   &\leq \brac{1-\frac{\rho}{2}}\Xi_{t}
   +C\cdot \frac{G^2 \eta^2}{\rho}\; ,
\end{align}
for some constant $C>0$. Solving this recursion yields the bound $\Xi_{t}\leq \O(\eta^2/\rho^2)$ for all $t\geq 1$. In practice, \citet{koloskova2020unified} conducted a more refined analysis, obtaining an improved bound of order $\O(\eta^2/\rho)$. This leads to an overall error rate of $\O(1/\eta T + \eta + \eta^2/\rho)$. Optimizing the learning rate by balancing these terms suggests choosing $\eta\lesssim (\rho/T)^{1/3}$, which in turn results in the $\O(1/\rho^{1/3}T^{2/3})$ term in \eqref{eq:decent-error-rate}, ultimately limiting the parallelism bound. Ideally, if we could tighten the bound on the consensus distance, it would lead to a direct improvement in this term. In the next section, we achieve this by introducing gradually shifting anchor points.





\begin{algorithm}[t]
\caption{Decentralized SGD (D-SGD)}
\begin{algorithmic}
      \STATE \textbf{Input:} Initial point $w_1\in\real$, learning rate $\eta$, gossip matrix $P$, number of rounds $T$.
    \STATE Initialize $w_1^{i}=w_1$ for all $i\in\sbrac{M}$
    \FOR{$t = 1,\ldots,T$}
                 
                  \FOR{$i\in\sbrac{M}$ \textcolor{blue}{in parallel}}
                 
                   \STATE Sample $z_t^i\sim\mathcal{D}$ and set $g_t^i = \nabla f_{i}(\textcolor{red}{w_t^i}, z_t^i)$
                    \STATE $w_{t+\frac{1}{2}}^i= w_t^i-\eta g_t^i$
                    \STATE $w^i_{t+1} = \sum_{j=1}^M w^j_{t+\frac{1}{2}}P_{ij}$
                \ENDFOR
    \ENDFOR
\end{algorithmic}
\label{algo:1}
\end{algorithm}

\section{\algname{}: Decentralized Anytime SGD}\label{sec:dat-sgd}
In this section, we introduce \algname{}. We begin in \Cref{subsec:anytime-sgd} by discussing the Anytime SGD method, which serves as the foundation of our approach. Then, in \Cref{subsec:decent-anytime-sgd}, we extend this framework to the decentralized setting. Finally, in \Cref{subsec:decent-anytime-intuition}, we provide an intuitive analysis and a proof sketch of the convergence statement, highlighting how Anytime SGD enables to mitigate the consensus distance---an idea we further elaborate on in \Cref{subsec:bias-bounding}.




\subsection{Anytime SGD}\label{subsec:anytime-sgd}
For this part, consider a single-machine setup. The widely studied SGD method generates a sequence of iterates $\cbrac{w_{t}}_{t}$ using the update rule $w_{t+1} = w_{t} -\eta g_{t}$, where $g_{t}$ is a stochastic gradient estimate computed at the current iterate $w_{t}$. In contrast, the Anytime SGD framework~\cite{cutkosky2019anytime} computes stochastic gradients at different query points---specifically, the weighted average of past iterates. 

Formally, given a sequence of non-negative weights $\cbrac{\alpha_{t}}_{t}$, Anytime SGD generates two sequences, $\cbrac{w_t}_{t}$ and $\cbrac{x_t}_{t}$, according to the update rules:
\begin{align}
    w_{t+1} &= w_{t} - \eta\alpha_t g_t, \label{eq:anytime-sgd-w-main} \\  x_{t+1} &= \frac{\alpha_{1:t-1}}{\alpha_{1:t}}x_{t} + \frac{\alpha_{t}}{\alpha_{1:t}}w_{t+1},\;  \label{eq:anytime-sgd-x-main}
\end{align}
where $x_{1}=w_{1}$, $\alpha_{1:t}\coloneqq\sum_{\tau=1}^{t}{\alpha_{\tau}}$ for all $t>0$, and $\alpha_{1:0}\coloneqq 0$. Here, $g_{t}$ is an estimate of the gradient at the \emph{weighted average} $x_{t}$.

\citet{cutkosky2019anytime} introduced the Anytime framework to ensure that the query points converge to the optimal solution---unlike standard SGD, where iterates may not. It was shown that Anytime SGD achieves the same convergence rates as SGD for convex functions. However, the averaged query points $\cbrac{x_{t}}_{t}$ exhibit greater stability, changing slower than the iterates themselves. As we show next, this approach also enables establishing a last-iterate convergence guarantee.

\begin{algorithm}[t]
\caption{Decentralized Anytime SGD (\algname{})}\label{alg:decent_anytime}
\begin{algorithmic}[1]
    \STATE \textbf{Input:} Initial iterate $w_1$, learning rate $\eta$, gossip matrix $P$, number of rounds $T$, non-negative weights $\cbrac{\alpha_t}_{t\geq 1}$.
    \STATE Initialize $w_1^i=x_1^i=w_1$ for all $i\in\sbrac{M}$
    \FOR{$t=1,\ldots,T$}
            \FOR{$i\in\sbrac{M}$ \textcolor{blue}{in parallel}}

            \STATE Sample $z_t^i \sim \mathcal{D}_{i}$ and set $g_t^i = \nabla f_{i}(\textcolor{red}{x_t^i}, z_t^i)$
            \STATE \textcolor{gray}{\underline{Local updates}} 
            \vspace{0.05cm}
            \STATE $w_{t+\frac{1}{2}}^i = w_t^i - \eta \alpha_t g_t^i$
            \STATE $x_{t+\frac{1}{2}}^i = \frac{\alpha_{1:t-1}}{\alpha_{1:t}} x_t^i + \frac{\alpha_{t}}{\alpha_{1:t}} w_{t+\frac{1}{2}}^i$
               
            \STATE \textcolor{gray}{\underline{Gossip communication}} 
            \vspace{0.05cm}
             \STATE $ w^i_{t+1} = \sum_{j=1}^M w^j_{t+\frac{1}{2}}P_{ij}$
   
             \STATE $ x^i_{t+1} = \sum_{j=1}^M x^j_{t+\frac{1}{2}}P_{ij}$
            \ENDFOR
    
    \ENDFOR 
\end{algorithmic}
\end{algorithm}

\subsection{Extension to the Decentralized Setup}\label{subsec:decent-anytime-sgd}
Our approach extends Anytime SGD to the decentralized setting, as outlined in \cref{alg:decent_anytime}. In round $t$, each machine performs the updates given in \Cref{eq:anytime-sgd-w-main,eq:anytime-sgd-x-main} locally and, similar to D-SGD, shares both its model weights, $w_{t}^{i}$, and query points, $x_{t}^{i}$, through gossip averaging.




The convergence of \Cref{alg:decent_anytime} is established in \Cref{thm:main-result}, with the full proof deferred to \Cref{app:main_proof}.  
\newpage
\begin{restatable}{theorem}{mainthm}\label{thm:main-result}
Under Assumptions~\ref{assump:smooth}-\ref{assump:heterogeneity}, consider~\Cref{alg:decent_anytime} with linear weights $\alpha_t=t$ and a learning rate given by
\[
    \eta = \min\cbrac{\frac{1}{24LT}, \frac{\rho^2}{K}, \frac{D_1\sqrt{M}}{\sqrt{3}\sigma T^{3/2}}, \sqrt{\frac{D_1}{2K\tilde{\sigma}}}\frac{\rho}{T}}\; ,
\]
where $D_1^2\coloneqq\norm{w_1-x^*}^2$, $K^2\coloneqq 5120L^2$, and $\tilde{\sigma}^2\coloneqq 2\sigma^2 + \zeta^2$. Then, for all $T\geq 1$, the following bound holds:
\begin{align*} 
    \E[f(\bar{x}_T) - f^*] \leq \mathcal{O}\brac{\frac{\sigma D_1}{\sqrt{MT}} + \frac{D_1^{3/2}\sqrt{L\tilde{\sigma}}}{\rho T} + \frac{LD_1^2}{T}}\; , 
\end{align*}
where $\bar{x}_{T}\coloneqq \frac{1}{M}\sum_{i=1}^{M}{x_T^i}$.
\end{restatable}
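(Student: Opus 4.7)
The plan is to reduce the analysis to a standard Anytime SGD argument applied to the network-averaged iterates $\bar{w}_t=\tfrac{1}{M}\sum_i w_t^i$ and $\bar{x}_t=\tfrac{1}{M}\sum_i x_t^i$, and then to carefully control the bias this reduction introduces. Because $P$ is doubly stochastic, both gossip steps preserve the mean, so the averages obey exactly the Anytime SGD recursion $\bar{w}_{t+1}=\bar{w}_t-\eta\alpha_t\bar{g}_t$ and $\bar{x}_{t+1}=\tfrac{\alpha_{1:t-1}}{\alpha_{1:t}}\bar{x}_t+\tfrac{\alpha_t}{\alpha_{1:t}}\bar{w}_{t+1}$, driven by the averaged oracle $\bar{g}_t=\tfrac{1}{M}\sum_i\nabla f_i(x_t^i,z_t^i)$. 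Since each $g_t^i$ is evaluated at the local query point $x_t^i$ rather than at $\bar{x}_t$, this $\bar{g}_t$ is a biased estimator of $\nabla f(\bar{x}_t)$, and the entire proof boils down to controlling that bias.

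Next I would instantiate the Anytime SGD analysis of \citet{cutkosky2019anytime} for convex, $L$-smooth $f$ on the averaged sequence. With linear weights $\alpha_t=t$ (so $\alpha_{1:T}\sim T^2$) and the constraint $\eta\le 1/(24LT)$ absorbing the smoothness slack, this yields an inequality of the form
\[
\alpha_{1:T}\,\E[f(\bar{x}_T)-f^*]\;\lesssim\;\frac{D_1^2}{\eta}+\eta\sum_{t=1}^T\alpha_t^2\,\E\bigl\|\bar{g}_t-\nabla f(\bar{x}_t)\bigr\|^2.
\]
The gradient-error norm splits into a variance piece of order $\sigma^2/M$ (because $M$ independent samples are averaged), which after dividing by $\alpha_{1:T}$ and balancing against $D_1^2/(\eta\alpha_{1:T})$ produces the $\sigma D_1/\sqrt{MT}$ term; and a bias piece $\bigl\|\tfrac{1}{M}\sum_i\nabla f_i(x_t^i)-\nabla f(\bar{x}_t)\bigr\|^2\le L^2\,\Xi_t^x$ obtained via $L$-smoothness and Jensen, where $\Xi_t^x\coloneqq\tfrac{1}{M}\sum_i\|x_t^i-\bar{x}_t\|^2$ is the consensus distance on the query points.

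The main obstacle is controlling $\Xi_t^x$, and this is exactly where the Anytime construction pays off. Using \Cref{prop:gossip-contraction} together with the convex-combination structure of the $x$-update, $\Xi^x$ satisfies a recursion roughly of the shape
\[
\Xi_{t+1}^x\;\le\;(1-\rho)\Bigl[\tfrac{\alpha_{1:t-1}}{\alpha_{1:t}}\,\Xi_t^x+\tfrac{\alpha_t}{\alpha_{1:t}}\,\Xi_{t+\tfrac{1}{2}}^w\Bigr],
\]
while $\Xi^w$ obeys its own SGD-style gossip recursion whose driving term, via \Cref{assump:noise,assump:heterogeneity} and smoothness, is bounded by $O(\sigma^2+\zeta^2+L^2\Xi_t^x)$. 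The decisive point is that with linear weights the feed-in coefficient $\alpha_t/\alpha_{1:t}=2/(t+1)$ decays in $t$, so the large per-step movement of $w_t$---whose effective step is $\eta\alpha_t=\eta t$---enters $\Xi^x$ through a heavily damped channel. Unrolling this coupled recursion together with an induction that uses the cap $\eta\le\rho^2/K$ to absorb the $L^2\Xi_t^x$ feedback into the $(1-\rho)$ contraction yields a bound on $\sum_{t=1}^T\alpha_t^2\,\E\Xi_t^x$ tight enough for the bias contribution to scale as $1/T$, rather than the $T^{-2/3}$ rate produced by a direct D-SGD-style analysis.

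Finally, plugging this consensus bound into the Anytime error decomposition gives an expression of the shape $D_1^2/(\eta T^2)+\eta T\sigma^2/M+(\text{bias term in }\eta,L,\tilde\sigma,\rho)+LD_1^2/T$, and I would balance the tunable pieces via the three-way minimum defining $\eta$: the first and the variance term balance at $\eta\sim D_1\sqrt{M}/(\sigma T^{3/2})$ and give $\sigma D_1/\sqrt{MT}$; the first and the bias term balance at $\eta\sim\rho\sqrt{D_1/(L\tilde\sigma)}/T$ and give $D_1^{3/2}\sqrt{L\tilde\sigma}/(\rho T)$; and the smoothness ceiling $\eta\le 1/(24LT)$ leaves the residual $LD_1^2/T$. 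The hard part throughout is the joint $\Xi^x$--$\Xi^w$ analysis: the two consensus distances feed each other through the gradient-disagreement term, and because the effective step $\eta\alpha_t=\eta t$ grows with $t$, one must show that the telescoping weights of the $x$-average genuinely damp this growth rather than amplify it.
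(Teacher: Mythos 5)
Your proposal follows essentially the same route as the paper: reduce to the network-averaged sequences (which obey the exact Anytime SGD recursion by double stochasticity), bound the bias via $L$-smoothness by the query-point consensus distance, control that distance through the coupled $\Gamma$--$\Xi$ gossip recursion using the damping $\alpha_t/\alpha_{1:t}=2/(t+1)$ against the growing step $\eta\alpha_t$, close the recursion by induction under $\eta\le\rho^2/K$, and balance via the same three-way learning-rate minimum. The only sketch-level imprecision is that your displayed Anytime inequality attaches a factor $\eta$ rather than $\eta T$ to the squared-bias sum (the paper's bound carries $4\eta T\cdot B_T$, arising from the $\|w_\tau-x^*\|^2$ control), but your final balancing is consistent with the correct $\eta T$ scaling, so the argument goes through as in the paper.
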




Let $N= MT$ be the total number of samples after $T$ iterations. From \Cref{thm:main-result}, ignoring dependencies on $L,\,D_1,\,\sigma$ and $\zeta$, we obtain the convergence rate $\O(1/\sqrt{N} + M/\rho N)$. This implies that the asymptotic upper bound on parallelism is $\O(\rho\sqrt{N})$. Beyond this, the second term of $M/\rho N$ dominates, leading to a decline in learning efficiency when increasing the number of machines. This bound represents a significant improvement over the prior $\O(\rho^{1/2}N^{1/4})$ parallelism limit~\cite{koloskova2020unified,koloskova2021improved}.

In addition, we can establish the convergence of the local iterates, as follows. The proof is provided in \Cref{subapp:per-machine-bound}.
\begin{corollary}\label{cor:per_machine_bound}
    Under the same assumptions and parameter selections as in \Cref{thm:main-result}, the local iterate at each machine $i\in\cbrac{1,\ldots,M}$ satisfies:
    \begin{align*}
        \E[f(x_T^i) - f^*] &\leq \\ &\hspace{-2.6em}\mathcal{O}\brac{\frac{\sigma D_1}{\sqrt{MT}} + \frac{D_1^{3/2}\sqrt{L\tilde{\sigma}}}{\rho T} + \frac{LD_1^2}{T} + \frac{M\tilde{\sigma}D_1}{\rho^2 T^2}}\; .
    \end{align*}
    The additional last term does not affect the established upper bound on parallelism, which remains $M\leq\O(\rho\sqrt{N})$.
\end{corollary}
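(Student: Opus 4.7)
My plan is to reduce the per-machine bound to the averaged-iterate bound of \Cref{thm:main-result} via smoothness, paying the price of the query-point consensus distance. Concretely, write
\[
f(x_T^i) - f^* = \bigl[f(x_T^i) - f(\bar x_T)\bigr] + \bigl[f(\bar x_T) - f^*\bigr],
\]
where the second bracket is already controlled by \Cref{thm:main-result}. For the first bracket I would invoke $L$-smoothness of $f$ (which follows from \Cref{assump:smooth}), giving
\[
f(x_T^i) - f(\bar x_T) \le \nabla f(\bar x_T)^\top (x_T^i - \bar x_T) + \tfrac{L}{2}\|x_T^i - \bar x_T\|^2.
\]

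Next I would handle the linear term using Cauchy--Schwarz together with the standard self-bounding property for convex $L$-smooth functions, $\|\nabla f(\bar x_T)\|^2 \le 2L(f(\bar x_T)-f^*)$. Taking expectations and applying AM--GM,
\[
\E\bigl[\nabla f(\bar x_T)^\top (x_T^i - \bar x_T)\bigr] \le \sqrt{2L\,\E[f(\bar x_T)-f^*]}\,\sqrt{\E\|x_T^i - \bar x_T\|^2},
\]
so the whole problem reduces to controlling the per-machine query-point consensus distance $\E\|x_T^i - \bar x_T\|^2$ at the final iterate.

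The crux of the argument is therefore a consensus-distance bound for the $\{x_t^i\}$ sequence. I would leverage the contraction \Cref{prop:gossip-contraction} on the matrix $X_t = (x_t^1,\dots,x_t^M)$, together with the recursion implied by lines 8 and 12 of \Cref{alg:decent_anytime}. Unrolling the Anytime update $x_{t+1}^i = \tfrac{\alpha_{1:t-1}}{\alpha_{1:t}}x_t^i + \tfrac{\alpha_t}{\alpha_{1:t}}w_{t+1}^i$ shows that $x_T^i - \bar x_T$ is a convex combination (in $t$) of the per-node fluctuations of $w_t^i - \bar w_t$, which in turn are driven by $\eta\alpha_t(g_t^i - \bar g_t)$. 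Using \Cref{assump:noise,assump:heterogeneity} to bound the gradient-variance across nodes by $\tilde\sigma^2 = 2\sigma^2+\zeta^2$ and the contraction with factor $1-\rho$, I expect a bound of the form $\E\|x_T^i - \bar x_T\|^2 \lesssim \eta^2 \alpha_T^2 \tilde\sigma^2/\rho^2$. I anticipate that such a bound is already established inside the proof of \Cref{thm:main-result} (since the same quantity governs the bias there), so it can be invoked rather than rederived.

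Substituting this consensus bound together with \Cref{thm:main-result} into the smoothness decomposition, plugging in the chosen step size $\eta$ and weights $\alpha_t=t$ (so $\alpha_T^2 \asymp T^2$), and bounding the resulting cross-term by AM--GM against the existing $\O(\sigma D_1/\sqrt{MT})$ and $\O(LD_1^2/T)$ terms, yields an additive correction on the order of $LC_T + \sqrt{L\, B_T\, C_T}$, which collapses to $\O(M\tilde\sigma D_1/(\rho^2 T^2))$. The main obstacle is tracking constants so that this correction matches the stated form and verifying that, because $M/(\rho^2 T^2)$ is of lower order than $M/(\rho T) = 1/(\rho\sqrt{N})\cdot\sqrt{M}$ in the regime $M\le\O(\rho\sqrt{N})$, the parallelism bound is unchanged.
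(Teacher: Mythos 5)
Your overall architecture is exactly the paper's: decompose $f(x_T^i)-f^* = [f(x_T^i)-f(\bar x_T)] + [f(\bar x_T)-f^*]$, control the first bracket via $L$-smoothness plus the self-bounding inequality $\|\nabla f(\bar x_T)\|^2\le 2L(f(\bar x_T)-f^*)$ (your Cauchy--Schwarz/AM--GM treatment of the cross term is equivalent to the paper's Young's inequality with $\theta=L$), and reduce everything to the final-iterate consensus distance of the query points, which is then bounded by the machinery already inside the proof of \Cref{thm:main-result}.

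There is, however, a genuine quantitative gap in the one place where you commit to a formula: you anticipate $\E\|x_T^i-\bar x_T\|^2 \lesssim \eta^2\alpha_T^2\tilde\sigma^2/\rho^2$. That bound is what you would get for the \emph{iterates} $w_t^i$, whose per-step increment is $\eta\alpha_t(g_t^i-\bar g_t)$; but the query points are updated with the additional Anytime factor $\delta_t=\alpha_t/\alpha_{1:t}\approx 2/t$, so the effective increment is $\eta\alpha_t\delta_t\le 2\eta$ and the $\alpha_T^2\asymp T^2$ factor must \emph{not} appear. The paper's \Cref{lem:Gamma-explicit} gives $\Gamma_T\le 2560\tilde\sigma^2\eta^2/\rho^4$ (note also $\rho^{-4}$, not $\rho^{-2}$), and then $\E\|x_T^i-\bar x_T\|^2\le M\Gamma_T$, which with $\eta^2\le D_1\rho^2/(2K\tilde\sigma T^2)$ yields the stated $\O(M\tilde\sigma D_1/(\rho^2 T^2))$ correction. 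If you instead plug in your anticipated $\eta^2\alpha_T^2\tilde\sigma^2/\rho^2$, the correction term becomes $L\eta^2T^2\tilde\sigma^2/\rho^2=\Theta(D_1\tilde\sigma)$ --- a non-decaying constant --- and the corollary fails. So the claim "which collapses to $\O(M\tilde\sigma D_1/(\rho^2 T^2))$" does not follow from the bound you wrote; it follows only from the sharper consensus bound that exploits the slowly-moving query points, which is the central mechanism of the whole paper. Since you explicitly plan to invoke the bound established in the main proof rather than your heuristic one, the proof can be completed, but as stated your unrolling loses the key $\delta_t$ factor and you should also make explicit where the factor of $M$ enters (from $\E\|x_T^i-\bar x_T\|^2\le\sum_j\E\|x_T^j-\bar x_T\|^2=M\Gamma_T$).
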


\paragraph{Transient Iteration Complexity. } A complementary metric to the parallelism bound is the transient iteration complexity, which quantifies how many iterations are needed before the convergence rate matches that of centralized SGD, i.e., when the $\sigma/\sqrt{MT}$ term dominates~\cite{pu2021sharp}. From~\Cref{thm:main-result}, it follows that the transient iteration complexity of our method is $\O(M/\rho^2)$, representing an improvement over D-SGD by a factor of $M^2$. For instance, this complexity corresponds to $\O(M^5)$ for the ring topology and $\tilde{\O}(M)$ for the exponential graph~\cite{ying2021exponential}. 

\subsection{Proof Sketch}\label{subsec:decent-anytime-intuition}
To complement the analysis in~\cref{sec:DSGD}, we first provide an intuitive analysis of the consensus distance. As formally shown in \Cref{app:anytime,app:main_proof}, the bias in Anytime SGD is related to the average consensus distance between the query points, defined as
\[
    \Gamma_{t}\coloneqq\frac{1}{M}\sum_{i=1}^{M}{\lVert x_t^i - \bar{x}_t\rVert^{2}}, \enskip\text{where}\quad \bar{x}_t=\frac{1}{M}\sum_{i=1}^{M}{x_t^i}\; .
\]
Since the query points evolve more gradually than the iterates, we can derive a tighter bound on $\Gamma_{t}$. To illustrate this, we consider a simplified setting with uniform weights $\alpha_{t}=1$ for all $t$ and assume $\Psi_{t}\leq G^2$ for all $t\geq 1$, as in \Cref{sec:DSGD}. Using gossip averaging for the query points and the Anytime averaging, we can show that:
\begin{align}
    \Gamma_{t+1} &= \frac{1}{M}\sum_{i=1}^{M}{\lVert x_{t+1}^i - \bar{x}_{t+1}\rVert^{2}} \nonumber \\ &\hspace{-2.1em}\leq \frac{1-\rho}{M}\sum_{i=1}^M\|x^i_{t+\frac{1}{2}} - \bar{x}_{t+\frac{1}{2}}\|^2 \nonumber \\ &\hspace{-2.1em}= \frac{1-\rho}{M}\sum_{i=1}^{M}{\Big\lVert\!\brac{1-\frac{1}{t}}(x_t^i - \bar{x}_t) + \frac{1}{t}(w_{t+\frac{1}{2}}^i - \bar{w}_{t+\frac{1}{2}})\Big\rVert^2} \nonumber \\ &\hspace{-2.1em}\leq \brac{1-\frac{\rho}{2}}\Gamma_{t} + \frac{C}{\rho t^2} \brac{\Xi_{t} + G^2\eta^2}\; .
\end{align}
for some constant $C>0$. Recall that $\Xi_{t}$ satisfies its own recursion, as derived in \Cref{DSGD consensus distance}. Using the refined bound $\Xi_{t}\leq\O(G^2\eta^2/\rho)$, we obtain:
\begin{align}
    \Gamma_{t+1} &\lesssim \brac{1 - \frac{\rho}{2}}\Gamma_{t} + \frac{C}{\rho t^2}\brac{\frac{G^2\eta^2}{\rho} + G^2\eta^2} \nonumber \\ &\leq \brac{1 - \frac{\rho}{2}}\Gamma_{t} + 2C\cdot\frac{G^2\eta^2}{\rho^2 t^2}\; .
\end{align}
Summing over $t\in\sbrac{T}$ and rearranging terms, we can get the bound $\sum_{t=1}^{T}{\Gamma_{t}}\leq\O(\eta^2/\rho^3)$. Since the final error rate is proportional to the average consensus distance over time, $\frac{1}{T}\sum_{t=1}^{T}{\Gamma_{t}}$, the resulting error is of order $\O(1/\eta T + \eta + \eta^2/\rho^{3}T)$. Tuning the learning rate by setting $\eta\lesssim\rho$ yields the $\O(1/\rho T)$ term, which enables enhanced parallelism.

This simplified analysis provides intuition on how slowly changing query points improve the rate. Next, we present a more formal proof sketch capturing finer transition details.

\textit{Proof Sketch of \Cref{thm:main-result}. } As we show in \Cref{lem:xbar-is-weighted-avg}, the consensus query point sequence $\cbrac{\bar{x}_t}_{t\geq 1}$ is an $\cbrac{\alpha_{t}}_{t\geq 1}$-weighted average of the consensus iterates sequence $\cbrac{\bar{w}_t}_{t\geq 1}$. Moreover, the average iterates sequence evolves similarly to \Cref{eq:anytime-sgd-w-main}, following the update rule $\bar{w}_{t+1}=\bar{w}_t -\eta\alpha_{t}\bar{g}_{t}$. Thus, the consensus sequences $\cbrac{\bar{x}_{t}}_{t\geq 1}$ and $\cbrac{\bar{w}_t}_{t\geq 1}$ align with the structure of Anytime SGD, allowing us to leverage standard results applicable to Anytime SGD. Specifically, defining $\Delta_{t}\coloneqq\E[f(\bar{x}_t) - f^*]$, it follows for all $t\geq 1$ that (cf.~\Cref{lem:main-lemma-anytime}):
\begin{align}\label{eq:anytime-base-sketch}
    \alpha_{1:t}\Delta_{t} \leq \frac{D_1^2}{\eta} + \eta\sum_{\tau=1}^{T}{\alpha_{\tau}^2\E\lVert \bar{g}_{\tau}\rVert^2} + 4\eta T \cdot B_{T}\; ,
\end{align}
where $B_{T} \coloneqq \E\sbrac{\sum_{\tau=1}^{T}{\alpha_{\tau}^2 \lVert{\E\bar{g}_{\tau} - \nabla f(\bar{x}_{\tau})}\rVert^2}}$. Focusing on the middle term, we show using standard arguments that:
\[
    \E\lVert \bar{g}_{\tau}\rVert^{2} \leq \frac{3\sigma^2}{M} + 3\E\lVert \E\bar{g}_{\tau} - \nabla f(\bar{x}_{\tau})\rVert^2 + 3\E\lVert{\nabla f(\bar{x}_{\tau})}\rVert^2\; .
\]
Plugging this into \Cref{eq:anytime-base-sketch} allows us to derive the following bound:
\begin{align}\label{eq:anytime-sketch-with-bias}
    \alpha_{1:t}\Delta_{t} &\leq \frac{D_1^2}{\eta} + \frac{3\sigma^2\eta}{M}\sum_{\tau=1}^{T}{\alpha_{\tau}^2} + 8\eta T B_{T} \nonumber \\ &\qquad+ 3\eta\sum_{\tau=1}^{T}{\alpha_{\tau}^2\E\lVert \nabla f(\bar{x}_{\tau})\rVert^2}\; .
\end{align}
A key component in our proof is bounding $B_{T}$, which is essentially the sum of weighted squared biases. As we show in \Cref{subsec:bias-bounding}, this term is bounded as follows:
\[
    B_{T}\leq \frac{K^2\tilde{\sigma}^2\eta^2}{2\rho^4}\sum_{\tau=1}^{T}{\alpha_{\tau}^2}\; .
\]
Therefore, from \Cref{eq:anytime-sketch-with-bias}, we get:
\begin{align}\label{anytime-sketch-with-bias-bounded}
    \alpha_{1:t}\Delta_t &\leq \frac{D_1^2}{\eta} + \frac{3\sigma^2\eta}{M}\sum_{\tau=1}^{T}{\alpha_{\tau}^2} + \frac{4K^2\tilde{\sigma}^2\eta^{3} T}{\rho^{4}}\sum_{\tau=1}^{T}{\alpha_{\tau}^2} \nonumber \\ &\qquad+ 3\eta\sum_{\tau=1}^{T}{\alpha_{\tau}^2\E\lVert{\nabla f(\bar{x}_{\tau})\rVert}^2}\; .
\end{align}
For the last sum, the smoothness of $f$ implies that $\lVert{\nabla f(x)}\rVert^2\leq 2L(f(x) - f^*)$ for all $x\in\reals^{d}$, yielding:
\begin{align*}
    \sum_{\tau=1}^{T}{\alpha_{\tau}^2\E\lVert{\nabla f(\bar{x}_{\tau})\rVert}^2} &\leq 2L\sum_{\tau=1}^{T}{\alpha_{\tau}^2\Delta_{\tau}} \leq 4L\sum_{\tau=1}^{T}{\alpha_{1:\tau}\Delta_{\tau}}\; ,
\end{align*}
where the last inequality follows from $\alpha_{\tau}^2 = \tau^{2}\leq 2\alpha_{1:\tau}$. Substituting this bound into \eqref{anytime-sketch-with-bias-bounded} and recalling that $\eta\leq\frac{1}{24LT}$, we obtain:
\begin{align}\label{anytime-sketch-with-bias-bounded-final-form}
    \alpha_{1:t}\Delta_t &\leq \frac{D_1^2}{\eta} + \frac{3\sigma^2\eta}{M}\sum_{\tau=1}^{T}{\alpha_{\tau}^2} + \frac{4K^2\tilde{\sigma}^2\eta^{3} T}{\rho^{4}}\sum_{\tau=1}^{T}{\alpha_{\tau}^2} \nonumber \\ &\qquad+ \frac{1}{2T}\sum_{\tau=1}^{T}{\alpha_{1:\tau}\Delta_{\tau}}\; .
\end{align}
Observe that $a_{t}\coloneqq \alpha_{1:t}\Delta_{t}$ follows the structure $a_{t}\leq b + \frac{1}{2T}\sum_{\tau=1}^{T}{a_{\tau}}, \enskip\forall t\in\sbrac{T}$, which implies that $a_{t}\leq 2b$ for all $t\in\sbrac{T}$ (see \Cref{lem:self-bounding-sums}). In particular, for $t=T$, we get:
\begin{align*}
    \alpha_{1:T}\Delta_T \leq \frac{2D_1^2}{\eta} + \frac{6\sigma^2\eta}{M}\sum_{\tau=1}^{T}{\alpha_{\tau}^2} + \frac{8K^2\tilde{\sigma}^2\eta^{3} T}{\rho^{4}}\sum_{\tau=1}^{T}{\alpha_{\tau}^2}\; .
\end{align*}
Dividing by $\alpha_{1:T}$, noting that $\frac{\sum_{\tau=1}^{T}{\alpha_{\tau}^2}}{\alpha_{1:T}}\leq T$ for linear weights, and appropriately tuning the learning rate concludes the proof.\qed 

\subsection{Bounding the Bias}\label{subsec:bias-bounding}
A central part of our analysis is bounding the bias introduced by local model differences. The bias bound is formally stated in \Cref{lem:bias-bound}, and the complete proof can be found in \Cref{subapp:proof-bias-bound}. A brief proof sketch is provided below.
\begin{lemma}\label{lem:bias-bound}
    Consider the setting in \Cref{thm:main-result} and let $\bar{g}_{t}\coloneqq\frac{1}{M}\sum_{i=1}^{M}{g_t^i}$ and $\bar{x}_{t}\coloneqq\frac{1}{M}\sum_{i=1}^{M}{x_t^i}$ denote the average gradient and query point across machines at round $t$, respectively. Then, for linear weights $\alpha_{t}=t$, and a learning rate bounded as $\eta\leq \frac{\rho^2}{K}$, the following bound holds:
    \begin{align*}
        B_{T} &\coloneqq \E\sbrac{\sum_{\tau=1}^{T}{\alpha_{\tau}^2 \lVert{\E\bar{g}_{\tau} - \nabla f(\bar{x}_{\tau})}\rVert^2}} \leq \frac{K^2\tilde{\sigma}^2\eta^2}{2\rho^4}\sum_{\tau=1}^{T}{\alpha_{\tau}^2}.
    \end{align*}
    where we recall that $K^2\coloneqq 5120L^2$ and $\tilde{\sigma}^2\coloneqq2\sigma^2 + \zeta^2$. 
\end{lemma}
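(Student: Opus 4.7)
The plan is to reduce the bias bound to a bound on the query-point consensus distance $\Gamma_\tau \coloneqq \frac{1}{M}\sum_{i=1}^M \lVert x_\tau^i - \bar{x}_\tau \rVert^2$, and then to control $\sum_\tau \alpha_\tau^2 \E\Gamma_\tau$ via coupled recursions for $\Gamma_t$ and the iterate consensus distance $\Xi_t \coloneqq \frac{1}{M}\sum_{i=1}^M \lVert w_t^i - \bar{w}_t \rVert^2$. First, I would invoke $L$-smoothness (\Cref{assump:smooth}) to obtain $\lVert\E\bar{g}_\tau - \nabla f(\bar{x}_\tau)\rVert^2 \leq L^2 \Gamma_\tau$: since $\E\bar{g}_\tau = \frac{1}{M}\sum_i \nabla f_i(x_\tau^i)$ and $\nabla f(\bar{x}_\tau) = \frac{1}{M}\sum_i \nabla f_i(\bar{x}_\tau)$, Jensen's inequality combined with $\lVert \nabla f_i(x_\tau^i) - \nabla f_i(\bar{x}_\tau)\rVert \leq L \lVert x_\tau^i - \bar{x}_\tau\rVert$ yields the claim. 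It therefore suffices to prove $\E\sum_\tau \alpha_\tau^2 \Gamma_\tau \leq \frac{K^2 \tilde\sigma^2 \eta^2}{2L^2 \rho^4}\sum_\tau \alpha_\tau^2$.

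Next, I would derive a recursion for $\Gamma_t$. Applying gossip contraction (\Cref{prop:gossip-contraction}) to the query-point gossip step yields $\Gamma_{t+1} \leq (1-\rho)\cdot \frac{1}{M}\sum_i \lVert x_{t+1/2}^i - \bar{x}_{t+1/2}\rVert^2$. The Anytime averaging rule lets one write $x_{t+1/2}^i - \bar{x}_{t+1/2} = \theta_t(x_t^i - \bar{x}_t) + (1-\theta_t)(w_{t+1/2}^i - \bar{w}_{t+1/2})$ with $\theta_t = \alpha_{1:t-1}/\alpha_{1:t}$, and Young's inequality with parameter of order $\rho$ produces a recursion of the form $\Gamma_{t+1} \leq (1-\rho/2)\Gamma_t + C_1(1-\theta_t)^2 R_t/\rho$, where $R_t \coloneqq \frac{1}{M}\sum_i \lVert w_{t+1/2}^i - \bar{w}_{t+1/2}\rVert^2$. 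For the linear weights $\alpha_t=t$, one has the crucial damping $(1-\theta_t)^2 = 4/(t+1)^2$, and expanding $w_{t+1/2}^i = w_t^i - \eta\alpha_t g_t^i$ yields $R_t \leq 2\Xi_t + 2\eta^2\alpha_t^2 D_t$ with $D_t \coloneqq \frac{1}{M}\sum_i \lVert g_t^i - \bar{g}_t\rVert^2$. Taking expectation and invoking \Cref{assump:noise,assump:heterogeneity} together with smoothness gives the key dispersion bound $\E D_t \leq C_2(\tilde\sigma^2 + L^2\Gamma_t)$, where the $L^2\Gamma_t$ contribution captures the Lipschitz deviation of the $\nabla f_i$'s evaluated at $x_t^i$ versus $\bar{x}_t$.

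The same gossip/Young argument applied to the weight-gossip step produces an analogous recursion for the iterate consensus, $\Xi_{t+1} \leq (1-\rho/2)\Xi_t + C_3\eta^2\alpha_t^2(\tilde\sigma^2 + L^2\Gamma_t)/\rho$, so that $\Gamma$ and $\Xi$ are mutually coupled: $\Xi_t$ feeds back into $\Gamma_{t+1}$ through the Anytime combination, while $\Gamma_t$ feeds back into $\Xi_{t+1}$ through the gradient dispersion $D_t$. I would now sum both inequalities with the $\alpha_\tau^2$ weights. The main obstacle is precisely breaking this two-way coupling, which is done by leveraging the hypothesis $\eta \leq \rho^2/K$ with $K^2 = 5120L^2$: it forces $L^2\eta^2/\rho^2 \ll 1$, letting one move the $L^2\Gamma_t$ feedback in $D_t$ to the left-hand side of the weighted-sum inequality. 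After this absorption, the contractive recursion for $\Xi_t$ gives the steady state $\E\Xi_t \lesssim \eta^2 t^2 \tilde\sigma^2/\rho^2$; substituting into the $\Gamma$-recursion and using the $1/(t+1)^2$ damping from $(1-\theta_t)^2$ yields $\E\Gamma_t \lesssim \eta^2\tilde\sigma^2/\rho^4$ in amortised form, which, multiplied by $\alpha_t^2$ and summed, matches the claim. The most delicate bookkeeping is tracking the two compounding factors of $\rho^{-2}$—one arising from each consensus recursion—whose product produces the final $\rho^{-4}$ dependence.
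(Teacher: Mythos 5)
Your proposal is correct and follows essentially the same route as the paper: Jensen's inequality plus smoothness reduces the bias to $L^2\Gamma_\tau$, the coupled recursions for $\Gamma_t$ and $\Xi_t$ together with the dispersion bound $\E\Psi_t \lesssim \tilde{\sigma}^2 + L^2\Gamma_t$ are closed using $\eta\le\rho^2/K$, and the two compounding $\rho^{-2}$ factors yield the final $\rho^{-4}$. The only cosmetic difference is that the paper first solves the $\Xi$-recursion explicitly (\Cref{lem:Xi-explicit}) and then establishes the \emph{pointwise} bound $\Gamma_t\le 2560\tilde{\sigma}^2\eta^2/\rho^4$ by strong induction (\Cref{lem:Gamma-recursion-helper}), whereas you obtain the same bound in amortised form via weighted summation and absorption; either suffices to bound $B_T$.
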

\textit{Proof Sketch. } By Jensen's inequality and the smoothness of each $f_i$, we can bound each term in the sum as follows:
\begin{align*}
    \E\lVert \E\bar{g}_{\tau} - \nabla f(\bar{x}_{\tau})\rVert^2 &\leq \frac{1}{M}\sum_{i=1}^{M}{\E\lVert \nabla f_i(x_{\tau}^i) - \nabla f_i(\bar{x}_{\tau})\rVert^2} \\ &\leq \frac{L^2}{M}\sum_{i=1}^{M}{\E\lVert x_{\tau}^i - \bar{x}_{\tau}\rVert^2} \; .
\end{align*}
As previously defined, let $\Gamma_{t}\coloneqq\frac{1}{M}\sum_{i=1}^{M}{\E\lVert x_{t}^{i} - \bar{x}_{t}\rVert^2}$ denote 
the average consensus distance of the query points at round $t$. Thus, 
\begin{equation}\label{eq:basic-bias-bound-sketch}
    B_{T} \leq L^2\sum_{\tau=1}^{T}{\alpha_{\tau}^2 \Gamma_{\tau}}\; .
\end{equation}
Our objective is therefore to bound $\Gamma_{\tau}$ for every $\tau\in\sbrac{T}$. Using standard gossip averaging analysis and the query points averaging, 
we derive the following recursion for $\Gamma_{t}$:
\begin{align}\label{eq:Gamma-recursion-sketch}
    \Gamma_{t+1} \leq \brac{1-\frac{\rho}{2}}\Gamma_{t} + \frac{4\delta_t^2}{\rho}\brac{\Xi_{t} + \eta^2\alpha_t^2\Psi_{t}}\; ,
\end{align}
recalling that $\Xi_{t}\coloneqq\frac{1}{M}\sum_{i=1}^{M}{\E\lVert w_{t}^{i} - \bar{w}_{t}\rVert^2}$ and $\Psi_{t}\coloneqq\frac{1}{M}\sum_{i=1}^{M}{\E\lVert g_{t}^{i} - \bar{g}_{t}\rVert^2}$ represent the average consensus distances of the iterates and gradients at round $t$, respectively, and using $\delta_{t}\coloneqq \frac{\alpha_{t}}{\alpha_{1:t}}$. Notably, $\Xi_{t}$ satisfies its own recursion (see \Cref{lem:Xi-recursion}):
\[
    \Xi_{t+1} \leq \brac{1 -\frac{\rho}{2}}\Xi_{t} + \frac{2\eta^2\alpha_t^2}{\rho}\Psi_{t}\; .
\]
Solving this recursion yields the following for all $t\in\sbrac{T}$:
\[
    \Xi_{t} \leq \frac{2\eta^2\alpha_t^2}{\rho}\sum_{\tau=1}^{t-1}{\brac{1-\frac{\rho}{2}}^{t-1-\tau}\Psi_{\tau}}\; . 
\]
Substituting this back into \eqref{eq:Gamma-recursion-sketch} gives:
\begin{align*}
    \Gamma_{t+1} &\leq \brac{1-\frac{\rho}{2}}\Gamma_{t} \\ &\quad+ \frac{4\eta^2\alpha_t^2\delta_t^2}{\rho}\brac{\Psi_{t} + \frac{2}{\rho}\sum_{\tau=1}^{t-1}{\brac{1-\frac{\rho}{2}}^{t-1-\tau}\Psi_{\tau}}}\; .
\end{align*}
In \Cref{lem:Psi-bound}, we show that $\Psi_{t}\leq 5\tilde{\sigma}^2 + 10L^2\Gamma_{t}$ for all $t\in\sbrac{T}$. Plugging this bound, noting that for linear weights $\alpha_t^2\delta_t^2=\O(1)$, and using some algebra, we obtain: 
\begin{align*}\label{eq:Gamma-recursion-sketch2}
    \Gamma_{t+1}&\leq\brac{1-\kappa + \frac{c_1^2\eta^2}{\kappa}}\Gamma_{t} \nonumber \\ &\quad+\frac{c_1^2\eta^2}{\kappa^2}\sum_{\tau=1}^{t-1}{(1-\kappa)^{t-1-\tau}\Gamma_{\tau}} + \frac{c_2^2\eta^2}{\kappa^3}\; ,
\end{align*}
where $\kappa=\frac{\rho}{2}$, $c_1^2=\Theta(K^2)$, and $c_2^2=\Theta(\tilde{\sigma}^2)$. We have now derived a recursion for $\Gamma_{t}$ that no longer depends on $\Xi_{t}$ or $\Psi_{t}$. For a sufficiently small learning rate, specifically $\eta\leq\frac{\rho^2}{K}$, this recursion can be explicitly solved, yielding:
\begin{align*}
    \Gamma_{t} \leq \frac{2c_2^2\eta^2}{\kappa^4} = \Theta\brac{\frac{\tilde{\sigma}^2\eta^2}{\rho^{4}}}\; . 
\end{align*}
Injecting this bound into \eqref{eq:basic-bias-bound-sketch} yields the result:
\[
    B_{T} \leq L^2\sum_{\tau=1}^{T}{\alpha_{\tau}^2\Gamma_{\tau}} = \Theta\brac{\frac{K^2\tilde{\sigma}^2\eta^2}{\rho^{4}}\sum_{\tau=1}^{T}{\alpha_{\tau}^2}}\; . 
\]
\qed

\section{Experiments}\label{sec:experiments}
In this section, we empirically evaluate our method on both a synthetic least squares problem and an image classification task. All experiments are conducted using three random seeds, and we report the averaged results. 


\subsection{Least Squares on Synthetic Data}
We begin with a synthetic least squares problem to illustrate key theoretical properties of our algorithm. For each machine, the local objective function is defined as $f_i(x) = \frac{1}{2}\lVert{A_{i}x - b_i\rVert}^2$, where $A_{i}\in\reals^{d\times{d}}$ is drawn from a standard multivariate normal distribution. The targets vector is set as $b_{i}=A_i(x^{\sharp} - \delta_{i})$, where $x^{\sharp}\sim\mathcal{N}(0,\frac{1}{d}I_{d})$ is sampled once per configuration, and $\delta_{i}\sim\mathcal{N}(0, \frac{\zeta^2}{d}I_{d})$ introduces heterogeneity across machines.\footnote{Here, $\zeta^2$ quantifies heterogeneity \textbf{at the optimum}.} To incorporate stochasticity, we add Gaussian noise $\xi\sim\mathcal{N}(0, \frac{\sigma^2}{d}I_{d})$ when querying local gradients, resulting in the noisy gradient estimate $\nabla f_i(x) + \xi$. In our experiments, we set $d=50$.

We compare our method with D-SGD, evaluating performance across three network topologies: ring, torus, and 1-peer exponential graph~\cite{ying2021exponential}. The exponential graph is a fast-mixing topology for which $1/\rho=\O(\log{M})$. For each method and topology, we perform a grid search over the learning rate $\eta\in\cbrac{0.0001, 0.0005, 0.001, 0.005, 0.01, 0.05, 0.1}$ and select the value that yields the lowest error after $100$K iterations. For DAT-SGD, we use constant weights $\alpha_t=1$ for all $t$.



In \Cref{fig:synthetic_nodes_ablation}, we plot the final error as a function of the number of machines, across four configurations defined by $\sigma,\zeta\in\cbrac{1, 10}$, with different colors indicating the underlying topology. For D-SGD, we observe that performance degrades on the ring and torus topologies starting from $M=4$ and $M=9$, respectively, while it remains relatively stable on the 1-peer exponential graph. In contrast, our method improves with larger $M$: performance steadily improves on the torus and exponential graphs, and performs better on the ring topology up to $M=25$ before deteriorating. This suggests that, beyond a certain threshold (between $M=25$ and $49$), the network-dependent error term (which scales as $\O(1/\rho T)=\O(M^2/T)$ for the ring) becomes dominant. Overall, the results align with our theoretical findings, as DAT-SGD enables performance improvement for larger $M$. We provide complete convergence curves in \Cref{app:experiments}.


\begin{figure}[t]
    \centering
    \includegraphics[clip, trim=0.2cm 0.1cm 0.2cm 0.25cm, width=\linewidth]{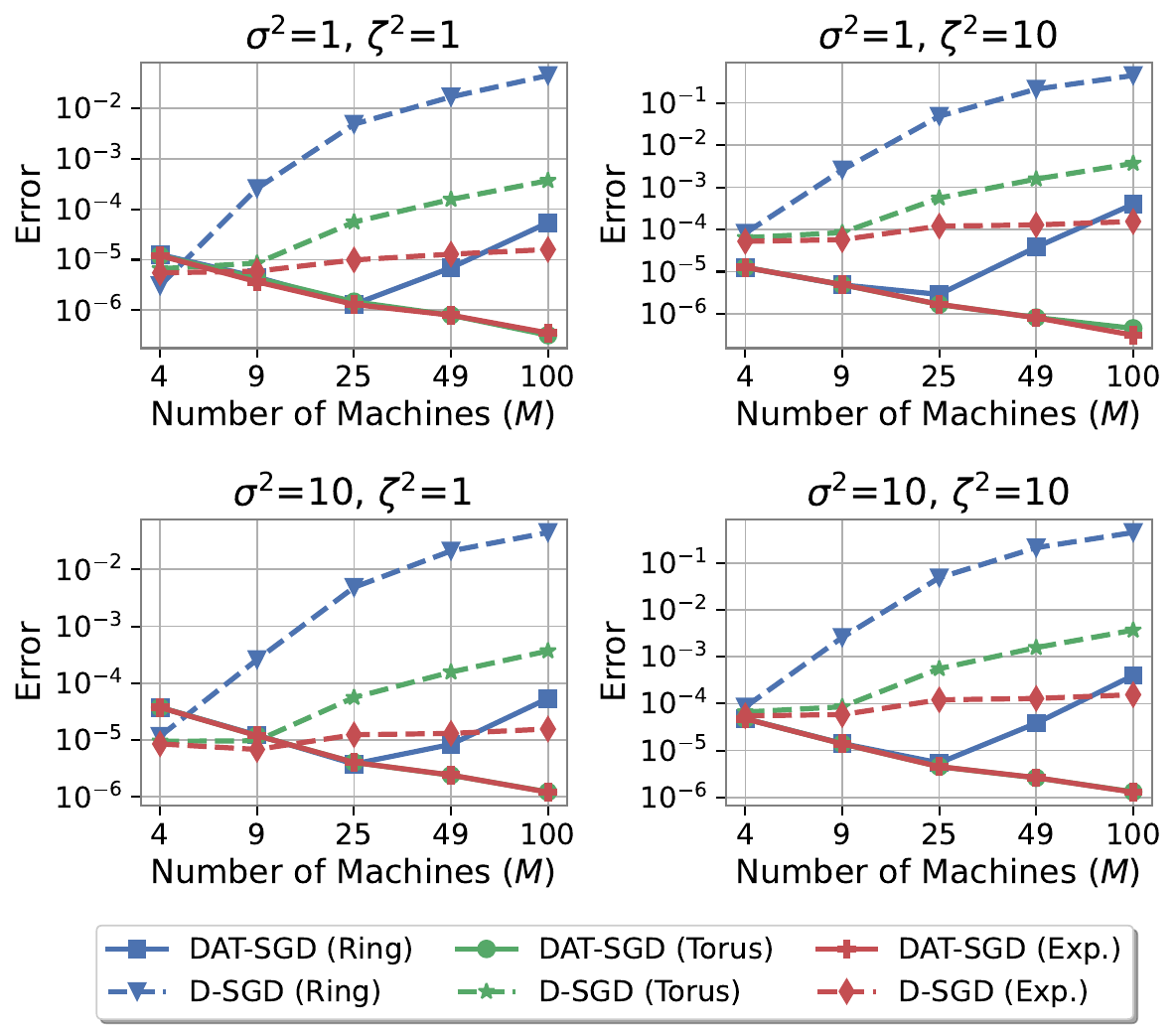}
    \vspace{-0.1in}
    \caption{Final error on synthetic least squares problem for different numbers of machines and various gradient noise variance ($\sigma^2$) and heterogeneity ($\zeta^2$) levels over ring, torus, and exponential graph topologies. We plot $\frac{1}{M}\sum_{i=1}^{M}{\lVert{x_T^i - x^*}\rVert^2}$ and $\frac{1}{M}\sum_{i=1}^{M}{\lVert{w_T^i - x^*}\rVert^2}$ for DAT-SGD and D-SGD, respectively.}
    \label{fig:synthetic_nodes_ablation}
\end{figure}

\subsection{Image Classification with a Neural Network}
Next, we evaluate our method on the Fashion MNIST~\cite{xiao2017fashion} image classification task using the LeNet~\cite{lecun1998gradient} architecture. The data is partitioned among workers following a Dirichlet distribution with parameter $\alpha$, which controls the heterogeneity level~\cite{hsu2019measuring}. 

We compare our method against D-SGD and D$^2$~\cite{tang2018d}, a decentralized optimization method specifically designed to improve robustness to data heterogeneity. Experiments are performed on the ring topology and the Base-2 Graph~\cite{takezawa2023beyond}---a time-varying, state-of-the-art topology for decentralized learning. For our method and D-SGD, we use momentum with parameter $\beta = 0.9$. For each method and topology, the learning rate was selected via grid search over $\eta\in\{0.001, 0.01,0.1\}$.

Unlike the synthetic least squares problem, this task is non-convex. Following the heuristic proposed by \citet{dahan2025do}, we adopt a momentum-like update for Anytime averaging of the form $x_{t+1} = \gamma_{t} x_{t} + (1 - \gamma_{t}) w_{t}$, using a fixed value of $\gamma_{t} = 0.9$. This choice was shown to enhance training stability and adaptability in non-convex landscapes.

\begin{figure}[t]
    \centering
    \includegraphics[clip, trim=0.3cm 0.05cm 0.3cm 0.35cm, width=\linewidth]{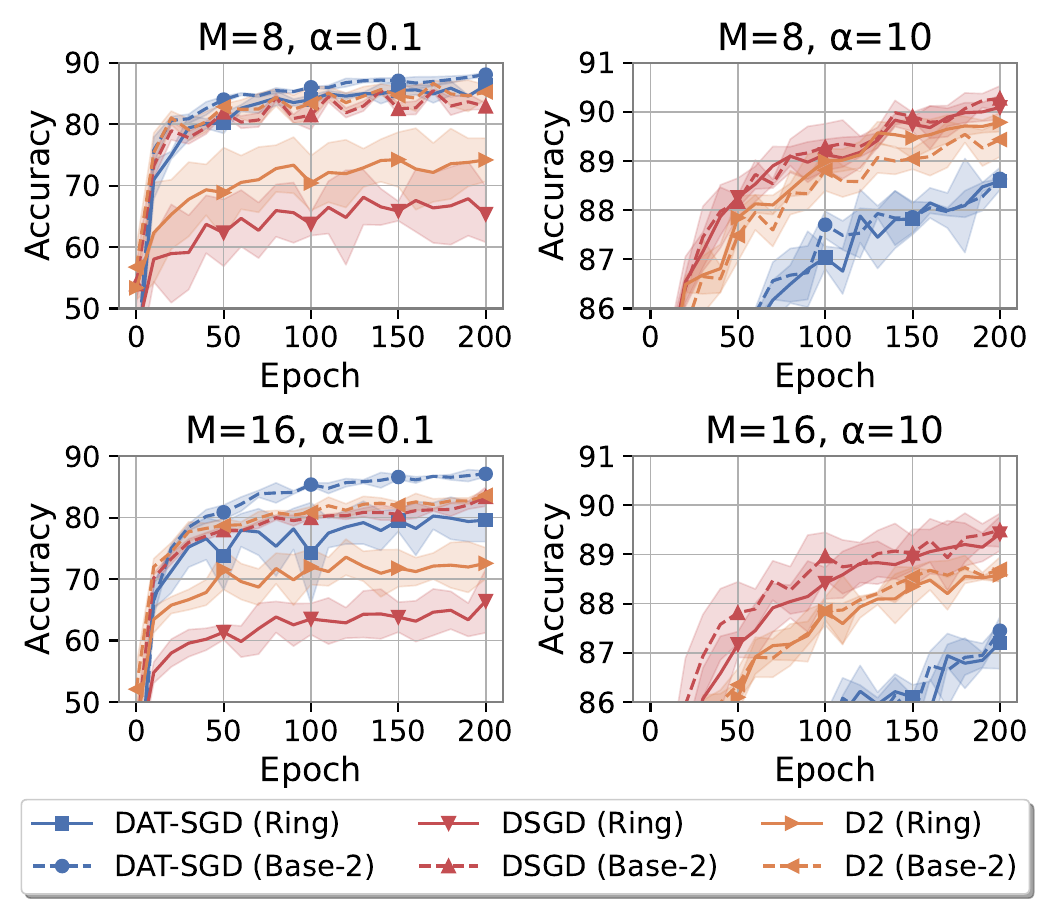}
    \vspace{-0.1in}
    \caption{Test accuracy on Fashion MNIST for different numbers of machines (across \textbf{rows}) and varying heterogeneity levels (across \textbf{columns}) on the ring and the Base-2 graph topologies.} 
    \label{fig:fashion_mnist_curves}
\end{figure}

\begin{figure}[t]
    \centering
    \includegraphics[clip, trim=0.2cm 0.1cm 0.2cm 0.25cm, width=\linewidth]{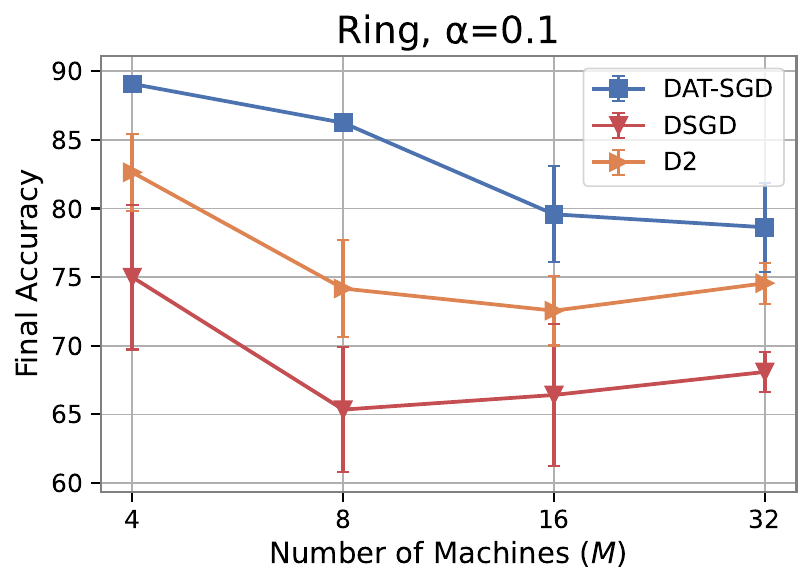}
    \vspace{-0.1in}
    \caption{Final test accuracy on Fashion MNIST for varying number of machines on the ring topology with heterogeneous data.} 
    \label{fig:fmnist_n_nodes_ablation}
\end{figure}

\Cref{fig:fashion_mnist_curves} depicts the test accuracy for $M=8$ and $16$ machines under heterogeneous ($\alpha=0.1$) and nearly homogeneous ($\alpha=10$) data, with different colors indicating the compared methods. In the heterogeneous case, our method consistently outperforms the baselines across both topologies, with results on the ring matching (for $M=8$) or nearly matching (for $M=16$) those of the baselines on the well-connected Base-2 Graph. Conversely, and perhaps unexpectedly, under homogeneous data, D-SGD and D$^2$ achieve better performance, motivating further investigation into this gap across data regimes in non-convex settings. In \Cref{fig:fmnist_n_nodes_ablation}, similar to \Cref{fig:synthetic_nodes_ablation}, we show the final accuracy for different numbers of machines on the ring topology with heterogeneous data ($\alpha=0.1$). Notably, the largest accuracy drop for DAT-SGD occurs between $M=8$ and $16$, whereas D-SGD and D$^2$ degrade most between $M=4$ and $8$, demonstrating our claim of improved parallelism. 

\section{Conclusion and Future Work}
In this work, we presented \algname{}, a simple yet powerful approach to decentralized SCO that raises the parallelism threshold, enabling to increase the number of machines in the network while maintaining linear speed-up. This is achieved by effectively mitigating the consensus distance using the Anytime SGD mechanism, which computes stochastic gradients at gradually changing query points, thereby limiting local model divergence. 

Several promising directions emerge from our work. One is integrating our method with gradient tracking, which has been shown to remove dependence on data heterogeneity \cite{koloskova2021improved}. 
Moreover, we assume the gossip matrix is symmetric and doubly stochastic, allowing us to use \Cref{prop:gossip-contraction} for a clear and simple analysis. Extending our results to asymmetric or row/column stochastic matrices, as in, e.g., \cite{assran2019stochastic}, remains an open problem. Finally, establishing convergence bounds in the non-convex setting is a compelling challenge for future research.




\section*{Acknowledgments}
We thank the reviewers for their valuable comments. This research was partially supported by Israel PBC-VATAT, by the Technion Artificial Intelligence Hub (Tech.AI), and by the Israel Science Foundation (grant No.~3109/24).

\section*{Impact Statement}
This paper presents work whose goal is to advance the field of Machine Learning. There are many potential societal consequences of our work, none of which we feel must be specifically highlighted here.

\bibliography{mybib}
\bibliographystyle{icml2025}


\newpage
\appendix
\onecolumn


\section{Anytime SGD Analysis}\label{app:anytime}


In this section, we discuss key properties of Anytime SGD with biased gradients. The main result is \Cref{lem:main-lemma-anytime}, which will later be adapted to the decentralized, gossip-based communication setup in \Cref{app:main_proof}

For an initial iterate $w_1\in\reals^{d}$, learning rate $\eta>0$, and non-negative weights $\cbrac{\alpha_{t}}_{t\geq 1}$, we consider the sequences $\cbrac{w_{t}}_{t\geq 1}$ and $\cbrac{x_{t}}_{t\geq 1}$, defined by the update rules:
\begin{align}
    w_{t+1} &= w_{t} - \eta\alpha_t g_t, \label{eq:anytime-sgd-w} \\  x_{t+1} &= \frac{\alpha_{1:t-1}}{\alpha_{1:t}}x_{t} + \frac{\alpha_{t}}{\alpha_{1:t}}w_{t+1}, \enskip\text{with}\quad x_1 = w_1\; . \label{eq:anytime-sgd-x}
\end{align}
Here, $g_t$ is a biased estimate of the gradient of some function $f$ at $x_t$.

The following result mimics the gradient inequality for the $\cbrac{\alpha_t}_{t\geq 1}$-weighted averages.  
\begin{lemma}[\citealp{cutkosky2019anytime}; Theorem 1,~\citealp{dahan2024slowcalsgd}]\label{lem:grad-ineq-anytime}
    Let $f:\mathbb{R}^d\rightarrow\mathbb{R} $ be a convex function with global minimum $f^*\coloneqq f(x^*)$, and let the sequence $\{x_t\}_{t\geq 1}$ be defined as in \Cref{eq:anytime-sgd-x}. Then, for any $t\geq 1$, the following holds:
    \[
        0\leq\alpha_{1:t}\brac{f({x}_t)-f^*} \leq \sum_{\tau = 1}^{t}{\alpha_\tau \nabla f({x}_\tau)^\top(w_\tau-x^*)}\; .
    \]
\end{lemma}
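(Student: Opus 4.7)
The plan is proof by induction on $t$, exploiting the structural fact that the recursion \eqref{eq:anytime-sgd-x} expresses $x_t$ as a $\cbrac{\alpha_\tau}$-weighted average of the iterates $\cbrac{w_\tau}$. In particular, unrolling the recursion gives the averaging identity $\alpha_{1:t+1}\,x_{t+1}=\alpha_{1:t}\,x_t+\alpha_{t+1}\,w_{t+1}$, which drives the inductive step. Nonnegativity of the left-hand side is immediate from $f(x_t)\geq f^*$. For the base case $t=1$, since $x_1=w_1$, the claim collapses to $\alpha_1(f(w_1)-f^*)\leq \alpha_1\nabla f(w_1)^\top(w_1-x^*)$, which is just convexity of $f$ applied at $w_1$.

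For the inductive step, I would start from the convexity inequality at $x_{t+1}$, namely $f(x_{t+1})-f^*\leq \nabla f(x_{t+1})^\top(x_{t+1}-x^*)$, multiply by $\alpha_{1:t+1}$, and substitute the averaging identity to split the right-hand side:
\[
    \alpha_{1:t+1}\brac{f(x_{t+1})-f^*}\;\leq\; \alpha_{1:t}\nabla f(x_{t+1})^\top(x_t-x^*)\;+\;\alpha_{t+1}\nabla f(x_{t+1})^\top(w_{t+1}-x^*)\; .
\]
The second piece already matches the new $\tau=t+1$ summand on the target right-hand side, so the remaining task is to absorb the first piece into the inductive hypothesis by bounding $\nabla f(x_{t+1})^\top(x_t-x^*)$ by $f(x_t)-f^*$.

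The crux of the argument, and the only non-routine step I anticipate, is a double use of convexity \emph{at $x_{t+1}$}: applying $f(y)\geq f(x_{t+1})+\nabla f(x_{t+1})^\top(y-x_{t+1})$ at $y=x_t$ and $y=x^*$ gives
\[
    \nabla f(x_{t+1})^\top(x_t-x_{t+1})\;\leq\; f(x_t)-f(x_{t+1})\quad\text{and}\quad \nabla f(x_{t+1})^\top(x_{t+1}-x^*)\;\leq\; f(x_{t+1})-f^*\; .
\]
Summing these two inequalities telescopes $f(x_{t+1})$ and yields precisely $\nabla f(x_{t+1})^\top(x_t-x^*)\leq f(x_t)-f^*$. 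Multiplying by $\alpha_{1:t}$, substituting into the decomposition above, and invoking the inductive hypothesis on $\alpha_{1:t}(f(x_t)-f^*)$ closes the induction. The only other care needed is bookkeeping of the weighted-averaging indices implied by \eqref{eq:anytime-sgd-x}; no smoothness, stochasticity, or learning-rate choice enters this purely deterministic convexity argument.
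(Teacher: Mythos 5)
The paper does not actually prove this lemma itself---it imports it from \citet{cutkosky2019anytime} and \citet{dahan2024slowcalsgd}---so there is no in-paper proof to compare against. Your induction is, in substance, the standard telescoping argument from those sources, and the key step (bounding $\nabla f(x_{t+1})^\top(x_t-x^*)\leq f(x_t)-f^*$ by applying the gradient inequality at $x_{t+1}$ with $y=x_t$ and $y=x^*$ and summing) is exactly right; the base case and the nonnegativity claim are also handled correctly. One caveat deserves attention: your averaging identity $\alpha_{1:t+1}x_{t+1}=\alpha_{1:t}x_t+\alpha_{t+1}w_{t+1}$ is \emph{not} what you get by unrolling \eqref{eq:anytime-sgd-x} as printed, which reads $\alpha_{1:t}x_{t+1}=\alpha_{1:t-1}x_t+\alpha_t w_{t+1}$ and therefore makes $x_{t+1}$ an average of $w_2,\dots,w_{t+1}$ with weights $\alpha_1,\dots,\alpha_t$. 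Under that literal recursion the stated inequality can fail: take $d=1$, $f(x)=x^2/2$, $x^*=0$, $\alpha_1=10$, $\alpha_2=1$, $w_1=0$, $w_2=1$; then $x_2=w_2$ and the left-hand side equals $5.5$ while the right-hand side equals $1$. So the lemma is only true for the convention you actually used, namely $x_t=\alpha_{1:t}^{-1}\sum_{\tau\leq t}\alpha_\tau w_\tau$, equivalently $x_{t+1}=\frac{\alpha_{1:t}}{\alpha_{1:t+1}}x_t+\frac{\alpha_{t+1}}{\alpha_{1:t+1}}w_{t+1}$, which is the convention of the cited works; the recursion in the paper appears to carry an off-by-one. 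Your proof is correct for that intended recursion---just state explicitly which averaging identity you are using rather than attributing it to an unrolling of \eqref{eq:anytime-sgd-x} as written.
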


Next, we present a regret bound for online gradient descent with weighted gradients.
\begin{lemma}[\citealp{cutkosky2019anytime}; Lemma 2,~\citealp{dahan2024slowcalsgd}]\label{lem:ogd-regret}
    Consider the update rule in \Cref{eq:anytime-sgd-w}. Then, for all $t\geq 1$, it holds that:
    \[
        \sum_{\tau=1}^{t}{\alpha_{\tau} g_{\tau}^{\top}\brac{w_{\tau} - x^*}} \leq \frac{\norm{w_1 - x^*}^2}{2\eta} + \frac{\eta}{2}\sum_{\tau=1}^{t}{\alpha_{\tau}^2\norm{g_{\tau}}^2}\; .
    \]
\end{lemma}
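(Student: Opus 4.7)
}
The plan is the standard one-step descent/telescoping argument for online gradient descent, adapted to the weighted step $\eta\alpha_\tau g_\tau$. First, I would fix an arbitrary $\tau\ge 1$ and expand the squared distance to $x^*$ after the update. Since $w_{\tau+1}=w_\tau-\eta\alpha_\tau g_\tau$ by \Cref{eq:anytime-sgd-w}, I have
\[
    \|w_{\tau+1}-x^*\|^2 \;=\; \|w_\tau-x^*\|^2 \;-\; 2\eta\alpha_\tau\, g_\tau^\top(w_\tau-x^*) \;+\; \eta^2\alpha_\tau^2\|g_\tau\|^2\; .
\]
Isolating the cross term gives the single-step identity
\[
    \alpha_\tau\, g_\tau^\top(w_\tau-x^*) \;=\; \frac{1}{2\eta}\bigl(\|w_\tau-x^*\|^2 - \|w_{\tau+1}-x^*\|^2\bigr) \;+\; \frac{\eta}{2}\alpha_\tau^2\|g_\tau\|^2\; .
\]

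Next, I would sum this identity over $\tau=1,\ldots,t$. The first term on the right telescopes to $\tfrac{1}{2\eta}(\|w_1-x^*\|^2-\|w_{t+1}-x^*\|^2)$, and dropping the non-negative $\|w_{t+1}-x^*\|^2/(2\eta)$ term yields
\[
    \sum_{\tau=1}^{t}\alpha_\tau\, g_\tau^\top(w_\tau-x^*) \;\le\; \frac{\|w_1-x^*\|^2}{2\eta} \;+\; \frac{\eta}{2}\sum_{\tau=1}^{t}\alpha_\tau^2\|g_\tau\|^2\; ,
\]
which is exactly the claimed inequality. No obstacle is expected here: the argument is purely algebraic, requires no convexity or smoothness of $f$, and does not use any property of $g_\tau$ beyond the fact that it defines the update. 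Note in particular that the proof does not rely on unbiasedness of $g_\tau$, which is precisely why this lemma can later be composed with \Cref{lem:grad-ineq-anytime} to handle the biased gradients arising from the decentralized query points.
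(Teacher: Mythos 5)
Your proof is correct and is precisely the standard telescoping argument for (weighted) online gradient descent; the paper itself states this lemma as a cited result from \citet{cutkosky2019anytime} and \citet{dahan2024slowcalsgd} without reproving it, and those references use the same expansion of $\|w_{\tau+1}-x^*\|^2$ followed by telescoping and dropping the final non-negative term. Your closing remark that no unbiasedness of $g_\tau$ is needed is also accurate and correctly identifies why the lemma composes with \Cref{lem:grad-ineq-anytime} in the biased decentralized setting.
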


The following result is fundamental to our analysis and plays a key role in establishing our main proof. The analysis closely follows that of \citet{dahan2024slowcalsgd} and is included here for completeness; see Appendix F therein for further details.
\begin{lemma}\label{lem:main-lemma-anytime}
    For the sequences $\cbrac{w_{t}}_{t\geq 1}$ and $\cbrac{x}_{t\geq 1}$ defined in \Cref{eq:anytime-sgd-w,eq:anytime-sgd-x}, the following holds for any $t\geq 1$:
    \[
        \alpha_{1:t}\E\sbrac{f(x_{t}) -f^*} \leq \frac{\norm{w_1 - x^*}^2}{\eta} + \eta\sum_{\tau=1}^{T}{\alpha_{\tau}^2\E\lVert{g_{\tau}\rVert}^2} + 4\eta T\sum_{\tau=1}^{T}{\alpha_{\tau}^2\E\lVert{\E g_{\tau} - \nabla f(x_{\tau})\rVert}^2}\; .
    \]
\end{lemma}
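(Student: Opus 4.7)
The plan is to bypass a direct application of \Cref{lem:ogd-regret} and instead work from the raw one-step telescope for $\norm{w_\tau - x^*}^2$, since this simultaneously exposes the bias of $g_\tau$ relative to $\nabla f(x_\tau)$ and gives access to a self-bound on the iterate norms that we will need. Expanding $\norm{w_{\tau+1}-x^*}^2 = \norm{w_\tau - \eta\alpha_\tau g_\tau - x^*}^2$, summing $\tau = 1,\ldots,t$, and taking expectation yields
\begin{equation*}
    \E\norm{w_{t+1}-x^*}^2 + 2\eta\sum_{\tau=1}^{t}\alpha_\tau \E\sbrac{g_\tau^\top(w_\tau - x^*)} = \norm{w_1-x^*}^2 + \eta^2\sum_{\tau=1}^{t}\alpha_\tau^2\E\norm{g_\tau}^2.
\end{equation*}
Writing $g_\tau = \nabla f(x_\tau) + b_\tau + n_\tau$ with $b_\tau \coloneqq \E g_\tau - \nabla f(x_\tau)$ the (random) bias and $n_\tau$ the conditionally zero-mean noise, the cross-term splits into $\E\sbrac{\nabla f(x_\tau)^\top(w_\tau - x^*)}$ plus $\E\sbrac{b_\tau^\top(w_\tau - x^*)}$, since $w_\tau$ is measurable with respect to the past and hence $\E\sbrac{n_\tau^\top(w_\tau - x^*)} = 0$.

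Next I would invoke \Cref{lem:grad-ineq-anytime} to lower-bound $\sum_\tau \alpha_\tau \E\sbrac{\nabla f(x_\tau)^\top(w_\tau - x^*)}$ by $\alpha_{1:t}\E\sbrac{f(x_t)-f^*}$, drop the non-negative $\E\norm{w_{t+1}-x^*}^2$, and apply Young's inequality $2ab\leq \lambda a^2 + b^2/\lambda$ to the bias-iterate term with $a = \eta\alpha_\tau\norm{b_\tau}$, $b = \norm{w_\tau - x^*}$, and $\lambda = 4T$. Extending the sums to $\tau = 1,\ldots,T$ using non-negativity then produces
\begin{equation*}
    2\eta\alpha_{1:t}\E\sbrac{f(x_t)-f^*} \leq \hat{B} + \frac{1}{4T}\sum_{\tau=1}^{T}\E\norm{w_\tau - x^*}^2,
\end{equation*}
where $\hat{B} \coloneqq \norm{w_1-x^*}^2 + 4T\eta^2\sum_{\tau=1}^{T}\alpha_\tau^2\E\norm{b_\tau}^2 + \eta^2\sum_{\tau=1}^{T}\alpha_\tau^2\E\norm{g_\tau}^2$.

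To close the argument I apply the same inequality at every prefix $s\leq T$, which is legal because \Cref{lem:grad-ineq-anytime} holds for every $t$. Keeping $\E\norm{w_{s+1}-x^*}^2$ on the LHS and dropping the non-negative function-gap term gives $\E\norm{w_{s+1}-x^*}^2 \leq \hat{B} + \tfrac{1}{4T}\sum_{\tau=1}^{T}\E\norm{w_\tau - x^*}^2$ for every $s\leq T$. Summing over $s=1,\ldots,T$ and absorbing $\tfrac{1}{4}\sum_\tau\E\norm{w_\tau-x^*}^2$ back into the LHS yields $\sum_{\tau=1}^{T}\E\norm{w_\tau - x^*}^2 \leq \tfrac{4}{3}\brac{\norm{w_1-x^*}^2 + T\hat{B}}$. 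Substituting this back, dividing by $2\eta$, and consolidating constants (using $T\geq 1$) produces the claimed bound with coefficients $\norm{w_1-x^*}^2/\eta$, $\eta$, and $4\eta T$.

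The main obstacle is the self-bounding step: the Young's constant must scale as $\lambda = \Theta(T)$ so that summing $\tfrac{1}{4T}\sum_\tau\E\norm{w_\tau-x^*}^2$ across all prefixes produces a strict contraction on $\sum_\tau \E\norm{w_\tau-x^*}^2$. Any other scaling either leaves an uncontrolled iterate-norm residual or inflates the $\eta\sum_\tau\alpha_\tau^2\E\norm{g_\tau}^2$ coefficient by a spurious factor of $T$.
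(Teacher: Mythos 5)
Your proposal is correct: the constants work out (after dividing by $2\eta$ and using $T\geq 1$, the coefficients $\tfrac{5}{6\eta}$, $\tfrac{2\eta}{3}$, and $\tfrac{8\eta T}{3}$ are all dominated by the stated $\tfrac{1}{\eta}$, $\eta$, and $4\eta T$), and the overall strategy coincides with the paper's: both combine the anytime gradient inequality (\Cref{lem:grad-ineq-anytime}), an OGD-style telescope of $\norm{w_\tau-x^*}^2$, and Young's inequality with parameter $\Theta(1/\eta T)$ on the bias--iterate cross term, leaving a residual $\tfrac{1}{4T}\sum_{\tau}\E\norm{w_\tau-x^*}^2$ to be controlled. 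The one genuine difference is how that residual is handled: the paper imports the bound $\sum_{\tau=1}^{T}\E\norm{w_\tau-x^*}^2 \leq 2T\norm{w_1-x^*}^2 + 2T\eta^2\sum_\tau\alpha_\tau^2\E\norm{g_\tau}^2 + 16\eta^2T^2\sum_\tau\alpha_\tau^2\E\norm{\E g_\tau-\nabla f(x_\tau)}^2$ as a black box from Lemma 3 of \citet{dahan2024slowcalsgd}, whereas you re-derive an equivalent bound in place by applying the same one-step telescope at every prefix $s\leq T$ (keeping $\E\norm{w_{s+1}-x^*}^2$ on the left, dropping the nonnegative suboptimality gap), summing over $s$, and absorbing the $\tfrac{1}{4}\sum_\tau\E\norm{w_\tau-x^*}^2$ term to get $\sum_{\tau=1}^{T}\E\norm{w_\tau-x^*}^2\leq\tfrac{4}{3}\bigl(\norm{w_1-x^*}^2+T\hat{B}\bigr)$. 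This buys a fully self-contained proof and makes transparent why the Young's parameter must scale as $\Theta(T)$ for the self-bounding step to contract; the paper's route is shorter on the page but relies on the external lemma. A small presentational point: your $b_\tau=\E g_\tau-\nabla f(x_\tau)$ should be read as a conditional expectation given the past (so that $b_\tau$ is a random vector and $\E[n_\tau^\top(w_\tau-x^*)]=0$ by the tower rule), which is exactly how the paper uses the law of total expectation in its bias bound.
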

\begin{proof}
    \Cref{lem:grad-ineq-anytime} implies the following:
    \begin{align}\label{eq:regret-bias-decomp}
        \alpha_{1:t}\E\sbrac{f(x_{t}) -f^*} &\leq \E\sbrac{\sum_{\tau=1}^{t}{\alpha_{\tau}\nabla f(x_{\tau})^\top(w_t - x^*)}} \nonumber \\ &= \E\sbrac{\sum_{\tau=1}^{t}{\alpha_{\tau} g_{\tau}^\top(w_{\tau}-x^*)}} + \E\sbrac{\sum_{\tau=1}^{t}{\alpha_{\tau}\brac{\nabla f(x_\tau)-g_\tau}^\top(w_{\tau}-x^*)}} \nonumber \\ &\leq \frac{\norm{w_1 - x^*}^2}{2\eta} + \frac{\eta}{2}\sum_{\tau=1}^{t}{\alpha_{\tau}^2\E\!\norm{g_{\tau}}^2} + \sum_{\tau=1}^{t}{\E\sbrac{\alpha_{\tau}\brac{\nabla f(x_\tau)-g_\tau}^\top(w_{\tau}-x^*)}}
        \; ,
    \end{align}
    where the last inequality follows from \Cref{lem:ogd-regret}. Every element in the rightmost sum can be bounded using Cauchy-Schwarz inequality and Young's inequality, $a\cdot b\leq\frac{1}{2\theta}\norm{a}^2 + \frac{\theta}{2}\norm{b}^2$, which holds for any $\theta>0$, yielding:
    \begin{align}\label{eq:main-bias-bound}
        \E\sbrac{\alpha_{\tau}\brac{\nabla f(x_{\tau})-g_{\tau}}^\top(w_{\tau}-x^*)} &= \E\sbrac{\alpha_{\tau}\brac{\nabla f(x_{\tau})-\E g_{\tau}}^\top(w_{\tau}-x^*)} \nonumber \\ &\leq \E\sbrac{\alpha_{\tau}\norm{\E g_{\tau} - \nabla f(x_{\tau})}\cdot\norm{w_{\tau} - x^*}} \nonumber \\ &\leq \frac{\alpha_{\tau}^2}{2\theta}\E\lVert{\E g_{\tau} - \nabla f(x_{\tau})\rVert}^2 + \frac{\theta}{2}\E\lVert{w_{\tau} - x^*\rVert}^2\; ,
    \end{align}
    where the first equality follows from the law of total expectation. 

    Substituting \eqref{eq:main-bias-bound} into \eqref{eq:regret-bias-decomp} gives:
    \begin{align*}
        \alpha_{1:t}\brac{f(x_t) - f^*} &\leq \frac{\norm{w_1 - x^*}^2}{2\eta} + \frac{\eta}{2}\sum_{\tau=1}^{t}{\alpha_{\tau}^2\E\lVert{g_{\tau}\rVert}^2} + \frac{1}{2\theta}\sum_{\tau=1}^{t}{\alpha_{\tau}^2\E\lVert{\E g_{\tau} - \nabla f(x_{\tau})\rVert}^2} + \frac{\theta}{2}\sum_{\tau=1}^{t}{\E\lVert{w_{\tau} - x^*\rVert}^2} \\ &\leq \frac{\norm{w_1 - x^*}^2}{2\eta} + \frac{\eta}{2}\sum_{\tau=1}^{T}{\alpha_{\tau}^2\E\lVert{g_{\tau}\rVert}^2} + \frac{1}{2\theta}\sum_{\tau=1}^{T}{\alpha_{\tau}^2\E\lVert{\E g_{\tau} - \nabla f(x_{\tau})\rVert}^2} + \frac{\theta}{2}\sum_{\tau=1}^{T}{\E\lVert{w_{\tau} - x^*\rVert}^2}\; ,
    \end{align*}
    where the last inequality follows from the last three terms being monotonically increasing with $t$. Lemma 3 of \citet{dahan2024slowcalsgd} guarantees that for sequences $\cbrac{w_{t}}_{t\geq 1}$ and $\cbrac{x}_{t\geq 1}$ as defined in \Cref{eq:anytime-sgd-w,eq:anytime-sgd-x}, the following holds:
    \[
        \sum_{\tau=1}^{T}{\E\lVert{w_t - x^*\rVert}^2} \leq 2T\norm{w_1 - x^*}^2 + 2T\eta^2\sum_{\tau=1}^{T}{\alpha_{\tau}^2\E\lVert{g_{\tau}\rVert}^2} + 16\eta^2 T^2 \sum_{\tau=1}^{T}{\alpha_{\tau}^2\E\lVert{\E g_{\tau} - \nabla f(x_{\tau})\rVert}^2}\; ,
    \]
    which in turn implies that for $\theta = \frac{1}{4\eta T}$:
    \begin{align*}
        \alpha_{1:t}\E\sbrac{f(x_t) - f^*} &\leq \brac{\frac{1}{2\eta} + \theta T}\norm{w_1 - x^*}^2 + \brac{\frac{\eta}{2} + \theta T\eta^2}\sum_{t=1}^{T}{\alpha_t^2\E\lVert{g_t\rVert}^2} \\ &\quad+ \brac{\frac{1}{2\theta} + 8\theta\eta^2 T^2}\sum_{\tau=1}^{T}{\alpha_{\tau}^2\E\lVert{\E g_{\tau} - \nabla f(x_{\tau})\rVert}^2} \\ &=\frac{3\norm{w_1 - x^*}^2}{4\eta} + \frac{3\eta}{4}\sum_{t=1}^{T}{\alpha_t^2\E\lVert{g_t\rVert}^2} + 4\eta T\sum_{\tau=1}^{T}{\alpha_{\tau}^2\E\lVert{\E g_{\tau} - \nabla f(x_{\tau})\rVert}^2} \\ &\leq \frac{\norm{w_1 - x^*}^2}{\eta} + \eta\sum_{t=1}^{T}{\alpha_t^2\E\lVert{g_t\rVert}^2} + 4\eta T\sum_{\tau=1}^{T}{\alpha_{\tau}^2\E\lVert{\E g_{\tau} - \nabla f(x_{\tau})\rVert}^2}\; ,
    \end{align*}
    thus concluding the proof.
\end{proof}


\section{Proof of \Cref{thm:main-result}}\label{app:main_proof}
In this section, we prove our main result. To simplify the presentation and analysis, we first introduce some notations:
\begin{align*}
    X_{t}&\coloneqq \begin{pmatrix}
        x_{t}^{1} & x_t^2 & \cdots & x_t^{M}
    \end{pmatrix}\in\reals^{d\times{M}},  \quad\hspace{0.6em} \bar{X}_{t}\coloneqq X_t\frac{1}{M}\boldsymbol{1}\boldsymbol{1}^{\top} = \begin{pmatrix}
        \bar{x}_{t} & \bar{x}_{t} & \cdots & \bar{x}_{t}
    \end{pmatrix}\in\reals^{d\times{M}},\\
    W_{t}&\coloneqq \begin{pmatrix}
        w_{t}^{1} & w_t^2 & \cdots & w_t^{M}
    \end{pmatrix}\in\reals^{d\times{M}}, \quad \bar{W}_{t}\coloneqq W_t\frac{1}{M}\boldsymbol{1}\boldsymbol{1}^{\top}= \begin{pmatrix}
        \bar{w}_{t} & \bar{w}_{t} & \cdots & \bar{w}_{t}
    \end{pmatrix}\in\reals^{d\times{M}},\\
    G_{t}&\coloneqq \begin{pmatrix}
    g_{t}^{1} & g_t^2 & \cdots & g_t^{M}
    \end{pmatrix}\in\reals^{d\times{M}}, \quad\hspace{0.8em} \bar{G}_{t}\coloneqq G_t\frac{1}{M}\boldsymbol{1}\boldsymbol{1}^{\top}= \begin{pmatrix}
        \bar{g}_{t} & \bar{g}_{t} & \cdots & \bar{g}_{t}
    \end{pmatrix}\in\reals^{d\times{M}}\; ,
\end{align*}
with $\bar{x}_t = \frac{1}{M}\sum_{i=1}^{M}{x_t^{i}}$, $\bar{w}_t = \frac{1}{M}\sum_{i=1}^{M}{w_t^{i}}$, and $\bar{g}_t = \frac{1}{M}\sum_{i=1}^{M}{g_t^{i}}$. 

Denoting $\delta_t = \frac{\alpha_{t}}{\alpha_{1:t}}$, our Decentralized Anytime SGD algorithm can be expressed using matrix notation as follows:
\begin{align*}
    \text{Local update and averaging:}\quad &\begin{cases}
        W_{t+\frac{1}{2}} = W_{t} - \eta\alpha_t G_t \\
        X_{t+\frac{1}{2}} = \brac{1-\delta_t} X_{t} + \delta_t W_{t+\frac{1}{2}}
    \end{cases} \\
    \text{Gossip communication:}\quad&\begin{cases}
        W_{t+1} = W_{t+\frac{1}{2}}P \\
        X_{t+1} = X_{t+\frac{1}{2}}P\; .
    \end{cases}
\end{align*}
We also define the expected average distance between local elements and their mean, commonly referred to as the consensus distance. Specifically, we define consensus distances for the iterates, the query points (i.e., the $\cbrac{\alpha_t}_{t\geq 1}$-weighted iterates), and the gradients:
\begin{align*}
    \Gamma_t&\coloneqq\frac{1}{M}\sum_{i=1}^{M}{\E\!\norm{x_t^i - \bar{x}_t}^2} = \frac{1}{M}\E\!\norm{X_t - \bar{X}_t}^2_F, \\
    \Xi_t&\coloneqq\frac{1}{M}\sum_{i=1}^{M}{\E\!\norm{w_t^i - \bar{w}_t}^2} = \frac{1}{M}\E\!\norm{W_t - \bar{W}_t}^2_F, \\
    \Psi_t&\coloneqq\frac{1}{M}\sum_{i=1}^{M}{\E\!\norm{g_t^i - \bar{g}_t}^2} = \frac{1}{M}\E\!\norm{G_t - \bar{G}_t}^2_F\; .
\end{align*}

First, we establish that the sequence of query points average over machines $\cbrac{\bar{x}_t}_{t\geq 1}$ is indeed an $\cbrac{\alpha_{t}}_{t\geq 1}$-weighted average of the sequence of iterates average over machines $\cbrac{\bar{w}_{t}}_{t\geq 1}$. This allows us to apply the results from \Cref{app:anytime} on Anytime SGD to analyze the consensus iterates. 
\begin{lemma}\label{lem:xbar-is-weighted-avg}
    The sequence $\cbrac{\bar{x}_t}_{t\geq 1}$ is an $\cbrac{\alpha_{t}}_{t\geq 1}$-weighted average of the sequence $\cbrac{\bar{w}_t}_{t\geq 1}$.
\end{lemma}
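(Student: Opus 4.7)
The plan is to reduce the decentralized updates to the centralized Anytime SGD recursion by averaging over machines and exploiting that the gossip matrix is doubly stochastic. Concretely, the fact that $P\boldsymbol{1}=\boldsymbol{1}$ (equivalently, multiplying a matrix $Y\in\reals^{d\times M}$ on the right by $P$ and then by $\tfrac{1}{M}\boldsymbol{1}$ returns $Y\tfrac{1}{M}\boldsymbol{1}$) is exactly the ``consensus preserving'' property noted after \Cref{def:gossip_mat}. This is the only non-trivial fact I will use; everything else is algebraic unrolling.

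First, I would take the per-coordinate average over $i\in[M]$ of the two local update steps in \Cref{alg:decent_anytime}. The local iterate update gives $\bar{w}_{t+\frac12}=\bar{w}_t-\eta\alpha_t\bar{g}_t$, and the local query-point update gives $\bar{x}_{t+\frac12}=\frac{\alpha_{1:t-1}}{\alpha_{1:t}}\bar{x}_t+\frac{\alpha_t}{\alpha_{1:t}}\bar{w}_{t+\frac12}$. Next, I would apply the gossip steps $W_{t+1}=W_{t+\frac12}P$ and $X_{t+1}=X_{t+\frac12}P$: right-multiplying by $\tfrac{1}{M}\boldsymbol{1}$ and using $P\boldsymbol{1}=\boldsymbol{1}$ shows that gossip preserves the per-coordinate average, hence $\bar{w}_{t+1}=\bar{w}_{t+\frac12}$ and $\bar{x}_{t+1}=\bar{x}_{t+\frac12}$. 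Combining,
\begin{equation*}
\bar{w}_{t+1}=\bar{w}_t-\eta\alpha_t\bar{g}_t,\qquad
\bar{x}_{t+1}=\frac{\alpha_{1:t-1}}{\alpha_{1:t}}\bar{x}_t+\frac{\alpha_{t}}{\alpha_{1:t}}\bar{w}_{t+1},
\end{equation*}
which is exactly the structure of \Cref{eq:anytime-sgd-w,eq:anytime-sgd-x} applied to the consensus sequences.

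Finally, I would unroll this recursion. Multiplying the second identity by $\alpha_{1:t}$ gives $\alpha_{1:t}\bar{x}_{t+1}=\alpha_{1:t-1}\bar{x}_t+\alpha_t\bar{w}_{t+1}$. Since the base case $x_1=w_1$ together with $\alpha_{1:0}\coloneqq 0$ yields $\bar{x}_1=\bar{w}_1$, a straightforward induction (or telescoping) produces
\begin{equation*}
\bar{x}_{t+1}=\frac{1}{\alpha_{1:t}}\sum_{\tau=1}^{t}\alpha_{\tau}\,\bar{w}_{\tau+1},
\end{equation*}
which exhibits $\bar{x}_{t+1}$ as the $\{\alpha_\tau\}$-weighted average of $\{\bar{w}_{\tau+1}\}_{\tau=1}^{t}$, proving the claim.

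I do not anticipate any real obstacle: the only content of the lemma is observing that the doubly-stochastic gossip step commutes with taking the mean across machines, so the decentralized updates collapse to the centralized Anytime SGD updates on the averaged sequences. This reduction is what will allow the subsequent analysis in \Cref{app:main_proof} to invoke \Cref{lem:main-lemma-anytime} directly on $(\bar{w}_t,\bar{x}_t)$ with $\bar{g}_t$ playing the role of the biased gradient estimate.
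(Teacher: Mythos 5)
Your proposal is correct and follows essentially the same route as the paper: both arguments rest on the single observation that the doubly stochastic gossip step preserves the per-machine average (the paper's \Cref{lem:gossip-preserves-avg}, your $P\boldsymbol{1}=\boldsymbol{1}$ computation), which collapses the decentralized updates to the centralized Anytime recursion \Cref{eq:anytime-sgd-w,eq:anytime-sgd-x} for $(\bar{w}_t,\bar{x}_t)$. Your additional explicit unrolling to $\bar{x}_{t+1}=\frac{1}{\alpha_{1:t}}\sum_{\tau=1}^{t}\alpha_{\tau}\bar{w}_{\tau+1}$ is a harmless elaboration the paper leaves implicit.
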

\begin{proof}
    Using matrix notation and the linearity of averaging, we have:
    \[
        \bar{X}_{t+1} = \bar{X}_{t+\frac{1}{2}}P = \bar{X}_{t+\frac{1}{2}} = \frac{\alpha_{1:t-1}}{\alpha_{1:t}}\bar{X}_{t} + \frac{\alpha_{t}}{\alpha_{1:t}}\bar{W}_{t+\frac{1}{2}} = \frac{\alpha_{1:t-1}}{\alpha_{1:t}}\bar{X}_{t} + \frac{\alpha_{t}}{\alpha_{1:t}}\bar{W}_{t+\frac{1}{2}}P = \frac{\alpha_{1:t-1}}{\alpha_{1:t}}\bar{X}_{t} + \frac{\alpha_{t}}{\alpha_{1:t}}\bar{W}_{t+1}\; , 
    \] 
    where the second and fourth equalities follow from \Cref{lem:gossip-preserves-avg}, implying that gossip communication preserves averages.  
\end{proof}


Next, we establish the convergence of \Cref{alg:decent_anytime}, as stated in \Cref{thm:main-result}, which we restate here for ease of reference.

\mainthm*

\begin{proof}
    For any $t\in\sbrac{T}$, define $\Delta_{t}\coloneqq\E[f(\bar{x}_t) - f^*]$. We analyze the consensus iterates, which (according to \cref{lem:xbar-is-weighted-avg}) follow the structure of Anytime SGD as defined in \Cref{eq:anytime-sgd-w,eq:anytime-sgd-x}, namely,
    \begin{align*}
        \bar{w}_{t+1} = \bar{w}_t - \eta\alpha_t\bar{g}_t,\enskip \text{ and }\quad\bar{x}_{t+1} = (1-\delta_t)\bar{x}_t + \delta_t\bar{w}_{t+1}.
    \end{align*}
    Therefore, by \Cref{lem:main-lemma-anytime}, we have:
    \begin{align}\label{eq:bound-by-anytime}
        \alpha_{1:t}\Delta_t &\leq \frac{\norm{\bar{w}_1 - x^*}^2}{\eta} + \eta\sum_{\tau=1}^{T}{\alpha_{\tau}^2\E\lVert{\bar{g}_{\tau}\rVert}^2} + 4\eta T\sum_{\tau=1}^{T}{\alpha_{\tau}^2\E\lVert{\E \bar{g}_{\tau} - \nabla f(\bar{x}_{\tau})\rVert}^2} \nonumber \\ &= \frac{D_1^2}{\eta} + \eta\underbrace{\sum_{\tau=1}^{T}{\alpha_{\tau}^2\E\lVert{\bar{g}_{\tau}\rVert}^2}}_{(A)} + 4\eta T\cdot B_{T}\; ,
    \end{align}
    where $B_{T}\coloneqq\sum_{\tau=1}^{T}{\alpha_{\tau}^2\E\lVert{\E \bar{g}_{\tau} - \nabla f(\bar{x}_{\tau})\rVert}^2}$, and we also used $\bar{w}_1 = w_1$. 

    \paragraph{Bounding $(A)$. } We focus on $\E\norm{\bar{g}_\tau}^2$. By adding and subtracting $\E\bar{g}_\tau$ and $\nabla f(\bar{x}_\tau)$, and applying the inequality $\norm{a+b+c}^2\leq 3\norm{a}^2 + 3\norm{b}^2 + 3\norm{c}^2$, we obtain:
    \begin{align*}
        \E\lVert\bar{g}_{\tau}\rVert^{2} &= \E\lVert\bar{g}_{\tau} - \E\bar{g}_{\tau} + \E\bar{g}_{\tau} - \nabla f(\bar{x}_{\tau}) + \nabla f(\bar{x}_{\tau})\rVert^{2} \\ &\leq 3\E\lVert{\bar{g}_{\tau} - \E\bar{g}_{\tau}\rVert}^2 + 3\E\lVert{\E\bar{g}_{\tau} - \nabla f(\bar{x}_{\tau})\rVert}^2 + 3\E\lVert\nabla f(\bar{x}_{\tau})\rVert^2 \\ &\leq \frac{3\sigma^2}{M} + 3\E\lVert{\E\bar{g}_{\tau} - \nabla f(\bar{x}_{\tau})\rVert}^2 + 3\E\lVert\nabla f(\bar{x}_{\tau})\rVert^2\; ,
    \end{align*}
    where the last inequality holds because $\cbrac{g_{\tau}^{i}-\nabla f_i(x_{\tau}^i)}_{i\in\sbrac{M}}$ are independent, zero-mean, and have variance at most $\sigma^2$: 
    \begin{align*}
        \E\lVert{\bar{g}_{\tau} - \E\bar{g}_{\tau}\rVert}^2 = \E\norm{\frac{1}{M}\sum_{i=1}^{M}{\brac{g_{\tau}^i - \nabla f_{i}(x_{\tau}^i)}}}^2 = \frac{1}{M^2}\sum_{i=1}^{M}{\E\norm{g_t^i - \nabla f_{i}(x_{\tau}^i)}^2} \leq \frac{\sigma^2}{M}\; .
    \end{align*}
    Thus, $(A)$ is bounded as follows,
    \begin{align*}
        \sum_{\tau=1}^{T}{\alpha_t^2\E\lVert{\bar{g}_{\tau}\rVert}^2} &\leq \frac{3\sigma^2}{M}\sum_{\tau=1}^{T}{\alpha_{\tau}^2} + 3 B_{T} + 3\sum_{\tau=1}^{T}{\alpha_{\tau}^2\E\lVert{\nabla f(\bar{x}_{\tau})\rVert}^2}\; .
    \end{align*}
    Plugging this bound back into \eqref{eq:bound-by-anytime}, we get:
    \begin{align}\label{eq:bound-after-bounding-A}
        \alpha_{1:t}\Delta_t &\leq \frac{D_1^2}{\eta} + \eta\brac{\frac{3\sigma^2}{M}\sum_{\tau=1}^{T}{\alpha_{\tau}^2} + 3 V_{T} + 3\sum_{\tau=1}^{T}{\alpha_{\tau}^2\E\lVert{\nabla f(\bar{x}_{\tau})\rVert}^2}} + 4\eta T\cdot V_{T} \nonumber \\ &= \frac{D_1^2}{\eta} + \frac{3\sigma^2\eta}{M}\sum_{\tau=1}^{T}{\alpha_{\tau}^2} + \brac{3\eta + 4\eta T}\cdot B_{T} + 3\eta\sum_{\tau=1}^{T}{\alpha_{\tau}^2\E\lVert{\nabla f(\bar{x}_{\tau})\rVert}^2} \nonumber \\ &\leq \frac{D_1^2}{\eta} + \frac{3\sigma^2\eta}{M}\sum_{\tau=1}^{T}{\alpha_{\tau}^2} + 8\eta T\cdot B_{T} + 3\eta\sum_{\tau=1}^{T}{\alpha_{\tau}^2\E\lVert{\nabla f(\bar{x}_{\tau})\rVert}^2}\; .
    \end{align}
    For linear weights $\alpha_t=t$ and learning rate upper bounded by $\frac{\rho^2}{K}$, \Cref{lem:bias-bound} provides the following bound on $B_T$ (see proof in \Cref{subapp:proof-bias-bound} below):
    \[
        B_{T}\leq \frac{K^2\tilde{\sigma}^2\eta^2}{2\rho^4}\sum_{\tau=1}^{T}{\alpha_{\tau}^2}\; .
    \]
    Substituting this bound back to \Cref{eq:bound-after-bounding-A}, we obtain:
    \begin{align*}
        \alpha_{1:t}\Delta_t &\leq \frac{D_1^2}{\eta} + \frac{3\sigma^2\eta}{M}\sum_{\tau=1}^{T}{\alpha_{\tau}^2} + \frac{4K^2\tilde{\sigma}^2\eta^{3} T}{\rho^{4}}\sum_{\tau=1}^{T}{\alpha_{\tau}^2} + 3\eta\sum_{\tau=1}^{T}{\alpha_{\tau}^2\E\lVert{\nabla f(\bar{x}_{\tau})\rVert}^2}\; .
    \end{align*}
    The rightmost term is be bounded as follows:
    \begin{align*}
        \sum_{\tau=1}^{T}{\alpha_t^2\E\lVert{\nabla f(\bar{x}_{\tau})\rVert}^2} &\leq 2L\sum_{\tau=1}^{T}{\alpha_{\tau}^2\Delta_{\tau}} \leq 4L\sum_{\tau=1}^{T}{\alpha_{1:\tau}\Delta_{\tau}}\; ,
    \end{align*}
    where the first inequality follows from \Cref{lem:self-bounding} (i.e., $\lVert{\nabla f(\bar{x}_{\tau})}\rVert^2\leq 2L(f(\bar{x}_{\tau}) - f^*)$), and the second inequality arises from $\alpha_t^2=t^2\leq t(t+1)=2\alpha_{1:t}$. Therefore, 
    \begin{align*}
        \alpha_{1:t}\Delta_t &\leq \frac{D_1^2}{\eta} + \frac{3\sigma^2\eta}{M}\sum_{\tau=1}^{T}{\alpha_{\tau}^2} + \frac{4K^2\tilde{\sigma}^2\eta^{3} T}{\rho^{4}}\sum_{\tau=1}^{T}{\alpha_{\tau}^2} +12L\eta\sum_{\tau=1}^{T}{\alpha_{1:\tau}\Delta_{\tau}} \\ &\leq \frac{D_1^2}{\eta} + \frac{3\sigma^2\eta}{M}\sum_{\tau=1}^{T}{\alpha_{\tau}^2} + \frac{4K^2\tilde{\sigma}^2\eta^{3} T}{\rho^{4}}\sum_{\tau=1}^{T}{\alpha_{\tau}^2} +\frac{1}{2T}\sum_{\tau=1}^{T}{\alpha_{1:\tau}\Delta_{\tau}}\; ,
    \end{align*}
    where the last inequality holds true for $\eta\leq \frac{1}{24LT}$. Applying \Cref{lem:self-bounding-sums} with $a_t = \alpha_{1:t}\Delta_{t}$ and $b=\frac{D_1^2}{\eta} + \frac{3\sigma^2\eta}{M}\sum_{\tau=1}^{T}{\alpha_{\tau}^2} + \frac{4K^2\tilde{\sigma}^2\eta^{3} T}{\rho^{4}}\sum_{\tau=1}^{T}{\alpha_{\tau}^2}$ yields (for $t=T$):
    \begin{align*}
        \alpha_{1:T}\Delta_T &\leq 2b \\ &\leq \frac{2D_1^2}{\eta} + \frac{6\sigma^2\eta}{M}\sum_{\tau=1}^{T}{\alpha_{\tau}^2} + \frac{8K^2\tilde{\sigma}^2\eta^{3} T}{\rho^{4}}\sum_{\tau=1}^{T}{\alpha_{\tau}^2} \\ &\leq \frac{2D_1^2}{\eta} + \frac{6\sigma^2\eta T^3}{M} + \frac{8K^2\tilde{\sigma}^2\eta^3 T^{4}}{\rho^4}\; ,
    \end{align*}
    where the second inequality holds as $\sum_{t=1}^{T}{\alpha_{t}^2}=\sum_{t=1}^{T}{t^2}\leq T^3$. Thus, employing \Cref{lem:min-of-lrs} with $A=2D_1^2$, $B=\frac{6\sigma^2 T^3}{M}$, $C=\frac{8K^2\tilde{\sigma}^2 T^{4}}{\rho^{4}}$, and $\eta_1=\min\cbrac{\frac{1}{24LT}, \frac{\rho^2}{K}}$, we get: 
    \begin{align*}
        \alpha_{1:T}\Delta_{T} &\leq \frac{A}{\eta_1} + 2\sqrt{AB} + 2A^{3/4}C^{1/4} \\ &\leq 2D_1^2\brac{24LT + \frac{K}{\rho^2}} + 2\sqrt{\frac{12D_1^2\sigma^2 T^{3}}{M}} + 2\brac{2D_1^2}^{3/4}\brac{\frac{8K^2\tilde{\sigma}^2 T^{4}}{\rho^{4}}}^{1/4} \\ &=
        \frac{4\sqrt{3}D_1\sigma T^{3/2}}{\sqrt{M}} + \frac{4\sqrt{2}D_1^{3/2}\sqrt{K\tilde{\sigma}}T}{\rho} + 48LD_1^2 T + \frac{2 KD_1^2}{\rho^2}\; .
    \end{align*}
    Finally, dividing by $\alpha_{1:T}=\frac{T(T+1)}{2}\geq \frac{T^2}{2}$, we obtain the result as:
    \begin{align*}
        \Delta_{T} = \E[f(\bar{x}_T) - f^*] &\leq \frac{8\sqrt{3}D_1\sigma}{\sqrt{MT}} + \frac{8\sqrt{2}D_1^{3/2}\sqrt{K\tilde{\sigma}}}{\rho T} + \frac{96LD_1^2}{T} + \frac{4K D_1^2}{\rho^{2}T^2} \\ &= \O\brac{\frac{D_1\sigma}{\sqrt{MT}} + \frac{D_1^{3/2}\sqrt{L(\sigma+\zeta)}}{\rho T} + \frac{LD_1^2}{T} + \frac{LD_1^2}{\rho^2 T^2}}\; .
    \end{align*}
\end{proof}

\subsection{Proof of \Cref{cor:per_machine_bound}}\label{subapp:per-machine-bound}
Next, we demonstrate the convergence of the local iterates. \\ \\ 
\begin{proof}
Let $i\in\sbrac{M}$. By the smoothness of the $f$, it holds that: 
\begin{align*}
\E[f(x_T^i)-f^*] &=
\E[f(x_T^i)-f(\bar{x}_T)] + \E[f(\bar{x}_T)-f^*] \\ &\leq\E\sbrac{\nabla f(\bar{x}_T)^\top(x_T^i-\bar{x}_T)+\frac{L}{2}\norm{x_T^i-\bar{x}_T}^2} + \E[f(\bar{x}_T)-f^*] \\ &\leq \frac{1}{2\theta}\E\norm{\nabla f(\bar{x}_T)}^2 + \frac{\theta+L}{2}\E\norm{x_T^i - \bar{x}_T}^2 + \E[f(\bar{x}_T)-f^*]\; ,
\end{align*}
where the last inequality holds for any $\theta>0$ due to Young's inequality, $a^\top b \leq \frac{1}{2\theta}\lVert a\rVert^2 + \frac{\theta}{2}\lVert b\rVert^2$. Setting $\theta=L$ and using \Cref{lem:self-bounding} to upper bound $\norm{\nabla f(\bar{x}_T)}^2$ by $2L(f(\bar{x}_T) - f^*)$, we get:
\begin{align*}
    \E[f(x_T^i)-f^*] &\leq \frac{1}{2L}\E\norm{\nabla f(\bar{x}_T)}^2 + L\E\norm{x_T^i - \bar{x}_T}^2 + \E[f(\bar{x}_T)-f^*] \\ &\leq 2\E[f(\bar{x}_T)-f^*] + L\E\norm{x_T^i - \bar{x}_T}^2 \\ &\leq 2\E[f(\bar{x}_T)-f^*] + L\sum_{i=1}^{M}{\E\norm{x_T^i - \bar{x}_T}^2} \\ &= 2\E[f(\bar{x}_T)-f^*] + LM\Gamma_{T}\; .
\end{align*}
Using \Cref{lem:Gamma-explicit}, which applies under the conditions of \Cref{thm:main-result}, we can bound $\Gamma_T$ by $2560\tilde{\sigma}^2\eta^2/\rho^4$, where $\tilde{\sigma}^2\coloneqq 2\sigma^2 + \zeta^2$. In addition, the learning rate in \Cref{thm:main-result} satisfies $\eta^2\leq D_1\rho^2 / 2K\tilde{\sigma}T^2$, where $K\coloneqq \sqrt{5120}L$. Therefore, we obtain:
\begin{align*}
    \E[f(x_T^i)-f^*] &\leq 2\E[f(\bar{x}_T)-f^*] + \frac{2560L M\tilde{\sigma}^2\eta^2}{\rho^4} \leq 2\E[f(\bar{x}_T)-f^*] + \frac{8\sqrt{5} MD_1\tilde{\sigma}}{\rho^2 T^2}\; . 
\end{align*}
Plugging the bound on $\E[f(\bar{x}_T)-f^*]$ from \Cref{thm:main-result} establishes the result:
\begin{align*}
    \E[f(x_T^i)-f^*] &\leq \frac{16\sqrt{3}D_1\sigma}{\sqrt{MT}} + \frac{16\sqrt{2}D_1^{3/2}\sqrt{K\tilde{\sigma}}}{\rho T} + \frac{192LD_1^2}{T} + \frac{8K D_1^2}{\rho^{2}T^2} + \frac{8\sqrt{5} MD_1\tilde{\sigma}}{\rho^2 T^2}\\ &= \O\brac{\frac{D_1\sigma}{\sqrt{MT}} + \frac{D_1^{3/2}\sqrt{L\tilde{\sigma}}}{\rho T} + \frac{LD_1^2}{T} + \frac{MD_1\tilde{\sigma}}{\rho^2 T^2}}\; .
\end{align*} 
\end{proof}
In terms on the total of samples $N=MT$, the established rate is given by:
\[
    \E[f(x_T^i)-f^*] \leq \O\brac{\frac{D_1\sigma}{\sqrt{N}} + \frac{MD_1^{3/2}\sqrt{L\tilde{\sigma}}}{\rho N} + \frac{MLD_1^2}{N} + \frac{M^3D_1\tilde{\sigma}}{\rho^2 N^2}}\; .
\]
Therefore, ignoring the dependence on $L, D_1, \sigma$ and $\zeta$ (i.e., assuming they all equal $1$), the parallelism bound (i.e., the maximal asymptotic $M$ for which the rate is $\O(1/\sqrt{N})$) is:
\[
    M\leq \O\brac{\min\cbrac{\rho\sqrt{N}, \sqrt{N}, \rho^{2/3}\sqrt{N}}} = \O(\rho\sqrt{N})\; .
\]

\subsection{Proof of \Cref{lem:bias-bound}}\label{subapp:proof-bias-bound}
\begin{proof}
    We aim to prove that:
    \[
        B_{T}\coloneqq\E\sbrac{\sum_{\tau=1}^{T}{\alpha_{\tau}^2\lVert{\E\bar{g}_{\tau} - \nabla f(\bar{x}_{\tau})\rVert}^2}} \leq \frac{K^2\tilde{\sigma}^2\eta^2}{2\rho^4}\sum_{\tau=1}^{T}{\alpha_{\tau}^2}\; ,
    \]
    where $K^2\coloneqq 5120 L^2$ and $\tilde{\sigma}^2\coloneqq 2\sigma^2+\zeta^2$.
    
    Focusing on $\E\lVert{\E\bar{g}_{\tau} - \nabla f(\bar{x}_{\tau})\rVert}^2$ and using the convexity of $\norm{\cdot}^2$, we apply Jensen's inequality to obtain:
    \begin{align*}
        \E\!\norm{\E\bar{g}_{\tau} - \nabla f(\bar{x}_{\tau})}^2 &= \E\norm{\frac{1}{M}\sum_{i=1}^{M}{\brac{\nabla f_{i}(x_{\tau}^i) - \nabla f_{i}(\bar{x}_{\tau})}}}^2 \\ &\leq \frac{1}{M}\sum_{i=1}^{M}{\E\!\norm{\nabla f_{i}(x_{\tau}^i) - \nabla f_{i}(\bar{x}_{\tau})}^2} \\ &\leq \frac{L^2}{M}\sum_{i=1}^{M}{\E\!\norm{x_{\tau}^i - \bar{x}_{\tau}}^2} \\ &= L^2\Gamma_{\tau}\; ,
    \end{align*}
    where the second inequality follows from the smoothness of $f_i$. Using \Cref{lem:Gamma-explicit}, which applies specifically to linear weights $\alpha_t=t$ and a learning rate bounded as $\eta\leq \frac{\rho^2}{K}= \frac{\rho^2}{8\sqrt{80}L}$ we bound $\Gamma_\tau$ and derive a corresponding bound for $B_{T}$:
    \[
        B_{T} = \sum_{\tau=1}^{T}{\alpha_{\tau}^2\E\lVert{\E\bar{g}_{\tau} - \nabla f(\bar{x}_{\tau})\rVert}^2} \leq L^2\sum_{\tau=1}^{T}{\alpha_{\tau}^2 \Gamma_{\tau}} \leq \frac{2560L^2\tilde{\sigma}^2\eta^2}{\rho^4}\sum_{\tau=1}^{T}{\alpha_{\tau}^2} = \frac{K^2\tilde{\sigma}^2\eta^2}{2\rho^4}\sum_{\tau=1}^{T}{\alpha_{\tau}^2}\; ,
    \]
which concludes the proof.
\end{proof}

\section{Consensus Recursion Analysis}\label{app:recursion-analysis}
In this section, we analyze the consensus distances $\Gamma_t, \Xi_{t}$, and $\Psi_t$. We begin by observing that $\Gamma_{t}$ follows a recursive relation (\Cref{eq:most_general_recursion}), where $\Gamma_{t+1}$ depends not only on $\Gamma_{t}$ but also on $\Xi_{t}$ and $\Psi_{t}$. Similarly, $\Xi_{t}$ satisfies its own recursion, with $\Xi_{t+1}$ depending on $\Xi_{t}$ and $\Psi_{t}$ (\Cref{lem:Xi-recursion}).

To simplify the analysis, we first solve the recursion for $\Xi_{t}$, allowing us to express $\Gamma_{t}$ in terms of $\Psi_{t}$ alone, eliminating the dependence on $\Xi_{t}$ (\Cref{lem:Gamma-recursion}). Finally, in \Cref{lem:Psi-bound}, we bound $\Psi_{t}$ in terms of the problem parameters ($\sigma$ and $\zeta$) and $\Gamma_{t}$, enabling us to explicitly solve the recursion for $\Gamma_{t}$ (\Cref{lem:Gamma-explicit}).

\subsection{Consensus of the Query Points}
The next result establishes a recursion for the consensus distance of the query points $X_t$.
\begin{lemma}\label{lem:Gamma-recursion}
    For all $t\geq 1$, the consensus distance of the query points (i.e., averaged iterates) $\Gamma_t$ satisfies the following recursion:
    \[
        \Gamma_{t+1}\leq \brac{1-\frac{\rho}{2}}\Gamma_{t} + \frac{4\eta^2\alpha_t^2\delta_t^2}{\rho}\brac{\Psi_{t} + \frac{2}{\rho}\sum_{\tau=1}^{t}{\brac{1-\frac{\rho}{2}}^{t-1-\tau}\Psi_{\tau}}}\; .
    \]
\end{lemma}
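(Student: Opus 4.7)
The plan is to combine gossip contraction with the Anytime averaging step and then substitute in the companion recursion for $\Xi_{t}$. Since $P$ is doubly stochastic, gossip preserves row averages, so $\bar{X}_{t+1} = \bar{X}_{t+\frac{1}{2}}$, and \Cref{prop:gossip-contraction} applied to $X_{t+1} = X_{t+\frac{1}{2}}P$ would give $\|X_{t+1}-\bar{X}_{t+1}\|_F^2 \leq (1-\rho)\|X_{t+\frac{1}{2}} - \bar{X}_{t+\frac{1}{2}}\|_F^2$. Using the local averaging $X_{t+\frac{1}{2}} = (1-\delta_t)X_t + \delta_t W_{t+\frac{1}{2}}$ together with the linearity of the row-average operator $X \mapsto X\cdot \frac{1}{M}\mathbf{1}\mathbf{1}^\top$, I would decompose
\[
    X_{t+\frac{1}{2}} - \bar{X}_{t+\frac{1}{2}} = (1-\delta_t)(X_t - \bar{X}_t) + \delta_t(W_{t+\frac{1}{2}} - \bar{W}_{t+\frac{1}{2}}).
\]

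Next, I would apply Young's inequality $\|A+B\|_F^2 \leq (1+c)\|A\|_F^2 + (1+c^{-1})\|B\|_F^2$ with the tailored choice $c = \rho/(2(1-\rho))$, so that $(1-\rho)(1+c) = 1-\rho/2$ and $(1-\rho)(1+c^{-1}) \leq 2/\rho$. Combined with the gossip contraction and the crude bound $(1-\delta_t)^2 \leq 1$, this would yield
\[
    \|X_{t+1}-\bar{X}_{t+1}\|_F^2 \leq (1-\tfrac{\rho}{2})\|X_t-\bar{X}_t\|_F^2 + \tfrac{2\delta_t^2}{\rho}\|W_{t+\frac{1}{2}}-\bar{W}_{t+\frac{1}{2}}\|_F^2.
\]
Expanding $W_{t+\frac{1}{2}}-\bar{W}_{t+\frac{1}{2}} = (W_t-\bar{W}_t) - \eta\alpha_t(G_t-\bar{G}_t)$, applying $\|a-b\|^2 \leq 2\|a\|^2 + 2\|b\|^2$, dividing by $M$, and taking expectations would give the intermediate recursion
\[
    \Gamma_{t+1} \leq (1-\tfrac{\rho}{2})\Gamma_t + \tfrac{4\delta_t^2}{\rho}\Xi_t + \tfrac{4\eta^2\alpha_t^2\delta_t^2}{\rho}\Psi_t.
\]

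Finally, I would invoke the analogous recursion $\Xi_{t+1} \leq (1-\rho/2)\Xi_t + \tfrac{2\eta^2\alpha_t^2}{\rho}\Psi_t$ for the iterates (\Cref{lem:Xi-recursion}), which is proved by the same two-step argument applied directly to $W_{t+1} = W_{t+\frac{1}{2}}P$. Unrolling this from $\Xi_1 = 0$ and using monotonicity of linear weights ($\alpha_\tau \leq \alpha_t$ for $\tau \leq t$) would give $\Xi_t \leq \tfrac{2\eta^2\alpha_t^2}{\rho}\sum_{\tau=1}^{t-1}(1-\rho/2)^{t-1-\tau}\Psi_\tau$. Substituting this into the intermediate recursion and factoring out $4\eta^2\alpha_t^2\delta_t^2/\rho$ would yield the stated bound (with the sum harmlessly extended from $t-1$ up to $t$).

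The hard part will be calibrating the Young's parameter so that the coefficient of $\Gamma_t$ lands exactly at $1-\rho/2$ while keeping the cross coefficient at order $\delta_t^2/\rho$; a naive split such as $\|A+B\|^2 \leq 2\|A\|^2 + 2\|B\|^2$ would inflate the leading factor to $2(1-\rho)$, destroying the contraction needed when iterating this recursion to bound $\sum_\tau \alpha_\tau^2 \Gamma_\tau$ in \Cref{lem:bias-bound}. A secondary subtlety is pulling $\alpha_t^2$ out of the $\Xi_t$ sum, which silently relies on monotonicity of the weights --- valid for the linear schedule of \Cref{thm:main-result} but not for an arbitrary sequence $\{\alpha_t\}_{t\geq 1}$.
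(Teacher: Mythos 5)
Your proposal is correct and follows essentially the same route as the paper: gossip contraction, the Anytime decomposition of $X_{t+\frac{1}{2}}$ with a Young's split calibrated to land at $\brac{1-\frac{\rho}{2}}\Gamma_t + \frac{2\delta_t^2}{\rho}\norm{W_{t+\frac{1}{2}}-\bar{W}_{t+\frac{1}{2}}}_F^2$, a factor-$2$ split of the SGD step, and substitution of the unrolled $\Xi_t$ recursion. The only cosmetic difference is your choice of Young's parameter ($c=\rho/(2(1-\rho))$ versus the paper's $\beta=2/\rho$), which yields the same coefficients after the same elementary bounds, and your two caveats (the harmless extension of the sum to $\tau=t$ and the reliance on non-decreasing weights in \Cref{lem:Xi-explicit}) both match the paper.
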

\begin{proof}
    The gossip averaging of the query points leads to the following inequality:
    \begin{align}\label{eq:X-gossip}
        M\Gamma_{t+1} = \E\lVert{X_{t+1} - \bar{X}_{t+1}}\rVert_F^2 = \E\lVert{X_{t+\frac{1}{2}}P - \bar{X}_{t+\frac{1}{2}}}\rVert_F^2 \leq  (1-\rho)\E\lVert X_{t+\frac{1}{2}} - \bar{X}_{t+\frac{1}{2}}\rVert_F^2\; ,
    \end{align}
    where we used \Cref{lem:gossip-preserves-avg}, stating that $\bar{X}_{t+\frac{1}{2}}P = \bar{X}_{t+\frac{1}{2}}$, and the mixing property of the gossip matrix (\Cref{prop:gossip-contraction}). Substituting the query points averaging, $X_{t+\frac{1}{2}} = (1-\delta_t)X_{t} + \delta_t W_{t+\frac{1}{2}}$, and using $\norm{a+b}^2\leq(1+\beta^{-1})\norm{a}^2 + (1+\beta)\norm{b}^2$ (which holds for any $\beta>0$), we obtain:
    \begin{align}\label{eq:Gamma-tplus1}
        M\Gamma_{t+1}&\leq (1-\rho)\E\lVert X_{t+\frac{1}{2}} - \bar{X}_{t+\frac{1}{2}}\rVert_F^2 \nonumber \\ &= (1-\rho)\E\lVert{(1-\delta_t)X_{t} + \delta_t W_{t+\frac{1}{2}} - ((1-\delta_t)\bar{X}_t + \delta_t\bar{W}_{t+\frac{1}{2}})}\rVert_F^2 \nonumber \\ &\leq (1-\rho)\brac{1+\frac{1}{\beta}}(1-\delta_t)^2\E\lVert X_{t} - \bar{X}_{t}\rVert_F^2 + (1-\rho)\brac{1+\beta}\delta_t^2\E\lVert W_{t+\frac{1}{2}} - \bar{W}_{t+\frac{1}{2}}\rVert_F^2 \nonumber \\ &\leq (1-\rho)\brac{1+\frac{1}{\beta}}M\Gamma_{t} + (1-\rho)(1+\beta)\delta_t^2 \underbrace{\E\lVert W_{t+\frac{1}{2}} - \bar{W}_{t+\frac{1}{2}}\rVert_F^2}_{(\star)}\; ,
    \end{align}
    where the last inequality follows from $(1-\delta_t)^2\leq 1$. Focusing on $(\star)$ and plugging the iterate update rule, we have for all $\gamma>0$:
    \begin{align}\label{eq:W-half-bound}
        \E\lVert W_{t+\frac{1}{2}} - \bar{W}_{t+\frac{1}{2}}\rVert_F^2 &= \E\lVert{W_t - \eta\alpha_t G_t - (\bar{W}_t - \eta\alpha_t\bar{G}_t)}\rVert_F^2 \nonumber \\ &\leq \brac{1+\frac{1}{\gamma}}\E\lVert{W_t - \bar{W}_t }\rVert_F^2 + \brac{1+\gamma}\eta^2\alpha_t^2\E\lVert{G_t -\bar{G}_t}\rVert_F^2 \nonumber \\ &= \brac{1+\frac{1}{\gamma}}M\Xi_{t} + (1+\gamma)\eta^2\alpha_t^2 M\Psi_t\; .
    \end{align}
    Substituting \eqref{eq:W-half-bound} into \eqref{eq:Gamma-tplus1}, setting $\beta=2/\rho$ and $\gamma=1$, and dividing by $M$, we get: 
    \begin{align}\label{eq:most_general_recursion}
        \Gamma_{t+1} &\leq \brac{1-\rho}\brac{1+\frac{\rho}{2}}\Gamma_{t} + (1-\rho)\brac{1 + \frac{2}{\rho}}\delta_t^2\brac{2\Xi_t + 2\eta^2\alpha_t^2\Psi_t} \leq \brac{1-\frac{\rho}{2}}\Gamma_{t} + \frac{4\delta_t^2}{\rho}\brac{\Xi_t + \eta^2\alpha_t^2\Psi_t}\;, 
    \end{align}
    where in the second inequality we used $(1-\rho)(1+\frac{\rho}{2}) = 1-\frac{\rho}{2} - \frac{\rho^2}{2}\leq 1-\frac{\rho}{2}$ and $(1-\rho)(1+\frac{2}{\rho})=-1-\rho+\frac{2}{\rho}\leq\frac{2}{\rho}$. Note that we derived a recursion for $\Gamma_{t}$, which also involves $\Xi_{t}$. The consensus distance of the iterates, $\Xi_{t}$, satisfies its own recursion as stated in \Cref{lem:Xi-recursion}, from which an explicit bound is provided in \Cref{lem:Xi-explicit}. Substituting this bound yields: 
    \begin{align}\label{eq:Gamma-recursion}
        \Gamma_{t+1} &\leq \brac{1-\frac{\rho}{2}}\Gamma_{t} + \frac{4\delta_t^2}{\rho}\brac{\frac{2\eta^2\alpha_t^2}{\rho}\sum_{\tau=1}^{t-1}{\brac{1-\frac{\rho}{2}}^{t-1-\tau}\Psi_{\tau}} + \eta^2\alpha_t^2\Psi_{t}} \nonumber \\ &= \brac{1-\frac{\rho}{2}}\Gamma_{t} + \frac{4\eta^2\alpha_t^2\delta_t^2}{\rho}\brac{\Psi_{t} + \frac{2}{\rho}\sum_{\tau=1}^{t-1}{\brac{1-\frac{\rho}{2}}^{t-1-\tau}\Psi_{\tau}}}\; ,
    \end{align}
    thus concluding the proof.
\end{proof}
Note that the sequence $\Psi_{1},\ldots,\Psi_{t}$ appears in the recursion given in \Cref{lem:Gamma-recursion}. We next provide an upper bound on $\Psi_{t}$ in terms of $\Gamma_t$, which will enable us to derive an explicit bound for $\Gamma_t$.

\begin{lemma}\label{lem:Psi-bound}
    For every $t\geq 1$, $\Psi_{t}\leq 5\brac{2\sigma^2 + \zeta^2} + 10L^2\Gamma_t$.
\end{lemma}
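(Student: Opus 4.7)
The plan is to bound $\Psi_t = \frac{1}{M}\sum_i \E\|g_t^i - \bar{g}_t\|^2$ by decomposing $g_t^i - \bar{g}_t$ into five pieces, each of which is directly controllable by one of the standing assumptions. Concretely, I would write, with $\bar{h}_t \coloneqq \frac{1}{M}\sum_j \nabla f_j(x_t^j)$,
\begin{align*}
g_t^i - \bar{g}_t &= \underbrace{\bigl(g_t^i - \nabla f_i(x_t^i)\bigr)}_{\text{per-node noise}} - \underbrace{\bigl(\bar{g}_t - \bar{h}_t\bigr)}_{\text{averaged noise}} + \underbrace{\bigl(\nabla f_i(x_t^i) - \nabla f_i(\bar{x}_t)\bigr)}_{\text{per-node smoothness gap}} \\
&\quad+ \underbrace{\bigl(\nabla f_i(\bar{x}_t) - \nabla f(\bar{x}_t)\bigr)}_{\text{heterogeneity at }\bar{x}_t} + \underbrace{\bigl(\nabla f(\bar{x}_t) - \bar{h}_t\bigr)}_{\text{averaged smoothness gap}}.
\end{align*}
Applying the standard inequality $\|\sum_{k=1}^{5} a_k\|^2 \leq 5 \sum_{k=1}^{5}\|a_k\|^2$ then reduces the problem to bounding the expectation of each of the five squared norms.

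Each of the five terms is handled by a single assumption: Assumption~\ref{assump:noise} gives $\E\|g_t^i - \nabla f_i(x_t^i)\|^2 \leq \sigma^2$, and since the noises are independent across machines, $\E\|\bar{g}_t - \bar{h}_t\|^2 \leq \sigma^2/M$. The $L$-smoothness of $f_i$ (which for convex $f_i$ implies $\|\nabla f_i(y) - \nabla f_i(x)\| \leq L\|y-x\|$) yields $\E\|\nabla f_i(x_t^i) - \nabla f_i(\bar{x}_t)\|^2 \leq L^2\,\E\|x_t^i - \bar{x}_t\|^2$, which becomes $L^2 \Gamma_t$ after averaging over $i$. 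The heterogeneity assumption directly bounds $\frac{1}{M}\sum_i \E\|\nabla f_i(\bar{x}_t) - \nabla f(\bar{x}_t)\|^2 \leq \zeta^2$. Finally, writing $\nabla f(\bar{x}_t) - \bar{h}_t = \frac{1}{M}\sum_j(\nabla f_j(\bar{x}_t) - \nabla f_j(x_t^j))$ and using Jensen together with $L$-smoothness gives $\E\|\nabla f(\bar{x}_t) - \bar{h}_t\|^2 \leq L^2 \Gamma_t$. Averaging the five bounds over $i\in[M]$ and using $1+1/M \leq 2$ produces $\Psi_t \leq 5\sigma^2(1+1/M) + 5\zeta^2 + 10L^2\Gamma_t \leq 5(2\sigma^2+\zeta^2) + 10L^2\Gamma_t$.

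The only real choice is the decomposition itself: a naive three-term split (noise, smoothness, heterogeneity) produces the right qualitative shape but does not cleanly separate the per-node and averaged contributions, which is why introducing the auxiliary quantity $\bar{h}_t$ and splitting into five terms is cleaner. The independence of the stochastic noises across machines, which is what lets the averaged-noise term lose a factor of $1/M$, is a very mild use of Assumption~\ref{assump:noise}; it is needed only to get a bound of the claimed form rather than a strictly weaker constant, so I do not expect any genuine obstacle in the argument.
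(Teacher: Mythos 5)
Your proposal is correct and is essentially identical to the paper's proof: the five-way decomposition (per-node noise, averaged noise, per-node smoothness gap, averaged smoothness gap, heterogeneity at $\bar{x}_t$) matches the paper's add-and-subtract of $\E g_t^i = \nabla f_i(x_t^i)$, $\E\bar{g}_t$, $\nabla f_i(\bar{x}_t)$, and $\nabla f(\bar{x}_t)$ term for term, and the individual bounds and the final constants agree.
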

\begin{proof}
    Using $\lVert{\sum_{n=1}^{N}{u_n}}\rVert^2 \leq N\sum_{n=1}^{N}{\lVert{u_n}\rVert^2}$, we have: 
    \begin{align}\label{eq:Psi-decompose}
        \Psi_t &= \frac{1}{M}\sum_{i=1}^{M}{\E\lVert g_t^i - \bar{g}_t\rVert^2} \nonumber \\ &= \frac{1}{M}\sum_{i=1}^{M}{\E\lVert g_t^i - \E g_t^i + \E g_t^i -\bar{g}_t + \E \bar{g}_t - \E\bar{g}_t -\nabla f_i(\bar{x}_{t}) + \nabla f_i(\bar{x}_{t}) - \nabla f(\bar{x}_{t}) + \nabla f(\bar{x}_{t})\rVert^2} \nonumber \\ &\leq 
        \frac{5}{M}\sum_{i=1}^{M}{\brac{\E\lVert{g_t^i - \E g_t^i}\rVert^2 + \E\lVert{\bar{g}_t - \E\bar{g}_t}\rVert^2 + \E\lVert{\E g_t^i - \nabla f_i(\bar{x}_{t})}\rVert^2 + \E\lVert{\E\bar{g}_t - \nabla f(\bar{x}_{t})}\rVert^2 + \E\lVert{\nabla f_i(\bar{x}_t) - \nabla f(\bar{x}_{t})}\rVert^2}}\; .
    \end{align}
    Next, we individually bound each sum.
    \paragraph{Bounding $\frac{1}{M}\sum_{i=1}^{M}{\E\lVert{g_t^i - \E g_t^i}\rVert^2}$. } By the bounded variance assumption, it holds that: 
    \[
        \frac{1}{M}\sum_{i=1}^{M}{\E\lVert{g_t^i - \E g_t^i}\rVert^2} = \frac{1}{M}\sum_{i=1}^{M}{\E\lVert{\nabla f_i(x_t^i, z_t^i) - \nabla f_i(x_t^i)}\rVert^2} \leq \frac{1}{M}\sum_{i=1}^{M}{\sigma^2} = \sigma^2\; .
    \]
    \paragraph{Bounding $\E\lVert{\bar{g}_t - \E\bar{g}_t}\rVert^2$. } Since $g_t^i - \E g_t^i$ are independent, zero mean, and have variance bounded by $\sigma^2$, it follows that: 
    \[
       \E\lVert{\bar{g}_t - \E\bar{g}_t}\rVert^2 = \E\norm{\frac{1}{M}\sum_{i=1}^{M}{\brac{g_t^i - \E g_t^i}}}^2 = \frac{1}{M^2}\sum_{i=1}^{M}\E\lVert{g_t^i - \E g_t^i}\rVert^2 \leq \frac{\sigma^2}{M}\; .
    \]
    \paragraph{Bounding $\frac{1}{M}\sum_{i=1}^{M}{\E\lVert{\E g_t^i - \nabla f_i(\bar{x}_{t})}\rVert^2}$. } By the smoothness of $f_i$,
    \begin{align*}
        \frac{1}{M}\sum_{i=1}^{M}{\E\lVert{\E g_t^i - \nabla f_i(\bar{x}_{t})}\rVert^2} = \frac{1}{M}\sum_{i=1}^{M}{\E\lVert{\nabla f_i(x_t^i) - \nabla f_i(\bar{x}_t)\rVert}^2} \leq \frac{L^2}{M}\sum_{i=1}^{M}{\E\lVert x_t^i - \bar{x}_t\rVert^2} = L^2\Gamma_{t}\; .
    \end{align*}
    \paragraph{Bounding $\E\lVert{\E\bar{g}_t - \nabla f(\bar{x}_{t})}\rVert^2$. } By Jensen's inequality and the smoothness of each $f_i$, we have:
    \begin{align*}
        \E\lVert{\E\bar{g}_t - \nabla f(\bar{x}_{t})}\rVert^2 &= \E\norm{\frac{1}{M}\sum_{i=1}^{M}{\brac{\nabla f_i(x_t^i) - \nabla f_i(\bar{x}_t)}}}^2\leq \frac{1}{M}\sum_{i=1}^{M}{\E\lVert{\nabla f_i(x_t^i) - \nabla f_i(\bar{x}_t)}\rVert^2} \leq L^2\Gamma_{t}\; .
    \end{align*}
    \paragraph{Bounding $\frac{1}{M}\sum_{i=1}^{M}{\E\lVert{\nabla f_i(\bar{x}_t) - \nabla f(\bar{x}_{t})}\rVert^2}$. } By the heterogeneity assumption (\Cref{assump:heterogeneity}), 
    \begin{align*}
        \frac{1}{M}\sum_{i=1}^{M}{\E\lVert{\nabla f_i(\bar{x}_t) - \nabla f(\bar{x}_{t})}\rVert^2} \leq\zeta^2\; .
    \end{align*}

    Substituting these bounds back into \Cref{eq:Psi-decompose} implies the result as,
    \begin{align*}
        \Psi_t &\leq 5\brac{\sigma^2 + \frac{\sigma^2}{M} + 2L^2\Gamma_t + \zeta^2} \leq 5\brac{2\sigma^2 + \zeta^2} + 10L^2\Gamma_t\; .
    \end{align*}
\end{proof}

Using this, we now derive an explicit bound for $\Gamma_t$, specifically for linear weights and a sufficiently small learning rate.
\begin{lemma}\label{lem:Gamma-explicit}
    For linear weights $\cbrac{\alpha_t=t}_{t\geq 1}$, and a learning rate $\eta\leq\frac{\rho^2}{8\sqrt{80}L}$, the consensus distance of the query points $\Gamma_{t}$ is bounded as:
    \[
        \Gamma_{t} \leq\frac{2560\tilde{\sigma}^2\eta^2}{\rho^{4}}\; ,
    \]
    where $\tilde{\sigma}^2\coloneqq 2\sigma^2 + \zeta^2$.
\end{lemma}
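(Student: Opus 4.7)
The strategy is to couple the recursion from \Cref{lem:Gamma-recursion} with the bound from \Cref{lem:Psi-bound} and close the loop on $\Gamma_t$ by induction.

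First, I would exploit the linear weights: $\alpha_t = t$ yields $\alpha_{1:t}=t(t+1)/2$, hence $\delta_t = 2/(t+1)$ and $\alpha_t^2 \delta_t^2 = 4t^2/(t+1)^2 \leq 4$. I would also use the standard geometric-series bound $\sum_{\tau=1}^{t-1}(1-\rho/2)^{t-1-\tau} \leq 2/\rho$. Plugging these together with $\Psi_\tau \leq 5\tilde{\sigma}^2 + 10L^2 \Gamma_\tau$ from \Cref{lem:Psi-bound} into \Cref{lem:Gamma-recursion} would give, after regrouping constant and $\Gamma$-dependent contributions, a self-referential inequality of the schematic form
\[
\Gamma_{t+1} \leq \left(1-\frac{\rho}{2}\right)\Gamma_t + \frac{c_2\,\tilde{\sigma}^2 \eta^2}{\rho^{3}} + \frac{c_1\, L^2 \eta^2}{\rho^{3}}\max_{1\leq\tau\leq t} \Gamma_\tau,
\]
for explicit universal constants $c_1,c_2$ coming from the two passes through the $2/\rho$ bound and the factor $4$ from $\alpha_t^2\delta_t^2$.

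Next, I would run an induction on $t$ to establish $\Gamma_t \leq M$ with $M := 2560\,\tilde{\sigma}^2\eta^2/\rho^{4}$. The base case is immediate since all machines share the same initialization, so $\Gamma_1 = 0 \leq M$. For the inductive step, substituting $\Gamma_\tau \leq M$ for all $\tau \leq t$ on the right-hand side and requiring $\Gamma_{t+1} \leq M$ reduces to
\[
\frac{c_2\,\tilde{\sigma}^2 \eta^2}{\rho^{3}} \leq M\left(\frac{\rho}{2} - \frac{c_1\, L^2 \eta^2}{\rho^{3}}\right).
\]

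The main obstacle (and the only delicate part) is making the constants line up. The learning-rate hypothesis is precisely calibrated for this: $\eta \leq \rho^2/(8\sqrt{80}\,L)$ is equivalent to $\eta^2 \leq \rho^4/(5120\, L^2)$, which makes $c_1 L^2 \eta^2/\rho^3$ at most a small fraction of $\rho/2$ (say, at most $\rho/4$). This leaves at least $\rho M/4$ on the right, and the residual inequality $c_2\tilde{\sigma}^2 \eta^2/\rho^3 \leq \rho M/4$ is equivalent to a lower bound $M \gtrsim \tilde{\sigma}^2\eta^2/\rho^4$. Carefully tracking $c_1,c_2$ through the two geometric-sum passes shows that the constant $2560$ suffices, completing the induction and hence the lemma.
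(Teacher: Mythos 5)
Your proposal is correct and follows essentially the same route as the paper: specialize the recursion of \Cref{lem:Gamma-recursion} to linear weights via $\alpha_t^2\delta_t^2\le 4$ and the geometric-sum bound $2/\rho$, substitute $\Psi_\tau\le 5\tilde\sigma^2+10L^2\Gamma_\tau$ from \Cref{lem:Psi-bound}, and close the self-referential recursion by strong induction with base case $\Gamma_1=0$, using $\eta\le\rho^2/(8\sqrt{80}L)$ to make the $\Gamma$-dependent contraction loss at most $\rho/4$ (the paper packages this induction as \Cref{lem:Gamma-recursion-helper}, keeping the weighted sum of $\Gamma_\tau$ rather than passing to a $\max$, but the bookkeeping is identical and yields the same constant $2560$).
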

\begin{proof}
    First, observe that for linear weights $\alpha_t=t$, we have $\alpha_{1:t} = \sum_{\tau=1}^{t}{\tau}=\frac{t(t+1)}{2}$, which implies that for all $t\geq 1$:
    \[
        \delta_t = \frac{\alpha_t}{\alpha_{1:t}} = \frac{2}{t+1}, \quad \text{and}\quad \alpha_t\delta_t = \frac{2t}{t+1}\leq 2\; .
    \]
    Thus, from \Cref{lem:Gamma-recursion}, we get:
    \begin{align*}        
        \Gamma_{t+1} &\leq \brac{1-\frac{\rho}{2}}\Gamma_{t} + \frac{4\eta\alpha_t^2\delta_t^2}{\rho}\brac{\Psi_{t} + \frac{2}{\rho}\sum_{\tau=1}^{t-1}{\brac{1-\frac{\rho}{2}}^{t-1-\tau}\Psi_{\tau}}} \\ &\leq \brac{1-\frac{\rho}{2}}\Gamma_{t} + \frac{16\eta}{\rho}\brac{\Psi_{t} + \frac{2}{\rho}\sum_{\tau=1}^{t-1}{\brac{1-\frac{\rho}{2}}^{t-1-\tau}\Psi_{\tau}}}\; .
    \end{align*}
    where the second inequality follows from $\alpha_t^2\delta_t^2\leq 4$. 
    Let $\tilde{\sigma}^2\coloneqq 2\sigma^2 + \zeta^2$. Then, by \Cref{lem:Psi-bound}, for all $t\geq$, we have $\Psi_{t}\leq 5\tilde{\sigma}^2 + 10L^2\Gamma_{t}$. Substituting this bound yields:
    \begin{align}\label{eq:Gamma-tplus1-recursion}
        \Gamma_{t+1}&\leq\brac{1-\frac{\rho}{2}}\Gamma_{t} + \frac{80\eta^2}{\rho}\brac{\tilde{\sigma}^2 + 2L^2\Gamma_{t} + \frac{2}{\rho}\sum_{\tau=1}^{t-1}{\brac{1-\frac{\rho}{2}}^{t-1-\tau}\brac{\tilde{\sigma}^2 + 2L^2\Gamma_{\tau}}}} \nonumber \\ &= \brac{1-\frac{\rho}{2} + 80L^2\eta^2\cdot\frac{2}{\rho}}\Gamma_{t} + 80L^2\eta^2\left(\frac{2}{\rho}\right)^2\cdot\sum_{\tau=1}^{t-1}{\brac{1-\frac{\rho}{2}}^{t-1-\tau}\Gamma_{\tau}} + \frac{80\eta^2}{\rho}\brac{1 + \frac{2}{\rho}\sum_{\tau=1}^{t-1}{\brac{1-\frac{\rho}{2}}^{t-1-\tau}}}\tilde{\sigma}^2 \nonumber \\ &\leq \brac{1-\frac{\rho}{2} + c_1^2\eta^2\cdot\frac{2}{\rho}}\Gamma_{t} + c_1^2\eta^2\left(\frac{2}{\rho}\right)^2\cdot\sum_{\tau=1}^{t-1}{\brac{1-\frac{\rho}{2}}^{t-1-\tau}\Gamma_{\tau}} + \frac{c_2^2\eta^2}{\rho}\brac{1 + \frac{4}{\rho^2}} \nonumber \\ &\leq \brac{1-\frac{\rho}{2} + c_1^2\eta^2\cdot\frac{2}{\rho}}\Gamma_{t} + c_1^2\eta^2\left(\frac{2}{\rho}\right)^2\cdot\sum_{\tau=1}^{t-1}{\brac{1-\frac{\rho}{2}}^{t-1-\tau}\Gamma_{\tau}} + c_2^2\eta^2\brac{\frac{2}{\rho}}^3 \; ,
    \end{align}
    where $c_1^2\coloneqq 80L^2$, $c_2^2\coloneqq 80\tilde{\sigma}^2$, the final inequality holds because $1\leq\frac{4}{\rho^2}$, and the second inequality stems from the following bound on the geometric sum:
    \[
        \sum_{\tau=1}^{t-1}{\brac{1-\frac{\rho}{2}}^{t-1-\tau}} = \sum_{\tau=0}^{t-2}{\brac{1-\frac{\rho}{2}}^{\tau}} \leq \sum_{\tau=0}^{\infty}{\brac{1-\frac{\rho}{2}}^{\tau}} = \frac{2}{\rho}\; .
    \]
    \Cref{eq:Gamma-tplus1-recursion} exhibits a recursive structure as in \Cref{lem:Gamma-recursion-helper}. Since $\eta\leq \frac{\rho^2}{8\sqrt{80}L}$, the condition required to apply the lemma is satisfied. Applying the lemma with $a_t = \Gamma_t$ and $\kappa=\frac{\rho}{2}$ yields the final result:
    \begin{align*}
        \Gamma_{t} \leq \frac{2c_2^2\eta^2}{\kappa^4} = \frac{2560\tilde{\sigma}^2\eta^2}{\rho^{4}}\; .
    \end{align*}
\end{proof}

\subsection{Consensus of the Iterates}
The following lemma provides a recursive relation for the consensus distance of the iterates $W_t$.
\begin{lemma}\label{lem:Xi-recursion}
    The consensus distance for the iterates $\Xi_t$, satisfies the following recursive relation:
    \[
        \Xi_{t+1}\leq \brac{1-\frac{\rho}{2}}\Xi_{t} + \frac{2\eta^2\alpha_t^2}{\rho}\Psi_t\; .
    \]
\end{lemma}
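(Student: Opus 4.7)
The plan is to mirror the argument that established the recursion for $\Gamma_t$ in Lemma C.1, but applied to $W_t$ rather than $X_t$. The structural parallel is clean because $W_{t+1} = W_{t+\frac{1}{2}}P$ with $W_{t+\frac{1}{2}} = W_t - \eta\alpha_t G_t$, which is simpler than the $X_t$ update since there is no convex-combination step to handle; hence the bound will involve only $\Xi_t$ and $\Psi_t$ and not $\Gamma_t$.

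First I would start from $M\Xi_{t+1} = \mathbb{E}\|W_{t+1} - \bar{W}_{t+1}\|_F^2$, use the fact that gossip preserves averages (as invoked via Lemma B.2/Property 2.6 in the $\Gamma_t$ proof, specifically $\bar{W}_{t+\frac{1}{2}}P = \bar{W}_{t+\frac{1}{2}}$) to rewrite $\bar{W}_{t+1} = \bar{W}_{t+\frac{1}{2}}$, and then apply the contraction property of the gossip matrix to obtain
\[
M\Xi_{t+1} \leq (1-\rho)\,\mathbb{E}\|W_{t+\frac{1}{2}} - \bar{W}_{t+\frac{1}{2}}\|_F^2.
\]
Next I would substitute the local SGD-with-weight update $W_{t+\frac{1}{2}} - \bar{W}_{t+\frac{1}{2}} = (W_t - \bar{W}_t) - \eta\alpha_t(G_t - \bar{G}_t)$ and split this using the elementary inequality $\|a+b\|^2 \leq (1+\gamma^{-1})\|a\|^2 + (1+\gamma)\|b\|^2$ for an arbitrary $\gamma > 0$. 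This is the exact same trick used in equation (A.4) of the $\Gamma_t$ proof when bounding $\mathbb{E}\|W_{t+\frac{1}{2}} - \bar{W}_{t+\frac{1}{2}}\|_F^2$, so no new ideas are required.

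The final step is to pick $\gamma = 2/\rho$, which is the optimal choice to absorb the contraction factor $(1-\rho)$ into a clean $(1-\rho/2)$ coefficient on $\Xi_t$. Specifically, $(1-\rho)(1+\rho/2) = 1 - \rho/2 - \rho^2/2 \leq 1 - \rho/2$, and $(1-\rho)(1+2/\rho) = -1 - \rho + 2/\rho \leq 2/\rho$. Dividing by $M$ and recognizing the definitions of $\Xi_t$ and $\Psi_t$ delivers the stated recursion.

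I do not expect any real obstacle here. The only subtlety worth double-checking is the use of the consensus-preserving identity $\bar{W}_{t+\frac{1}{2}} = \bar{W}_t - \eta\alpha_t \bar{G}_t$ (a direct averaging of the local update) together with $\bar{W}_{t+1} = \bar{W}_{t+\frac{1}{2}}$ (preservation under $P$); once these are in hand the proof is a two-line Young's inequality plus the $\gamma = 2/\rho$ tuning. In short, this lemma is essentially the $\Xi$-analogue of the first half of the $\Gamma$-recursion proof, stripped of the convex-combination complication.
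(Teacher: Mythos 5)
Your proposal is correct and follows essentially the same route as the paper: gossip contraction after noting that $P$ preserves the column average, substitution of $W_{t+\frac{1}{2}}-\bar{W}_{t+\frac{1}{2}}=(W_t-\bar{W}_t)-\eta\alpha_t(G_t-\bar{G}_t)$, Young's inequality with a free parameter, and the choice $\gamma=2/\rho$ with the same two algebraic simplifications of $(1-\rho)(1+\rho/2)$ and $(1-\rho)(1+2/\rho)$. The paper in fact reuses its already-derived bound on $\E\lVert W_{t+\frac{1}{2}}-\bar{W}_{t+\frac{1}{2}}\rVert_F^2$ from the $\Gamma_t$ analysis, exactly as you anticipate.
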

\begin{proof}
    Similarly to \Cref{eq:X-gossip}, we have by the gossip averaging of the iterates:
    \begin{align*}
        M\Xi_{t+1} = \E\lVert{W_{t+1} - \bar{W}_{t+1}}\rVert_F^2 = \E\lVert{W_{t+\frac{1}{2}}P - \bar{W}_{t+\frac{1}{2}}}\rVert_F^2 \leq  (1-\rho)\E\lVert W_{t+\frac{1}{2}} - \bar{W}_{t+\frac{1}{2}}\rVert_F^2\; ,
    \end{align*}
    where the second equality is due to \Cref{lem:gossip-preserves-avg}, stating that $\bar{W}_{t+\frac{1}{2}}P = \bar{W}_{t+\frac{1}{2}}$, and the inequality follows from the mixing property of the gossip matrix~\Cref{prop:gossip-contraction}. Substituting the bound on $\E\lVert W_{t+\frac{1}{2}} - \bar{W}_{t+\frac{1}{2}}\rVert_F^2$ in \Cref{eq:W-half-bound} then yields:
    \begin{align*}
        M\Xi_{t+1} \leq (1-\rho)\brac{1+\frac{1}{\gamma}}M\Xi_{t} + (1-\rho)(1+\gamma)\eta^2\alpha_t^2 M\Psi_t\; .
    \end{align*}
    Dividing by $M$ and setting $\gamma=2/\rho$ finally gives:
    \begin{align*}
        \Xi_{t+1} &\leq (1-\rho)\brac{1+\frac{\rho}{2}}\Xi_{t} + (1-\rho)\brac{1+\frac{2}{\rho}}\eta^2\alpha_t^2\Psi_t \leq \brac{1-\frac{\rho}{2}}\Xi_{t} + \frac{2\eta^2\alpha_t^2}{\rho}\Psi_{t}\; ,
    \end{align*}
    where in the last inequality we used $(1-\rho)(1+\frac{\rho}{2}) = 1-\frac{\rho}{2} - \frac{\rho^2}{2}\leq 1-\frac{\rho}{2}$ and $(1-\rho)(1+\frac{2}{\rho})=-1-\rho+\frac{2}{\rho}\leq\frac{2}{\rho}$. 
\end{proof}

Using this recursion, we derive an explicit bound on $\Xi_t$ as follows. 
\begin{lemma}\label{lem:Xi-explicit}
    For any non-decreasing sequence $\cbrac{\alpha_t}_{t\geq 1}$, the consensus distance $\Xi_{t}$ is bounded as:
    \[
        \Xi_{t} \leq \frac{2\eta^2\alpha_t^2}{\rho}\sum_{\tau=1}^{t-1}{\brac{1-\frac{\rho}{2}}^{t-1-\tau}\Psi_{\tau}} 
    \]
\end{lemma}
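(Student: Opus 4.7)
The plan is to unroll the recursion from \Cref{lem:Xi-recursion} and then exploit the monotonicity of $\{\alpha_t\}_{t\geq 1}$. Recall from the algorithm's initialization that $w_1^i = w_1$ for every $i\in[M]$, so $W_1 = \bar{W}_1$ and hence $\Xi_1 = 0$. This provides the base case needed to close the induction.

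From the recursion
\[
    \Xi_{t+1}\leq \brac{1-\tfrac{\rho}{2}}\Xi_{t} + \frac{2\eta^2\alpha_t^2}{\rho}\Psi_t,
\]
a straightforward induction on $t$ (or equivalently, telescoping/unrolling) yields
\[
    \Xi_{t} \leq \frac{2\eta^2}{\rho}\sum_{\tau=1}^{t-1}{\brac{1-\tfrac{\rho}{2}}^{t-1-\tau}\alpha_\tau^2\,\Psi_{\tau}}.
\]
The inductive step just plugs the hypothesis into the recursion, multiplies through by $(1-\rho/2)$, and adds the new $\frac{2\eta^2\alpha_t^2}{\rho}\Psi_t$ term, which neatly becomes the $\tau=t$ summand (with exponent $0$).

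The final step is to apply the assumption that $\{\alpha_t\}_{t\geq 1}$ is non-decreasing: for every $\tau \leq t$ we have $\alpha_\tau^2 \leq \alpha_t^2$, so we may pull $\alpha_t^2$ out of the sum to obtain the desired bound
\[
    \Xi_{t} \leq \frac{2\eta^2 \alpha_t^2}{\rho}\sum_{\tau=1}^{t-1}{\brac{1-\tfrac{\rho}{2}}^{t-1-\tau}\Psi_{\tau}}.
\]
There is no real obstacle here; the only subtlety worth noting is that the claim as stated is slightly weaker than what the recursion gives (since pulling $\alpha_t^2$ out loses information when weights grow), but this weaker form is exactly what is used downstream in \Cref{lem:Gamma-recursion} to substitute into the recursion for $\Gamma_t$. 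The monotonicity hypothesis is consistent with the linear weights $\alpha_t=t$ ultimately used in \Cref{thm:main-result}.
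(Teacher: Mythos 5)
Your proof is correct and follows essentially the same route as the paper: the paper delegates the unrolling to \Cref{lem:Xi-recursion-helper} (which is itself proved by exactly the induction you describe, using $\Xi_1=0$ as the base case) and then pulls out $\alpha_t^2$ via monotonicity, just as you do.
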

\begin{proof}
    The recursion for $\Xi_{t}$ in \Cref{lem:Xi-recursion} exhibits the structure in \Cref{lem:Xi-recursion-helper}. Applying the lemma with $a_{t}=\Xi_{t}$, $b_t=\eta^2\alpha_t^2\Psi_{t}$, and $\kappa=\rho/2$, we obtain:
    \begin{align*}
        \Xi_{t} &\leq \frac{2}{\rho}\sum_{\tau=1}^{t-1}{\brac{1-\frac{\rho}{2}}^{t-1-\tau}\eta^2\alpha_{\tau}^2\Psi_{\tau}} \leq \frac{2\eta^2\alpha_t^2}{\rho}\sum_{\tau=1}^{t-1}{\brac{1-\frac{\rho}{2}}^{t-1-\tau}\Psi_{\tau}}\; ,
    \end{align*}
    where in the last inequality we used $\alpha_{\tau}^2\leq\alpha_t^2$ for all $\tau\leq t$. 
\end{proof}

\section{Technical Lemmas}
In this section, we list some technical lemmas, starting with the following property that establishes the invariance of gossip communication when all machines hold the same vector.
\begin{lemma}[Proposition 1,~\citealp{koloskova2020unified}]\label{lem:gossip-preserves-avg}
    Let $x\in\real$. For any matrix $X = \begin{pmatrix}
        x & x & \cdots & x
    \end{pmatrix}\in\reals^{d\times{M}}$ and a symmetric, doubly stochastic matrix $P\in\reals^{M\times{M}}$, we have $XP = X$.  
\end{lemma}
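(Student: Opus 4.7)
The plan is to exploit the simple structure of $X$: since every column equals the same vector $x\in\mathbb{R}^d$, I can factor $X$ as an outer product $X = x\boldsymbol{1}^{\top}$, where $\boldsymbol{1}\in\mathbb{R}^M$ is the all-ones column vector. This reduces the claim $XP=X$ to a statement purely about how $P$ acts on $\boldsymbol{1}^{\top}$ from the right.

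First I would verify the factorization by checking entries: the $(k,j)$ entry of $x\boldsymbol{1}^{\top}$ is $x_k\cdot 1 = x_k$, which matches the $j$-th column of $X$ being $x$ for every $j$. Next, I would compute $XP = x\boldsymbol{1}^{\top}P$ and use associativity to focus on $\boldsymbol{1}^{\top}P$. By \Cref{def:gossip_mat}, $P$ satisfies $\boldsymbol{1}^{\top}P = \boldsymbol{1}^{\top}$ (this is the column-stochasticity condition, which follows from $P$ being doubly stochastic; in fact here it also follows from symmetry together with $P\boldsymbol{1}=\boldsymbol{1}$). Substituting back gives $XP = x\boldsymbol{1}^{\top} = X$, as required.

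There is essentially no obstacle here, since the result is a one-line linear algebra identity that relies only on the definition of a doubly stochastic matrix. The only subtlety worth noting is that one should explicitly invoke the column-stochasticity property $\boldsymbol{1}^{\top}P = \boldsymbol{1}^{\top}$ rather than the row-stochasticity $P\boldsymbol{1} = \boldsymbol{1}$, since the multiplication $XP$ acts on $P$ from the left. Both hold for a doubly stochastic $P$, so either suffices via symmetry, but the cleanest presentation uses the former directly.
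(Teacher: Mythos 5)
Your proof is correct: writing $X = x\boldsymbol{1}^{\top}$ and invoking $\boldsymbol{1}^{\top}P = \boldsymbol{1}^{\top}$ is exactly the standard argument, and your remark about needing column-stochasticity (rather than $P\boldsymbol{1}=\boldsymbol{1}$) for right-multiplication is the right subtlety to flag. The paper itself gives no proof---it cites Proposition~1 of \citet{koloskova2020unified}---so your one-line derivation supplies precisely what that reference contains.
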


Next, we state a classical result for smooth functions.
\begin{lemma}\label{lem:self-bounding}
    Let $f:\reals^{d}\to\reals$ be an $L$-smooth function and $x^*\in\argmin_{x\in\reals^{d}}{f(x)}$. Then, for every $x\in\reals^{d}$, it holds that:
    \[
        \norm{\nabla f(x)}^2 \leq 2L\brac{f(x) - f(x^*)}\; .
    \]
\end{lemma}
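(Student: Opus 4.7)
The plan is to invoke the smoothness inequality from \Cref{assump:smooth} at a carefully chosen pair of points, producing an upper bound on $f(x^*)$ in terms of $f(x)$ and $\|\nabla f(x)\|^2$, and then rearrange. Concretely, for any fixed $x \in \reals^d$, I would apply smoothness with that $x$ and with $y$ chosen to be a single gradient step from $x$, namely $y = x - \tfrac{1}{L}\nabla f(x)$. This choice is what makes the upper bound tight, because $\tfrac{1}{L}$ is exactly the step size that minimizes the quadratic upper model $f(x) + \nabla f(x)^\top(y-x) + \tfrac{L}{2}\|y-x\|^2$ over $y$.

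First I would substitute this $y$ into the smoothness inequality. The inner-product term becomes $-\tfrac{1}{L}\|\nabla f(x)\|^2$ and the quadratic term becomes $\tfrac{L}{2}\cdot\tfrac{1}{L^2}\|\nabla f(x)\|^2 = \tfrac{1}{2L}\|\nabla f(x)\|^2$, so the two combine to $-\tfrac{1}{2L}\|\nabla f(x)\|^2$. This yields
\[
    f(y) \;\leq\; f(x) - \frac{1}{2L}\|\nabla f(x)\|^2.
\]
Next I would use the fact that $x^*$ is a global minimizer, which gives $f(x^*) \leq f(y)$; note that this step is where the assumption $f \geq f(x^*)$ enters, and it does not require convexity. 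Chaining with the previous display, I obtain $f(x^*) \leq f(x) - \tfrac{1}{2L}\|\nabla f(x)\|^2$, and rearranging produces the claimed self-bounding inequality $\|\nabla f(x)\|^2 \leq 2L(f(x) - f(x^*))$.

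There is no real obstacle here: the argument is the standard textbook proof of the descent lemma plus an immediate comparison with the optimum. The only subtlety worth flagging is to make sure the assumption used is exactly \Cref{assump:smooth} (an upper quadratic bound on $f$, available since each $f_i$ is $L$-smooth and averages of $L$-smooth functions remain $L$-smooth, so $f$ itself is $L$-smooth). No convexity of $f$ is needed, only the existence of a global minimizer, which is assumed in the statement. The proof is short enough that I would simply present the substitution and the one-line rearrangement without further decomposition.
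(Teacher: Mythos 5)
Your proof is correct and is exactly the standard argument: apply the quadratic upper bound at $y = x - \tfrac{1}{L}\nabla f(x)$ to get $f(y) \le f(x) - \tfrac{1}{2L}\|\nabla f(x)\|^2$, then use $f(x^*) \le f(y)$ and rearrange. The paper states this lemma without proof as a classical fact, so there is nothing to compare against; your write-up supplies precisely the proof the paper implicitly relies on, and your observations that convexity is not needed and that $f$ inherits $L$-smoothness as an average of $L$-smooth functions are both accurate.
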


The following lemma proves to be useful as well.
\begin{lemma}\label{lem:self-bounding-sums}
    Consider a non-negative sequence $a_1,\ldots,a_T$ satisfying the following relation for any $t\in\sbrac{T}$:
    \[
        a_{t}\leq b + \frac{1}{2T}\sum_{\tau=1}^{T}{a_\tau}\; ,
    \]
    for some constant $b\in\reals$. Then, for all $t\in\sbrac{T}$, it holds that
    \[
        a_t\leq 2b\; .
    \]
\end{lemma}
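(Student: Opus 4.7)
The plan is a straightforward self-bounding argument in two steps. First, I will sum the hypothesized inequality over $t = 1, \ldots, T$ to obtain a closed relation on the total sum $S \coloneqq \sum_{t=1}^{T} a_t$. Concretely, summing gives
\[
    S = \sum_{t=1}^{T} a_t \leq \sum_{t=1}^{T}\brac{b + \frac{1}{2T}\sum_{\tau=1}^{T}{a_\tau}} = Tb + \frac{S}{2}\; ,
\]
and rearranging isolates $S \leq 2Tb$. Second, I will substitute this bound back into the original per-$t$ inequality: for any $t \in [T]$,
\[
    a_t \leq b + \frac{1}{2T}\sum_{\tau=1}^{T}{a_\tau} = b + \frac{S}{2T} \leq b + b = 2b\; ,
\]
which is the desired conclusion.

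There is no real obstacle here; the argument is essentially a fixed-point contraction. The only subtlety worth noting is that the averaging coefficient $\tfrac{1}{2T}$ combined with $T$ summands yields a self-interaction factor of exactly $\tfrac{1}{2} < 1$, which is precisely what permits the rearrangement to isolate $S$. If the hypothesis instead involved $\tfrac{1}{T}\sum_{\tau} a_\tau$, the summation step would degenerate to $S \leq Tb + S$, giving no useful bound — so the factor $\tfrac{1}{2}$ is load-bearing. Non-negativity of the $a_t$ is assumed but is not actually invoked in the arithmetic (the rearrangement $S/2 \leq Tb$ holds regardless of sign); it is only used implicitly to guarantee the conclusion $a_t \leq 2b$ is meaningful.
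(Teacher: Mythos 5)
Your proof is correct and is essentially identical to the paper's: both sum the inequality over $t$ to get $S \leq Tb + S/2$, rearrange to $S \leq 2Tb$, and substitute back into the pointwise bound. Your side remarks (the load-bearing factor $1/2$ and the fact that non-negativity is not actually needed for the arithmetic) are accurate but not part of the paper's argument.
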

\begin{proof}
    Summing over $t\in\sbrac{T}$, we have that:
    \[
        \sum_{t=1}^{T}{a_t} \leq Tb + \frac{1}{2T}\sum_{t=1}^{T}{\sum_{\tau=1}^{T}{a_{\tau}}} = Tb + \frac{1}{2}\sum_{t=1}^{T}{a_{t}}\; .
    \]
    Subtracting $\frac{1}{2}\sum_{t=1}^{T}{a_{t}}$ and multiplying by $2$ results in $\sum_{t=1}^{T}{a_{t}} \leq 2Tb$. Thus, we obtain:
    \[
        a_{t}\leq b + \frac{1}{2T}\sum_{\tau=1}^{T}{a_{\tau}} \leq b + \frac{1}{2T} 2Tb = 2b\; .
    \]
\end{proof}

The following two lemmas are used to derive explicit bounds for the consensus recursions discussed in \Cref{app:recursion-analysis}.
\begin{lemma}\label{lem:Xi-recursion-helper}
    Let $\kappa \in (0, 1]$ and consider a sequence $\{a_t\}_{t \geq 1}$ satisfying the recursion:
    \[
        a_t \leq (1-\kappa)a_{t-1} + \frac{b_{t-1}}{\kappa}, \quad \text{with }\enskip a_1 = 0,
    \]
    for some non-negative sequence $\{b_t\}_{t \geq 1}$. Then, for all $t \geq 2$, the following bound holds:
    \[
        a_t \leq \frac{1}{\kappa}\sum_{\tau=1}^{t-1}{(1-\kappa)^{t-1-\tau}b_\tau}\; .
    \]
\end{lemma}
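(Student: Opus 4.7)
The plan is to prove the stated bound by a straightforward induction on $t$, since the inequality is essentially the explicit closed form of the linear recursion. I would proceed by unrolling the recursion one step at a time, using the base condition $a_1 = 0$ to truncate the tail.

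For the base case $t=2$, I would simply instantiate the recursion: $a_2 \leq (1-\kappa)a_1 + b_1/\kappa = b_1/\kappa$, which matches the claimed bound $\frac{1}{\kappa}\sum_{\tau=1}^{1}(1-\kappa)^{1-\tau}b_\tau = b_1/\kappa$. For the inductive step, assuming the bound holds at $t$, I would apply the recursion once more and substitute:
\begin{align*}
a_{t+1} &\leq (1-\kappa)a_t + \frac{b_t}{\kappa} \\
&\leq \frac{1-\kappa}{\kappa}\sum_{\tau=1}^{t-1}(1-\kappa)^{t-1-\tau}b_\tau + \frac{b_t}{\kappa} \\
&= \frac{1}{\kappa}\sum_{\tau=1}^{t-1}(1-\kappa)^{t-\tau}b_\tau + \frac{1}{\kappa}(1-\kappa)^{0}b_t \\
&= \frac{1}{\kappa}\sum_{\tau=1}^{t}(1-\kappa)^{t-\tau}b_\tau\; ,
\end{align*}
which matches the claim at step $t+1$ after re-indexing. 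The use of $\kappa \in (0,1]$ ensures that $1-\kappa \geq 0$, so the induction step preserves the non-negativity required to absorb the inductive bound multiplied by $(1-\kappa)$.

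There is no real obstacle here: the statement is just the explicit solution of a first-order linear recurrence, and the proof is a one-line telescoping via induction. The only minor care needed is bookkeeping of the exponent $t-1-\tau$ versus $t-\tau$ when shifting indices, and ensuring the sum's upper limit moves from $t-1$ to $t$ precisely when the new $b_t$ term is absorbed at exponent zero. Alternatively, one could present the proof by direct unrolling (repeatedly substituting the recursion into itself $t-1$ times until reaching $a_1=0$) to obtain the geometric sum in closed form, which would avoid induction entirely but is essentially the same calculation.
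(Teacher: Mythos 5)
Your proof is correct and follows essentially the same route as the paper: induction on $t$ with base case $t=2$ (using $a_1=0$) and an inductive step that multiplies the hypothesis by $(1-\kappa)$ and absorbs the new $b_t/\kappa$ term at exponent zero. No gaps; the index bookkeeping you flag is handled correctly.
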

\begin{proof}
    We prove the statement using induction. 

    \textbf{Base case. } For $t=2$, since $a_1=0$, it trivially follows that: 
    \[
        a_2\leq(1-\kappa)a_1 + \frac{b_1}{\kappa}=\frac{b_1}{\kappa} = \frac{1}{\kappa}\sum_{\tau=1}^{1}{(1-\kappa)^{1-\tau}b_\tau}\; .
    \]
    \textbf{Induction step. } Assume that the induction hypothesis holds for some $t\geq 2$. We show that it holds for index $t+1$:
    \begin{align*}
        a_{t+1} \leq (1-\kappa)a_{t} + \frac{b_{t}}{\kappa} &\leq (1-\kappa)\frac{1}{\kappa}\sum_{\tau=1}^{t-1}(1-\kappa)^{t-1-\tau}b_\tau + \frac{b_{t}}{\kappa} = \frac{1}{\kappa}\brac{\sum_{\tau=1}^{t-1}{\brac{1-\kappa}^{t-\tau}b_{\tau}} + b_t} = \frac{1}{\kappa}\sum_{\tau=1}^{t}(1-\kappa)^{t-\tau}b_\tau \; .
    \end{align*}
\end{proof}

\begin{lemma}\label{lem:Gamma-recursion-helper}
    Let $\kappa\in(0,1]$ and consider a sequence $\{a_t\}_{t \geq 1}$ satisfying the recursion:
    \[
        a_t \leq \brac{1-\kappa + \frac{c_1^2\eta^2}{\kappa}}a_{t-1} + \frac{c_1^2\eta^2}{\kappa^2}\sum_{\tau=1}^{t-2}{(1-\kappa)^{t-2-\tau}a_{\tau}} + \frac{c_2^2\eta^2}{\kappa^3}, \quad \text{with }\enskip a_1 = 0,
    \]
    for some non-negative constants $c_1$,$c_2$, and $\eta$. Suppose $\eta\leq\frac{\kappa^2}{2c_1}$. Then, for all $t\geq 1$, the following bound holds:
    \begin{equation*}
        a_{t} \leq \frac{2c_2^2\eta^2}{\kappa^4}\; .
    \end{equation*}
\end{lemma}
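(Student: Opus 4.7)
The plan is to prove the bound by induction on $t$, with the induction hypothesis $a_\tau \leq \frac{2c_2^2\eta^2}{\kappa^4}$ holding for all $\tau \leq t-1$.

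For the base case $t=1$, the claim holds trivially since $a_1 = 0$. For $t=2$, the empty sum vanishes and one gets $a_2 \leq \frac{c_2^2\eta^2}{\kappa^3} \leq \frac{2c_2^2\eta^2}{\kappa^4}$ using $\kappa \in (0,1]$.

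For the inductive step, I would substitute the hypothesis into the recursion, bound the geometric sum by $\sum_{\tau=1}^{t-2}(1-\kappa)^{t-2-\tau} \leq \frac{1}{\kappa}$, and factor out $\frac{2c_2^2\eta^2}{\kappa^4}$ to obtain
\[
    a_t \leq \frac{2c_2^2\eta^2}{\kappa^4}\left(1-\kappa + \frac{c_1^2\eta^2}{\kappa} + \frac{c_1^2\eta^2}{\kappa^3}\right) + \frac{c_2^2\eta^2}{\kappa^3}\; .
\]
The key step then is to absorb the residual additive term $\frac{c_2^2\eta^2}{\kappa^3}$ using the slack $-\kappa \cdot \frac{2c_2^2\eta^2}{\kappa^4} = -\frac{2c_2^2\eta^2}{\kappa^3}$ that comes from the $(1-\kappa)$ contraction, leaving room for the small $\frac{c_1^2\eta^2}{\kappa}$ and $\frac{c_1^2\eta^2}{\kappa^3}$ perturbations. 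The learning rate restriction $\eta \leq \frac{\kappa^2}{2c_1}$ gives $c_1^2\eta^2 \leq \kappa^4/4$, so $\frac{c_1^2\eta^2}{\kappa} + \frac{c_1^2\eta^2}{\kappa^3} \leq \frac{\kappa^3}{4} + \frac{\kappa}{4} \leq \frac{\kappa}{2}$ (using $\kappa \leq 1$). Combining, the bracketed factor is at most $1 - \frac{\kappa}{2}$, giving
\[
    a_t \leq \frac{2c_2^2\eta^2}{\kappa^4}\left(1 - \frac{\kappa}{2}\right) + \frac{c_2^2\eta^2}{\kappa^3} = \frac{2c_2^2\eta^2}{\kappa^4}\; ,
\]
closing the induction.

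The main obstacle is really just the bookkeeping around the two perturbation terms $\frac{c_1^2\eta^2}{\kappa}$ and $\frac{c_1^2\eta^2}{\kappa^3}$: the condition $\eta \leq \frac{\kappa^2}{2c_1}$ must be tight enough so that their combined effect, after being multiplied by the hypothesized bound, remains dominated by the contraction slack of magnitude $\kappa \cdot \frac{2c_2^2\eta^2}{\kappa^4}$. Once that is verified, the induction closes with no further subtlety.
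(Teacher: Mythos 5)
Your proof is correct and follows essentially the same strong-induction argument as the paper: same base cases, the same geometric-sum bound of $1/\kappa$, and the same absorption of the additive term $\frac{c_2^2\eta^2}{\kappa^3}=\frac{\kappa}{2}\cdot\frac{2c_2^2\eta^2}{\kappa^4}$ into the contraction slack. The only cosmetic difference is that you bound the two perturbation terms separately as $\frac{\kappa^3}{4}+\frac{\kappa}{4}\leq\frac{\kappa}{2}$, whereas the paper first combines them into $\frac{2c_1^2\eta^2}{\kappa^3}$ before applying the learning-rate condition.
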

\begin{proof}
    Similar to \Cref{lem:Xi-recursion-helper}, we prove this result by (strong) induction.

    \textbf{Base cases. } For $t=1$, the statement is trivial because $a_1 = 0\leq 2c_2^2\eta^2/\kappa^4$. For $t=2$, we have:
    \[
        a_2 \leq \brac{1-\kappa + \frac{c_1^2\eta^2}{\kappa}}a_{1} + \frac{c_1^2\eta^2}{\kappa^2}\sum_{\tau=1}^{0}{(1-\kappa)^{-\tau}a_{\tau}} + \frac{c_2^2\eta^2}{\kappa^3} = \frac{c_2^2\eta^2}{\kappa^3} \leq \frac{2c_2^2\eta^2}{\kappa^4}\; ,
    \]
    where the last inequality follows from $1\leq\frac{2}{\kappa}$ as $\kappa\leq 1$, thus verifying the base cases.

    \textbf{Induction step. } Suppose that for some $t\geq 3$, the induction hypothesis holds for all indices $s=1,\ldots,t$. We shall prove that it then holds for $t+1$. Specifically, denoting the upper bound by $B\coloneqq \frac{2c_2^2\eta^2}{\kappa^4}$, we assume that $a_{s}\leq B$ for $s=1,\ldots,t$ and prove that $a_{t+1}\leq B$. Plugging the induction hypothesis into the recursion, we get:
    \begin{align*}
        a_{t+1} &\leq \brac{1-\kappa + \frac{c_1^2\eta^2}{\kappa}}a_{t} + \frac{c_1^2\eta^2}{\kappa^2}\sum_{\tau=1}^{t-1}{(1-\kappa)^{t-1-\tau}a_{\tau}} + \frac{c_2^2\eta^2}{\kappa^3} \\ &\leq \brac{1-\kappa + \frac{c_1^2\eta^2}{\kappa}}B + \frac{c_1^2\eta^2}{\kappa^2}\sum_{\tau=1}^{t-1}{(1-\kappa)^{t-1-\tau}B} + \frac{c_2^2\eta^2}{\kappa^3} \\ &= \brac{1 - \kappa + \frac{c_1^2\eta^2}{\kappa} + \frac{c_1^2\eta^2}{\kappa^2}\sum_{\tau=1}^{t-1}{(1-\kappa)^{t-1-\tau}}} B + \frac{c_2^2\eta^2}{\kappa^3} \\ &\leq \brac{1 - \kappa + \frac{c_1^2\eta^2}{\kappa} + \frac{c_1^2\eta^2}{\kappa^3}} B + \frac{c_2^2\eta^2}{\kappa^3} \\ &\leq \brac{1 - \kappa + \frac{2c_1^2\eta^2}{\kappa^3}} B + \frac{c_2^2\eta^2}{\kappa^3}\; ,
    \end{align*}
    where the last inequality holds because $\frac{1}{\kappa}\leq\frac{1}{\kappa^3}$ for any $\kappa\leq 1$, and the penultimate inequality results from the bound on the geometric series:
    \[
        \sum_{\tau=1}^{t-1}{(1-\kappa)^{t-1-\tau}} = \sum_{s=0}^{t-2}{(1-\kappa)^{s}}\leq \sum_{s=0}^{\infty}{(1-\kappa)^{s}} = \frac{1}{\kappa}\; .
    \]
    From the condition on $\eta$, it follows that $\frac{2c_1^2\eta^2}{\kappa^3}\leq\frac{\kappa}{2}$. Substituting this inequality and noting that $\frac{c_2^2\eta^2}{\kappa^3} = \frac{\kappa}{2}B$, we obtain:
    \begin{align*}
        a_{t+1} &\leq \brac{1-\frac{\kappa}{2}}B + \frac{c_2^2\eta^2}{\kappa^3} = \brac{1-\frac{\kappa}{2}}B + \frac{\kappa}{2}B = B\; ,
    \end{align*}
    thus establishing the result.  
\end{proof}

We also utilize the following result, which we prove using simple algebraic manipulation.
\begin{lemma}\label{lem:min-of-lrs}
    Let $A\geq0$ and $B,C>0$, and define $\eta=\min\cbrac{\eta_1,\sqrt{A/B}, \brac{A/C}^{1/4}}$ for some $\eta_1>0$. Then, 
    \[
        \frac{A}{\eta} + B\eta + C\eta^3 \leq \frac{A}{\eta_1} +2\sqrt{AB} + 2A^{3/4}C^{1/4}\; . 
    \]
\end{lemma}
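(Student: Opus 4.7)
The strategy is to bound each of the three summands on the left separately, exploiting the fact that the minimum $\eta$ simultaneously lies below each of the three candidate values $\eta_1$, $\sqrt{A/B}$, and $(A/C)^{1/4}$. The last two of the three summands are immediate substitutions, so the only genuine work sits in bounding $A/\eta$, where the minimum appears in the denominator and the three candidates pull in different directions.

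For the first summand, I will use the elementary fact that $\min\{u_1,u_2,u_3\}^{-1} = \max\{u_1^{-1}, u_2^{-1}, u_3^{-1}\} \leq u_1^{-1} + u_2^{-1} + u_3^{-1}$ for positive reals. Applied to $\eta = \min\{\eta_1, \sqrt{A/B}, (A/C)^{1/4}\}$, this gives
\[
\frac{1}{\eta} \;\leq\; \frac{1}{\eta_1} + \sqrt{\tfrac{B}{A}} + \bigl(\tfrac{C}{A}\bigr)^{1/4},
\]
and multiplying by $A$ yields $A/\eta \leq A/\eta_1 + \sqrt{AB} + A^{3/4}C^{1/4}$.

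For the second summand, since $\eta \leq \sqrt{A/B}$, I get $B\eta \leq \sqrt{AB}$. For the third, since $\eta \leq (A/C)^{1/4}$, I get $C\eta^3 \leq C \cdot (A/C)^{3/4} = A^{3/4}C^{1/4}$. Summing the three bounds collects exactly $A/\eta_1 + 2\sqrt{AB} + 2A^{3/4}C^{1/4}$, matching the claim. A minor care is needed if $A=0$, in which case both sides degenerate (the only surviving term is $B\eta + C\eta^3$, and $\eta = 0$ makes the bound trivial); otherwise the division by $A$ in the final algebraic step is harmless.

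I do not expect any real obstacle here; the crux is simply noticing the "reciprocal of a minimum is a maximum, bounded by a sum" inequality that lets the $A/\eta$ term be split cleanly into three contributions matching the three candidate bounds on $\eta$.
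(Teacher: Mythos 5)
Your proof is correct and follows essentially the same route as the paper's: bound $A/\eta$ via the ``reciprocal of a minimum is the maximum of reciprocals, which is at most their sum'' step, and bound $B\eta$ and $C\eta^3$ by direct substitution of the respective candidate values. Your extra remark about the degenerate case $A=0$ is a fine (if unnecessary) addition that the paper does not spell out.
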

\begin{proof}
    Since $\eta$ is the minimum between three terms, $1/\eta$ is the  maximum of their inverses. Therefore, 
    \begin{align*}
        \frac{A}{\eta} + B\eta + C\eta^3 &\leq A\max\cbrac{\frac{1}{\eta_1}, \sqrt{\frac{B}{A}}, \brac{\frac{C}{A}}^{1/4}} + B\eta + C\eta^3 \\ &\leq A\brac{\frac{1}{\eta_1} + \sqrt{\frac{B}{A}} + \brac{\frac{C}{A}}^{1/4}} + B\sqrt{\frac{A}{B}} + C\brac{\frac{A^{1/4}}{C^{1/4}}}^{3} \\ &= \frac{A}{\eta_1} + \sqrt{AB} + A^{3/4}C^{1/4} + \sqrt{AB} + A^{3/4}C^{1/4} \\ &= \frac{A}{\eta_1} +2\sqrt{AB} + 2A^{3/4}C^{1/4}\; ,
    \end{align*}
    where the second inequality follows from the fact that the maximum of non-negative terms is smaller than their sum, and from the monotonic increase of the terms $B\eta$ and $C\eta^3$ with $\eta$. 
\end{proof}

\section{Additional Experimental Results}\label{app:experiments}
In \Cref{fig:ls_ring,fig:ls_torus,fig:ls_exp}, we present the full convergence curves for DAT-SGD and D-SGD on the synthetic least squares problem over the ring, torus, and 1-peer exponential graph topologies, respectively. These results are shown for varying numbers of machines and correspond to the final errors reported in \Cref{fig:synthetic_nodes_ablation}. 

On the torus and exponential graph topologies, our method steadily improves with increasing numbers of machines, demonstrating the potential of our approach to effectively increase parallelism. For the ring topology, performance improves when transitioning from $M=9$ to $25$ machines but degrades at $M=100$ machines, as the topology-related error term becomes dominant. Conversely, D-SGD does not improve when increasing the number of machines. On the ring and torus topologies, performance is initially better for larger $M$, but then deteriorates, indicating that while we initially benefit from variance reduction, the optimization transitions to a regime where the leading error term depends on topology and inefficient information flow hinders further improvement. For the well-connected exponential graph, initial performance improves with $M$, but all configurations eventually converge to approximately the same error level.



\begin{figure}[h]
    \centering
    \includegraphics[clip, trim=0.2cm 0.1cm 0.2cm 0.25cm, width=0.65\linewidth]{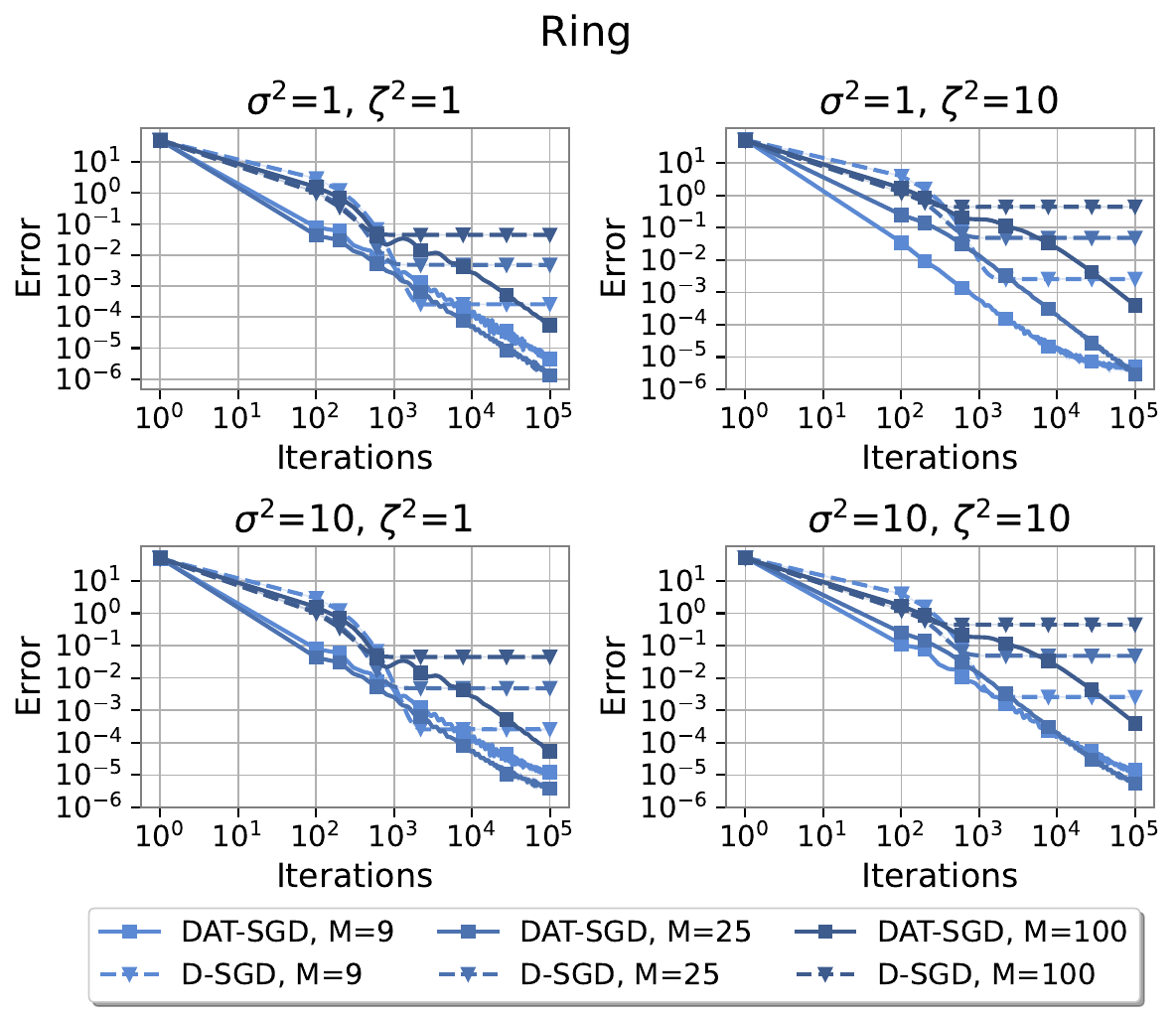}
    \vspace{-0.1in}
    \caption{Error curves for the least squares problem with varying noise, data  heterogeneity, and \# of machines in the \textbf{ring} topology.} 
    \label{fig:ls_ring}
\end{figure}

\begin{figure}[h]
    \centering
    \includegraphics[clip, trim=0.2cm 0.1cm 0.2cm 0.25cm, width=0.65\linewidth]{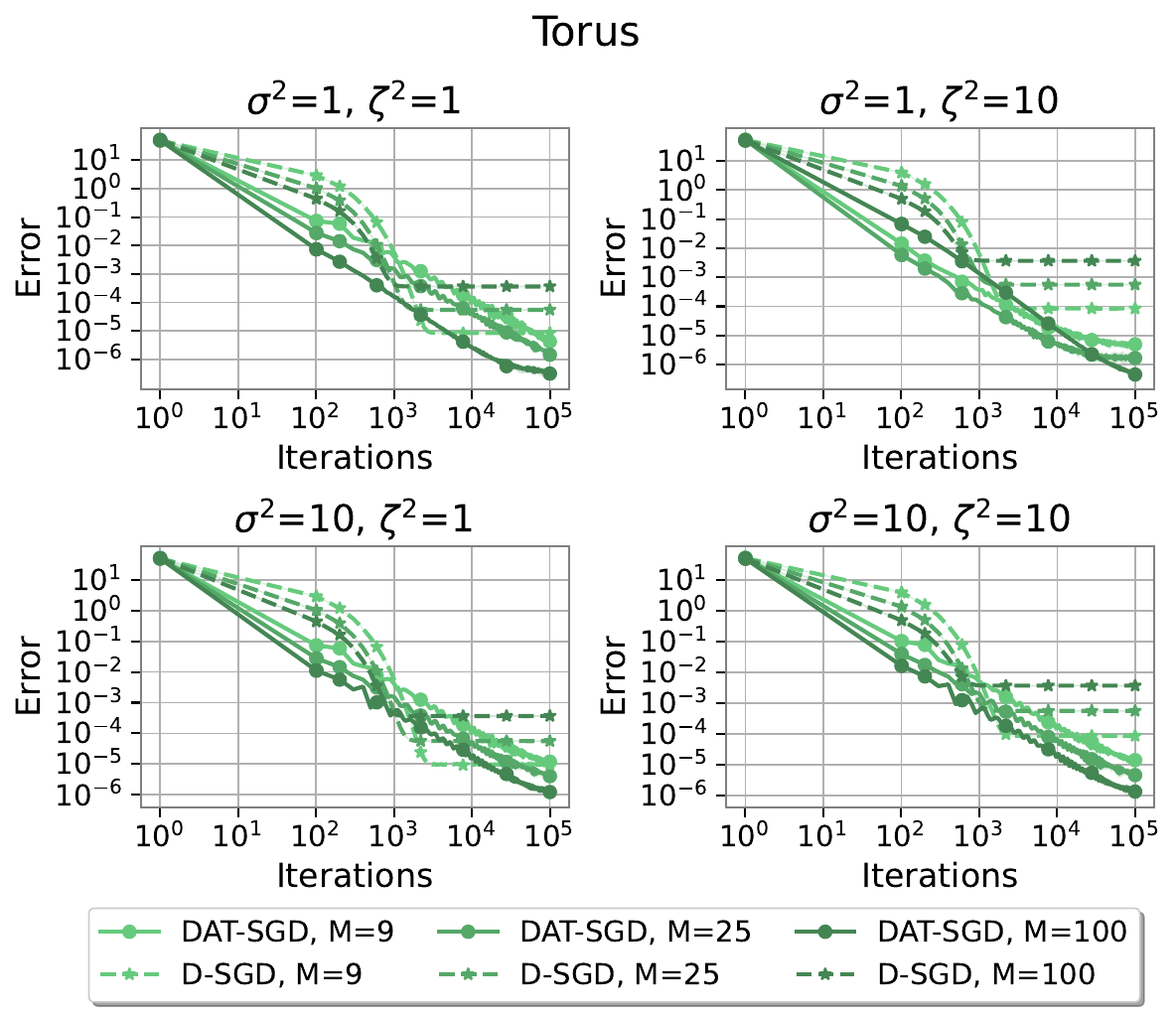}
    \vspace{-0.1in}
    \caption{Error curves for the least squares problem with varying noise, data  heterogeneity, and \# of machines in the \textbf{torus} topology.} 
    \label{fig:ls_torus}
\end{figure}

\begin{figure}[h]
    \centering
    \includegraphics[clip, trim=0.2cm 0.1cm 0.2cm 0.25cm, width=0.65\linewidth]{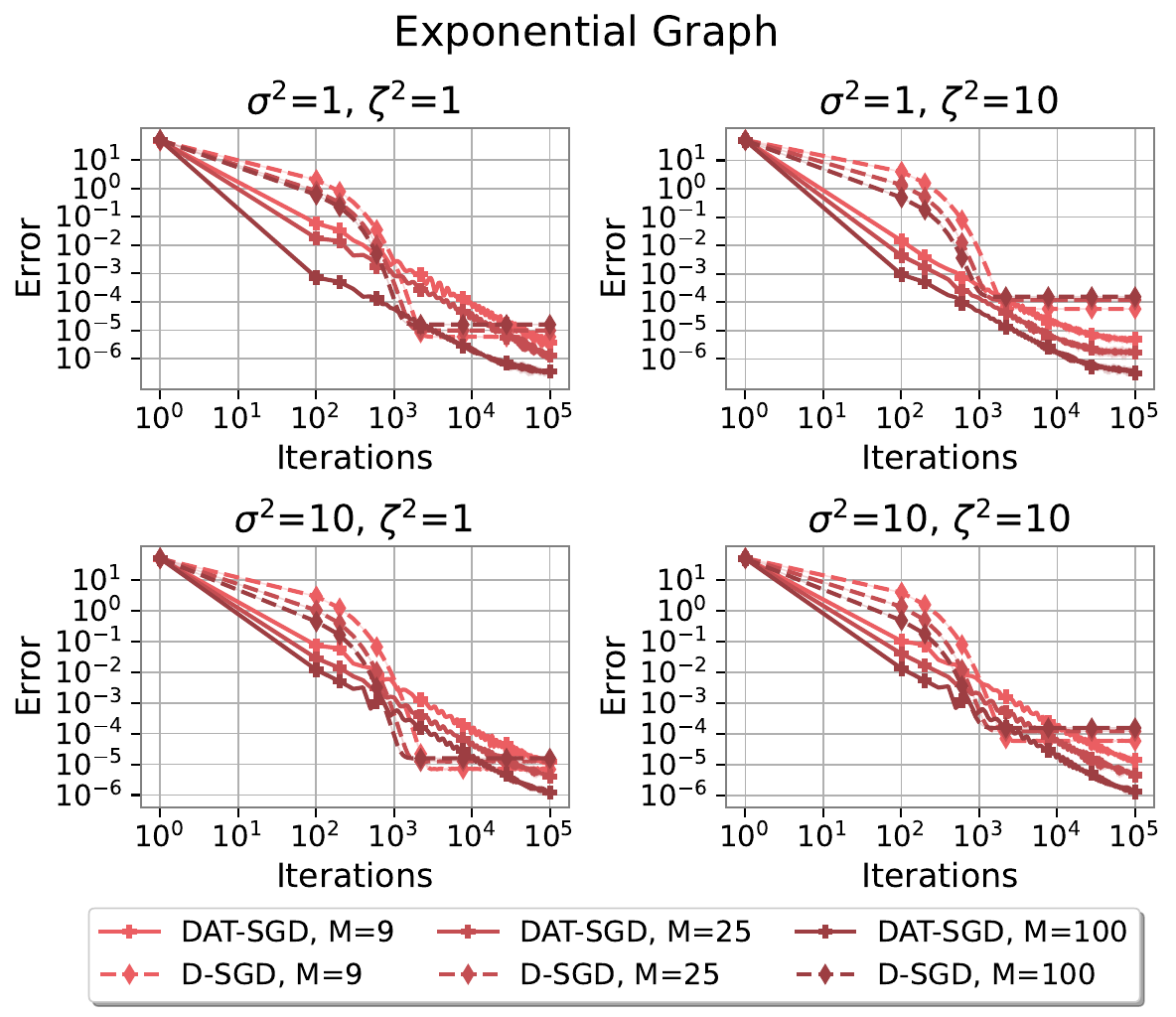}
    \vspace{-0.1in}
    \caption{Error curves for the least squares problem with varying noise, data  heterogeneity, and \# of machines in the \textbf{1-peer exponential graph} topology.} 
    \label{fig:ls_exp}
\end{figure}

\end{document}